\pdfoutput=1
\documentclass{article} 
\usepackage{iclr2027_conference,times}

\usepackage{amsmath,amsfonts,bm}

\def\eqref#1{equation~\ref{#1}}

\def\1{\bm{1}}

\DeclareMathAlphabet{\mathsfit}{\encodingdefault}{\sfdefault}{m}{sl}
\SetMathAlphabet{\mathsfit}{bold}{\encodingdefault}{\sfdefault}{bx}{n}

\usepackage{hyperref}
\usepackage{url}
\usepackage{minitoc}
\usepackage[toc,page,header]{appendix}
\usepackage{booktabs}
\usepackage{wrapfig}
\usepackage{graphicx}
\usepackage{colortbl}
\usepackage{xcolor} 
\usepackage{array} 
\usepackage{xcolor} 
\usepackage{nicematrix}
\usepackage{multirow}
\usepackage{placeins}
\usepackage{tikz}
\usetikzlibrary{arrows.meta, positioning, decorations.pathreplacing}
\usepackage{pgfplots}
\usepackage{pgfplotstable}
\pgfplotsset{compat=1.18}
\definecolor{lapmergeblack}{HTML}{111111}
\definecolor{patchblue}{HTML}{1558A6}
\definecolor{tokenred}{HTML}{C62828}
\definecolor{compilegreen}{HTML}{2E7D32}
\definecolor{densepurple}{HTML}{6A1B9A}
\usepackage{amsthm}
\usepackage{algorithm}
\usepackage{algpseudocode}
\newtheorem{theorem}{Theorem}
\newtheorem{proposition}{Proposition}
\newtheorem{definition}{Definition}
\title{LapMerge: Differentiable Exact-Budget Token Merging in Linear Time}

\author{%
\hspace*{-\tabcolsep}\parbox[t]{\textwidth}{\centering
\bfseries
Magdalena Tr{\fontencoding{T1}\selectfont\k{e}}dowicz$^{1}$\quad
Aleksandra Nowak$^{1}$\quad
Tomasz Steifer$^{2,3}$\\
\L{}ukasz Struski$^{1}$\quad
Szymon P\l{}otka$^{1}$\quad
Jacek Tabor$^{1,3}$\\
\normalfont\footnotesize
$^{1}$Jagiellonian University in Krak\'ow\\
$^{2}$Institute of Fundamental Technological Research, Polish Academy of Sciences\\
$^{3}$Centre for Credible AI, Warsaw University of Technology\\
\texttt{magdalena.tredowicz@doctoral.uj.edu.pl}}\hspace*{-\tabcolsep}
}

\newcommand{\our}{LapMerge}

\iclrfinalcopy
\renewcommand{\headrulewidth}{0pt}
\begin{document}
\maketitle

\doparttoc 
\faketableofcontents 

\begin{figure}[h]
\centering
\vspace{-1.05cm}
\resizebox{0.87\linewidth}{!}{%
\begin{tikzpicture}[
  every node/.style={font=\small},
  tok/.style={draw=black!55, fill=blue!8, minimum width=0.28cm, minimum height=0.28cm, inner sep=0},
  mtok/.style={draw, fill=blue!45, minimum width=0.42cm, minimum height=0.42cm, inner sep=0},
  lbl/.style={font=\scriptsize, align=center},
  arr/.style={-{Stealth[length=1.0mm]}}
]

\begin{scope}[shift={(-0.931127,-1.239805)}, scale=0.918, transform shape]
\node[anchor=south west, inner sep=0] (img) at (0,0) {\includegraphics[width=2.0286cm]{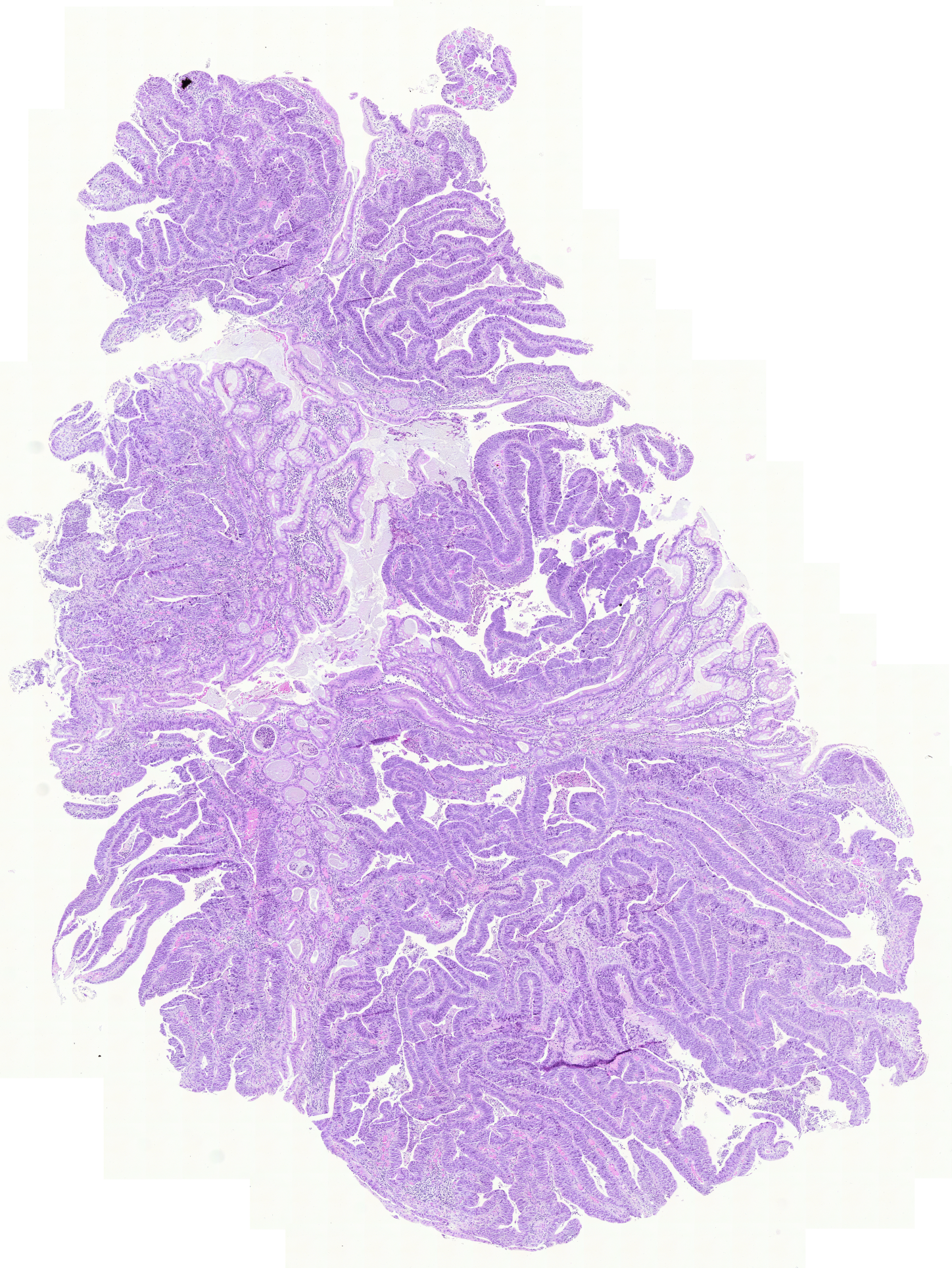}};
\draw[orange!85!black, line width=0.7pt, arr]
  (0.1691,0.1688) -- (0.5071,0.1688) -- (0.5071,0.5065) -- (0.1691,0.5065) -- (0.1691,0.8441) -- (0.1691,1.1817) -- (0.5071,1.1817) -- (0.5071,0.8441) -- (0.8453,0.8441) -- (0.8453,1.1817) -- (1.1834,1.1817) -- (1.1834,0.8441) -- (1.1834,0.5065) -- (0.8453,0.5065) -- (0.8453,0.1688) -- (1.1834,0.1688) -- (1.5214,0.1688) -- (1.8596,0.1688) -- (1.8596,0.5065) -- (1.5214,0.5065) -- (1.5214,0.8441) -- (1.8596,0.8441) -- (1.8596,1.1817) -- (1.5214,1.1817) -- (1.5214,1.5194) -- (1.8596,1.5194) -- (1.8596,1.8570) -- (1.5214,1.8570) -- (1.5214,2.1946) -- (1.8596,2.1946) -- (1.8596,2.5323) -- (1.5214,2.5323) -- (1.1834,2.5323) -- (0.8453,2.5323) -- (0.8453,2.1946) -- (1.1834,2.1946) -- (1.1834,1.8570) -- (1.1834,1.5194) -- (0.8453,1.5194) -- (0.8453,1.8570) -- (0.5071,1.8570) -- (0.5071,1.5194) -- (0.1691,1.5194) -- (0.1691,1.8570) -- (0.1691,2.1946) -- (0.5071,2.1946) -- (0.5071,2.5323) -- (0.1691,2.5323);
\end{scope}

\draw[arr, thick] (1.011127,0) -- (1.411127,0);

\begin{scope}[shift={(1.834607,0)}, rotate=90, scale=0.882, transform shape]
  \foreach \i in {0,...,13} {\node[tok] at (\i*0.26-1.69,0) {};}
\end{scope}

\draw[arr, thick] (2.208087,0) -- (2.608087,0);

\pgfmathdeclarefunction{lapsumV}{1}{%
  \pgfmathparse{%
    0.28*exp(-abs(#1-0*0.26)/0.14)+0.62*exp(-abs(#1-1*0.26)/0.14)+%
    0.34*exp(-abs(#1-2*0.26)/0.14)+0.80*exp(-abs(#1-3*0.26)/0.14)+%
    0.20*exp(-abs(#1-4*0.26)/0.14)+0.24*exp(-abs(#1-5*0.26)/0.14)+%
    0.66*exp(-abs(#1-6*0.26)/0.14)+0.30*exp(-abs(#1-7*0.26)/0.14)+%
    0.74*exp(-abs(#1-8*0.26)/0.14)+0.46*exp(-abs(#1-9*0.26)/0.14)+%
    0.20*exp(-abs(#1-10*0.26)/0.14)+0.70*exp(-abs(#1-11*0.26)/0.14)+%
    0.32*exp(-abs(#1-12*0.26)/0.14)+0.24*exp(-abs(#1-13*0.26)/0.14)%
  }%
}
\begin{scope}[shift={(4.016567,0)}, rotate=90, scale=0.882, transform shape]
  \begin{scope}[shift={(-1.69,0.54)}]
  \foreach \sl/\sr/\gc in {-0.26/0.5937/15,0.5937/1.3358/22,1.3358/2.0210/15,
                           2.0210/2.6933/22,2.6933/3.64/15} {
    \fill[gray!\gc] (\sl,0) -- plot[domain=\sl:\sr, samples=15] (\x,{-0.525*lapsumV(\x)})
      -- (\sr,0) -- cycle;
  }
  \foreach \i/\s in {0/0.28,1/0.62,2/0.34,3/0.80,4/0.20,5/0.24,6/0.66,7/0.30,
                     8/0.74,9/0.46,10/0.20,11/0.70,12/0.32,13/0.24} {
    \draw[orange!40, very thin] plot[domain=-0.38:0.38, samples=15, variable=\t]
      ({\i*0.26+\t},{-0.525*\s*exp(-abs(\t)/0.14)});
    \draw[black!55, semithick] (\i*0.26,0) -- (\i*0.26,-0.525*\s);
    \node[tok] at (\i*0.26,0.35) {};
  }
  \draw[orange!70, thick] plot[domain=-0.26:3.64, samples=86]
    (\x,{-0.525*lapsumV(\x)});
  \foreach \b in {0.5937,1.3358,2.0210,2.6933} {\draw[black, line width=0.5pt] (\b,-0.55) -- (\b,0.5);}
  \node[mtok, minimum width=0.257125cm, minimum height=0.257125cm] (vm0) at (0.13,-1.15) {};
  \node[mtok, minimum width=0.257125cm, minimum height=0.257125cm] (vm1) at (0.91,-1.15) {};
  \node[mtok, minimum width=0.257125cm, minimum height=0.257125cm] (vm2) at (1.69,-1.15) {};
  \node[mtok, minimum width=0.257125cm, minimum height=0.257125cm] (vm3) at (2.47,-1.15) {};
  \node[mtok, minimum width=0.257125cm, minimum height=0.257125cm] (vm4) at (3.25,-1.15) {};
  \draw[arr, gray] (0.1669,{-0.525*lapsumV(0.1669)}) -- (vm0);
  \draw[arr, gray] (0.9647,{-0.525*lapsumV(0.9647)}) -- (vm1);
  \draw[arr, gray] (1.6784,{-0.525*lapsumV(1.6784)}) -- (vm2);
  \draw[arr, gray] (2.3571,{-0.525*lapsumV(2.3571)}) -- (vm3);
  \draw[arr, gray] (3.1666,{-0.525*lapsumV(3.1666)}) -- (vm4);
  \end{scope}
\end{scope}
\node[lbl, align=center] at (4.016567,-2.2) {
    Laplace kernels \quad equal-area segments\\
    soft-merge to $N$
};
\draw[gray!50, thick] (2.7,-1.74) rectangle (4.9,1.71);
\node[font=\scriptsize, inner sep=2pt] at (3.886541,1.98) {\textsc{\our{}}};

\draw[arr, thick] (5.017959,0) -- (5.417959,0);

\begin{scope}[shift={(6.414398,0)}, scale=1.2006225, transform shape]
  \foreach \i in {0,...,4} {\coordinate (vt\i) at (72*\i+90:0.62);}
  \foreach \i in {0,...,4} {
    \foreach \j in {0,...,4} {
      \ifnum\i<\j \draw[gray!65, line width=0.25pt] (vt\i) -- (vt\j); \fi
    }
  }
  \foreach \i in {0,...,4} {\node[draw, fill=blue!45, minimum width=0.17cm, minimum height=0.17cm, inner sep=0] at (vt\i) {};}
\end{scope}

\draw[arr, thick] (7.340837,0) -- (7.740837,0);

\begin{scope}[shift={(7.820837,-1.239805)}, scale=0.918, transform shape]
\node[anchor=south west, inner sep=0] (imgE) at (0,0) {\includegraphics[width=2.0286cm]{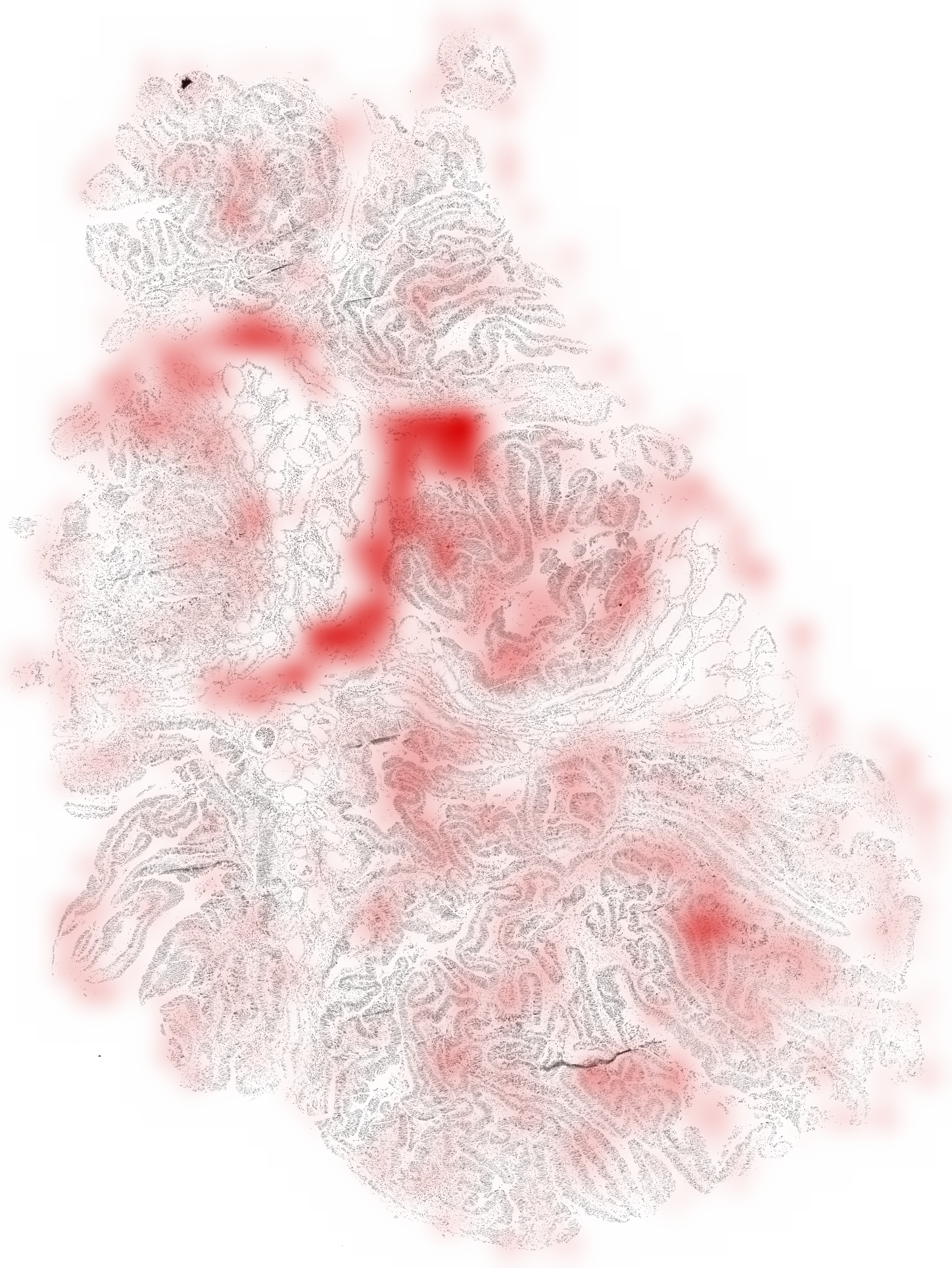}};
\end{scope}

\end{tikzpicture}
}
\vspace{-0.5cm}
\caption{%
  \textbf{Overview of the standard \our{} pipeline.}
Given an ordered sequence of $M$ input tokens, \our{} produces exactly $N$ budget-exact, soft-merged output tokens. A learned scorer assigns each input token a mass, represented as a Laplace kernel. The summed density is scanned cumulatively and partitioned into $N$ equal-mass segments, each of which is collapsed into a single output token. The resulting $N$ tokens are then processed by a Transformer classifier. The rightmost panel shows a real histopathology slide with the learned scorer weights overlaid on the tissue; opacity indicates the importance attributed to each patch. These same scores determine the token merging and can therefore be read as a learned routing-mass map: a direct view of where the model allocates its token budget.
}
\label{fig:pipeline}
\vspace{-0.3cm}
\end{figure}

\begin{abstract}

Modern vision and language models process ever longer token sequences, and reducing them is the main lever on downstream cost. Token compression, however, faces a fundamental trade-off: selecting tokens discards information, while existing merging methods preserve it only through pairwise token-to-token matching or $MN$ token-to-output interactions. We introduce \our{}, a differentiable merging operator that resolves this trade-off for ordered sequences: a learned mass distribution partitions the sequence into exactly $N$ equal-mass segments under a Laplace-kernel density, and prefix/suffix scans aggregate each segment, so both the forward and the backward pass run in $\mathcal{O}(M+N)$ time. This combination -- an exact output budget, end-to-end differentiability, and linear cost -- is precisely what previous reducers trade off against one another. The budget can be changed without changing operator parameters, and the learned mass doubles as a token-level map of where the budget is allocated. Across vision and vision-language tasks, \our{} matches or improves on strong token reducers while being up to two orders of magnitude faster at large sequence lengths; it is particularly effective when token order reflects spatial or temporal locality, and works both as a drop-in reduction module and as a trainable architectural component.

\end{abstract}

\section{Introduction}
\label{sec:intro}

Modern vision, language, and multimodal encoders increasingly operate on long token sequences. High-resolution images yield hundreds of patches, documents can contain thousands of tokens, and videos produce long spatiotemporal traces. While efficient attention implementations reduce the memory-access overhead of Transformer computation~\citep{dao2022flashattention}, they do not change the fundamental dependence of computation on sequence length. With a frozen encoder, the key efficiency problem is to reduce the number of tokens processed while preserving the information the downstream task needs.

A large body of work addresses this problem through token reduction. Existing approaches predominantly follow one of two strategies: pruning or merging. Pruning estimates the importance of individual tokens and retains only a subset, discarding the remainder~\citep{rao2021dynamicvit,liang2022evit,fayyaz2022ats}. This makes pruning computationally attractive: token scores are computed in a single pass, and selecting the top-$N$ tokens requires no interaction between input and output tokens. Its limitation is information loss -- pruning suits settings where the relevant evidence is concentrated in a small number of tokens, but can be brittle when the prediction depends on multiple weak cues distributed across the input.

Merging takes the opposite approach, aggregating tokens into a smaller set of representations rather than discarding them~\citep{ryoo2021tokenlearner,renggli2022patchmerger,bolya2023tome,tran2024pitome}. This can preserve information distributed across many tokens. However, existing learned merging mechanisms generally require computing token-to-token or token-to-output interactions. For example, PatchMerger~\citep{renggli2022patchmerger} and TokenLearner~\citep{ryoo2021tokenlearner} construct learned aggregation weights over the input tokens, while Perceiver~\citep{jaegle2021perceiver}-style architectures use cross-attention between a fixed set of latent queries and the input sequence. Such global aggregation explicitly evaluates $M N$ token-to-output interactions. When the output budget is a fixed fraction of the input, $N=\rho M$, this becomes quadratic in $M$; even when $N$ is fixed, the method still carries an $M N$ interaction cost.

This raises a natural question: can token reduction preserve information from all $M$ inputs while producing exactly $N$ outputs without an explicit $M\times N$ interaction? We answer this by exploiting sequence order. \our{} learns a scalar mass at each position and uses its cumulative distribution to partition the sequence into $N$ contiguous, equal-mass regions, each producing one output token. This structured assignment replaces unrestricted token-to-output interactions with prefix/suffix scans, yielding $\mathcal{O}(M+N)$ cost in both forward and backward passes for given masses and ordered positions; producing the masses with the soft top-$k$ router used in our experiments \citep{struski2025lapsum} adds an $O(M\log M)$ sort, which Fast LapSum~\citep{struski2026fastlapsum} makes expected-linear and pointwise sigmoid or softplus routing avoids entirely. The resulting outputs are convex combinations of their inputs, with a log-space implementation ensuring numerical stability at long sequence lengths. Figure~\ref{fig:pipeline} gives an overview of the method.

The key inductive bias is locality: nearby positions are preferentially aggregated together. This makes \our{} particularly suited to spatially or temporally ordered tokens, while offering less advantage when sequence order is unrelated to semantic structure. In practice, the operator is implemented in native PyTorch, optionally accelerated with CUDA, and scales to sequences of more than $10^7$ tokens. Our main contributions:

\begin{itemize}
    \item We introduce \our{}, a token merging operator that combines three properties prior reducers trade off against one another: an exact output budget of $N$ tokens, end-to-end differentiability with closed-form gradients, and $\mathcal{O}(M+N)$ cost in both passes.
    \item We show that \our{} is versatile: a training-free drop-in that improves a released vision-language pipeline, and a trainable component in from-scratch and frozen-backbone models.
    \item Across vision and vision-language benchmarks, \our{} matches or improves on strong token reducers at equal budgets while scaling beyond $10^7$-token inputs, up to two orders of magnitude faster than matched SDPA poolers.
\end{itemize}

\section{Related Work}
\label{sec:related}
Prior work addresses the cost of long token sequences along three main axes: selecting a subset of tokens to retain, aggregating multiple tokens into fewer representations, or modifying the sequence model to process long inputs more efficiently.

\vspace{-0.3cm}\paragraph{Token pruning.}\hspace{-0.3cm} Token pruning reduces computation by identifying and discarding tokens that are unlikely to contribute to the downstream task. DynamicViT~\citep{rao2021dynamicvit}, EViT~\citep{liang2022evit}, ATS~\citep{fayyaz2022ats}, and DiffRate~\citep{chen2023diffrate} learn token importance and retain a subset of the input. More recent methods consider the frozen-backbone setting, including VisionSelector~\citep{visionselector2025}, AutoSelect~\citep{autoselect2026}, and DiffPrune~\citep{diffprune2026}. These approaches are closely related to ours: they learn a lightweight reduction policy under a fixed token budget, but use it to select tokens rather than aggregate them. Differentiable top-$k$ selection itself has been studied extensively through continuous relaxations and differentiable sorting~\citep{grover2019neuralsort,prillo2020softsort,blondel2020fast,cordonnier2021differentiable}. \our{} instead uses the differentiable budget assignment of LapSum~\citep{struski2025lapsum} to determine how the output budget is distributed across an ordered sequence.

\vspace{-0.3cm}\paragraph{Token merging and pooling.}\hspace{-0.3cm} Token merging provides an alternative to pruning by combining multiple input tokens into a smaller set of representations. ToMe~\citep{bolya2023tome} uses bipartite matching to merge similar tokens, while PiToMe~\citep{tran2024pitome} extends this approach with improved matching. Other approaches perform learned or structured pooling. Token Pooling~\citep{marin2023token} uses clustering, TokenLearner~\citep{ryoo2021tokenlearner} and PatchMerger~\citep{renggli2022patchmerger} learn attention-based pooling, and Perceiver~\citep{jaegle2021perceiver} and Flamingo~\citep{alayrac2022flamingo} use cross-attention from a fixed set of latent representations. Set Transformer~\citep{lee2019set} similarly introduces learned inducing points for scalable set aggregation.

\vspace{-0.3cm}\paragraph{Efficient sequence processing.}\hspace{-0.3cm} A complementary line of work reduces the cost of processing long sequences by modifying the sequence model rather than reducing its input. Sparse and linear attention~\citep{child2019sparse,katharopoulos2020linear,choromanski2021performer} restrict or approximate token interactions, while Longformer~\citep{beltagy2020longformer} introduces sparse attention patterns. Funnel Transformer~\citep{dai2020funnel} progressively reduces sequence length within the model, and state-space models such as S4~\citep{gu2022s4} and Mamba~\citep{gu2023mamba} provide alternative mechanisms for processing long sequences. These approaches are orthogonal to token reduction: \our{} can be applied before the sequence model and combined with different architectures.

Existing work primarily explores either which tokens to discard or how to aggregate tokens. \our{} instead asks how the structure already present in an ordered token stream can be used to make aggregation both efficient and adaptive. This places the method between pruning and merging, preserving distributed evidence through aggregation while exploiting order to avoid a global assignment.

\section{\our{}: Laplace Token Merging}
\label{sec:method}

This section gives the main construction and explains why the Laplace kernel
makes exact-budget merging linear in sequence length. We keep the derivations
brief here; Appendix~\ref{app:algorithm} gives the complete construction,
proofs, scan algorithms, and backward formulas.

Throughout the paper, $\our{}$ denotes the standard equal-mass barrier
operator. Its input positions are assumed to arrive in the order in which
they are to be processed; for an ordinary sequence this is simply the original
token order, and no additional sorting is performed by the operator. The
deployment variant used in POPE -- the same operator driven by an external
scorer, with output positions taken from hard selection -- is described
separately as \emph{score-routed \our{}}.

\paragraph{A natural approach to merging, and why it is not differentiable.}\hspace{-0.3cm}
Suppose a lightweight scoring head has already told us how important every token is. Let $(x_i)_{i=1}^M$, $x_i\in\mathbb{R}^d$, denote the input tokens, $p_1<\cdots<p_M$ their positions in the fixed scan order, and $r_i\ge0$ its importance weight. A natural idea suggests itself: accumulate these weights $r_i$ in scan order, cut the sequence wherever the running sum crosses a fixed increment, and average the tokens inside every one of the resulting batches into a single output token -- the pipeline sketched in Figure~\ref{fig:barriers}, top. Do this so that the running sum crosses exactly $N$ increments and we get exactly $N$ output tokens, and we have almost the object we want -- an adaptive partition that spends more of a fixed output budget on high-scoring regions, computed without ever comparing every input token to every output token (the same cumulative-score sweep already selects tokens for ATS~\citep{fayyaz2022ats}, keeping one nearest token per threshold instead of averaging a segment). As stated, though, it is not easy to use and not differentiable: the cut points are level sets of a step function -- the running sum jumps at every token position $p_i$ and is flat in between, so an infinitesimal change to any one score can move a boundary discontinuously past a neighboring token, and the gradient of everything downstream with respect to the scores is zero almost everywhere and undefined exactly at the jumps. The bigger problem hides one step earlier: we wanted to \emph{learn} the scoring head end to end, and a scoring head sitting behind a dead gradient receives no training signal at all -- no amount of averaging inside the batches fixes that, so the scorer itself would never move.

\begin{wrapfigure}{r}{0.4\linewidth}
\centering
\vspace{-0.5cm}
\resizebox{5.55cm}{!}{%
\begin{tikzpicture}
\pgfmathdeclarefunction{gaussBump}{2}{\pgfmathparse{exp(-0.5*((#1-#2)/0.25)^2)}}
\pgfmathdeclarefunction{sumGauss}{1}{\pgfmathparse{
  0.30*gaussBump(#1,0)+0.35*gaussBump(#1,1)+0.30*gaussBump(#1,2)+
  0.98*gaussBump(#1,3)+0.52*gaussBump(#1,4)+0.55*gaussBump(#1,5)
}}
\node[align=center] at (2.5,1.65) {\small\itshape hard partition};
\draw[black!40] (-0.7,0) -- (5.7,0);
\foreach \pos/\wt/\col in {0/0.30/blue!15,1/0.35/blue!15,2/0.30/blue!15,
                           3/0.98/orange!20,4/0.52/green!25,5/0.55/green!25} {
  \filldraw[fill=\col, draw=black!60] ({\pos-0.16},0) rectangle ({\pos+0.16},{1.3*\wt});
  \draw[black!60, thick] (\pos,0) -- (\pos,{1.3*\wt});
  \filldraw[black] (\pos,0) circle (1.6pt);
}
\draw[black, thin] (2.5,-0.4) -- (2.5,1.4);
\draw[black, thin] (3.5,-0.4) -- (3.5,1.4);
\draw[decorate, decoration={brace, amplitude=4pt, mirror}] (-0.7,-0.45) -- (2.4,-0.45)
  node[midway, below=5pt] {\small $\approx\frac1N$};
\draw[decorate, decoration={brace, amplitude=4pt, mirror}] (2.6,-0.45) -- (3.4,-0.45)
  node[midway, below=5pt] {\small $\approx\frac1N$};
\draw[decorate, decoration={brace, amplitude=4pt, mirror}] (3.6,-0.45) -- (5.7,-0.45)
  node[midway, below=5pt] {\small $\approx\frac1N$};
\begin{scope}[yshift=-3.75cm]
\node[align=center] at (2.5,1.65) {\small\itshape soft (continuous) relaxation};
\fill[blue!15] (-0.7,0) -- plot[domain=-0.7:2.5973, samples=80] (\x,{1.3*sumGauss(\x)}) -- (2.5973,0) -- cycle;
\fill[orange!20] (2.5973,0) -- plot[domain=2.5973:3.7277, samples=40] (\x,{1.3*sumGauss(\x)}) -- (3.7277,0) -- cycle;
\fill[green!25] (3.7277,0) -- plot[domain=3.7277:5.7, samples=80] (\x,{1.3*sumGauss(\x)}) -- (5.7,0) -- cycle;
\draw[black!40] (-0.7,0) -- (5.7,0);
\foreach \pos/\wt in {0/0.30,1/0.35,2/0.30,3/0.98,4/0.52,5/0.55} {
  \draw[black!60, thin] plot[domain=-0.7:5.7, samples=60] (\x,{1.3*\wt*gaussBump(\x,\pos)});
  \draw[black!60, thick] (\pos,0) -- (\pos,{1.3*\wt});
  \filldraw[black] (\pos,0) circle (1.6pt);
}
\draw[thick, orange!80!black] plot[domain=-0.7:5.7, samples=120] (\x,{1.3*sumGauss(\x)});
\draw[black, thin] (2.5973,-0.4) -- (2.5973,1.4);
\draw[black, thin] (3.7277,-0.4) -- (3.7277,1.4);
\draw[decorate, decoration={brace, amplitude=4pt, mirror}] (-0.7,-0.45) -- (2.4973,-0.45)
  node[midway, below=5pt] {\small $=\frac1N$};
\draw[decorate, decoration={brace, amplitude=4pt, mirror}] (2.6973,-0.45) -- (3.6277,-0.45)
  node[midway, below=5pt] {\small $=\frac1N$};
\draw[decorate, decoration={brace, amplitude=4pt, mirror}] (3.8277,-0.45) -- (5.7,-0.45)
  node[midway, below=5pt] {\small $=\frac1N$};
\end{scope}
\end{tikzpicture}
}
\vspace{-0.6cm}
\caption{\small The same six tokens cut into $N=3$ pieces two ways -- by token \emph{count} they are 3-1-2, but by \emph{mass} all three pieces are comparable. \emph{Top}: the hard partition from the running sum -- barriers can only fall between tokens, so each piece's mass is merely \emph{close} to $1/N$. \emph{Bottom}: treating positions as noisy turns each token into a smooth bump; the shaded area under the resulting mixture density (bold) is \emph{exactly} $1/N$ per piece (Proposition~\ref{prop:equal-mass-boundaries}). These intervals partition probability mass, not tokens: one token's bump can span several intervals. The kernel shown is Gaussian, purely for illustration.}
\label{fig:barriers}
\vspace{-0.6cm}
\end{wrapfigure}

\vspace{-0.3cm}\paragraph{Softening the problem.}\hspace{-0.3cm}
One repairs this by replacing every token's point mass with uncertainty about its position: a fixed location becomes a random position with a continuous distribution around it, as in kernel density estimation -- the soft relaxation sketched in Figure~\ref{fig:barriers}, bottom, where each token's mass spreads into a smooth bump and the resulting mixture density is cut into intervals of exactly equal mass, rather than the top panel's discontinuous running-sum cuts. The same hard-to-soft idea can be found e.g. in speech-boundary segmentation~\citep{hwang2024variable} where it is achieved via $\mathcal{O}(MN)$ attention with no well-posedness guarantee. We want a closed-form, provably well-posed version instead, with $\mathcal{O}(M+N)$ time cost. The rest of this section explains how to achieve that.

\vspace{-0.3cm}\paragraph{A formal construction.}\hspace{-0.3cm}
Let $\kappa$ denote the density of this position noise -- a continuous probability density on $\mathbb{R}$ -- with CDF $\sigma$ and scale $\alpha>0$, and attach token $x_i$ to a random position $P_i=p_i+\alpha E_i$ in place of the fixed $p_i$, with $E_i$ drawn i.i.d.\ from the distribution with CDF $\sigma$. A smaller $\alpha$ concentrates mass near each $p_i$, whereas a larger $\alpha$ spreads each token's mass across more intervals. Draw an index $I$ with $\Pr(I=i)=r_i$, where $r_i\ge0$ and $\sum_{i=1}^Mr_i=1$ are the normalized token weights: the resulting random position $P_I$ is then distributed with CDF
\begin{equation}
\label{eq:mix}
\Phi(x)
=
\sum_{i=1}^{M}
r_i\,\sigma\!\left(\frac{x-p_i}{\alpha}\right),
\end{equation}
a smooth, blurred version of the running sum from the earlier paragraphs. We cut $\mathbb{R}$ into $N$ intervals $[b_{\ell-1},b_\ell)$, $\ell=1,\ldots,N$ (with $b_0=-\infty$ and $b_N=+\infty$), each capturing probability exactly $1/N$ under $\Phi$. The interior barriers satisfy
\begin{equation}
\label{eq:bar}
\Phi(b_\ell)=\ell/N,\qquad 1\leq\ell<N.
\end{equation}
The intervals partition the mass of $P_I$, not the set of input tokens; a token may contribute to more than one interval. The barriers exist uniquely and depend differentiably on the routing weights.

\begin{proposition}
\label{prop:equal-mass-boundaries}
If $\kappa$ is continuous and strictly positive, the boundaries
$\Phi(b_\ell)=\ell/N$, $1\leq\ell<N$, exist uniquely and
the map $r\mapsto b_\ell$ is differentiable on the probability simplex.
For any tangent $\dot r$ with $\sum_j\dot r_j=0$, its derivative is
$\dot b_\ell=\sum_j(\partial b_\ell/\partial r_j)\dot r_j$, where one
representative of the partial derivatives is
\begin{equation}
\label{eq:jvp}
\frac{\partial b_\ell}{\partial r_j}
=
-\,\frac{\sigma\big((b_\ell-p_j)/\alpha\big)}{\Phi'(b_\ell)}.
\end{equation}
\end{proposition}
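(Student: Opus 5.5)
The argument rests on showing that $\Phi$ is a strictly increasing, continuously differentiable bijection $\mathbb{R}\to(0,1)$. Existence and uniqueness of the barriers then follow from the intermediate value theorem and strict monotonicity, and differentiability from the implicit function theorem.

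\textbf{Properties of $\Phi$.} Since $\kappa$ is continuous, $\sigma$ is $C^1$ with $\sigma'=\kappa>0$. Each summand of \eqref{eq:mix} is therefore $C^1$ in $x$, with derivative $r_i\kappa((x-p_i)/\alpha)/\alpha\ge 0$. Because $\sum_i r_i=1$, at least one $r_i>0$, so
\[
\Phi'(x)=\frac{1}{\alpha}\sum_{i=1}^M r_i\,\kappa\!\left(\frac{x-p_i}{\alpha}\right)>0 \quad\text{for every } x.
\]
Hence $\Phi$ is strictly increasing. The limits $\sigma(-\infty)=0$ and $\sigma(+\infty)=1$ give $\Phi(-\infty)=0$ and $\Phi(+\infty)=\sum_i r_i=1$. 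For each $\ell/N\in(0,1)$ the intermediate value theorem produces a solution $b_\ell$ of \eqref{eq:bar}, and strict monotonicity makes it unique. The barriers are also automatically ordered, $b_1<\dots<b_{N-1}$.

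\textbf{Differentiability.} I will view $F(b,r)=\sum_j r_j\sigma((b-p_j)/\alpha)-\ell/N$ as a function on an open neighbourhood of the simplex in $\mathbb{R}^M$. Since $F$ is linear in $r$, it is $C^1$ jointly in $(b,r)$. Its partial derivatives are $\partial_b F=\Phi'(b)>0$ and $\partial_{r_j}F=\sigma((b-p_j)/\alpha)$. The implicit function theorem then gives a $C^1$ local solution $b_\ell(r)$ near any point of the simplex. This local solution coincides with the unique root found above, so the map $r\mapsto b_\ell$ is differentiable, with
\[
\frac{\partial b_\ell}{\partial r_j}=-\frac{\partial_{r_j}F}{\partial_b F},
\]
which is exactly \eqref{eq:jvp}. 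Restricting to the simplex means pairing this gradient only with tangents satisfying $\sum_j\dot r_j=0$, which yields $\dot b_\ell=\sum_j(\partial b_\ell/\partial r_j)\dot r_j$.

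\textbf{The subtle point: the word ``representative''.} On the affine simplex the partial derivatives are defined only up to adding a constant $c$ to every component, since $\sum_j c\,\dot r_j=0$ for every admissible tangent. The extension of $F$ off the simplex is one particular choice, and it produces the formula shown. Any other extension differs by such a constant and gives the same $\dot b_\ell$.

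The extension needs a little care for $r$ near the simplex with some $r_j$ slightly negative. There I still need $\Phi'(b_\ell)>0$ to apply the implicit function theorem. This holds by continuity in a small enough neighbourhood, because $\Phi'$ at the base point is strictly positive at every $x$, and $b_\ell$ stays in a compact range near the base point.
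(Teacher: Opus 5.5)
Your proof is correct and follows the paper's argument. Positivity of $\Phi'$ together with the limits $0$ and $1$ gives existence, uniqueness and ordering of the barriers via the intermediate value theorem and strict monotonicity; the implicit function theorem then yields $\partial b_\ell/\partial r_j=-\sigma((b_\ell-p_j)/\alpha)/\Phi'(b_\ell)$, a representative determined only up to a constant added to every entry. Extending $F$ to an open neighbourhood in $\mathbb{R}^M$ is a slightly cleaner way to invoke the implicit function theorem than the paper's implicit differentiation along paths in the simplex, and your final continuity remark is unnecessary because the theorem only requires $\partial_b F\neq 0$ at the base point.
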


\begin{proof}[Proof sketch]
Since
\[
    \Phi'(x)
    = \frac{1}{\alpha}\sum_{i=1}^{M}r_i\,
      \kappa\!\left(\frac{x-p_i}{\alpha}\right)
    > 0,
\]
$\Phi$ is continuously differentiable and strictly increasing,
with limits $0$ and $1$ at $-\infty$ and $+\infty$.
Each level $\ell/N$ therefore determines a unique boundary,
and consecutive boundaries enclose mass $1/N$.
The implicit function theorem gives differentiability of the
boundaries. Differentiating
$\Phi(b_\ell(t);r(t))=\ell/N$ gives the displayed formula.
Appendix~\ref{app:algorithm-barriers} gives the complete proof and explains
the simplex constraint.
\end{proof}

With the barriers in hand, define the merged tokens as the conditional expectation of the randomly drawn token given it lands in each interval:
\begin{definition}[Merged tokens]
\label{def:merge}
$$t_\ell \;:=\; \mathbb{E}\!\left[x_I\mid P_I\in[b_{\ell-1},b_\ell)\right],\qquad \ell=1,\ldots,N.$$
\end{definition}
\vspace{-0.2cm}

We can note that each $t_\ell$ is a convex combination of the input tokens -- never an extrapolation outside their span. To be precise, we can obtain
\begin{equation}
\label{eq:membership}
w_i^\ell
=
N r_i
\left[
\sigma\!\left(\frac{b_\ell-p_i}{\alpha}\right)
-
\sigma\!\left(\frac{b_{\ell-1}-p_i}{\alpha}\right)
\right],
\qquad
t_\ell
=
\sum_{i=1}^{M}w_i^\ell x_i,
\end{equation}
Figure~\ref{fig:merging} illustrates a single interval: token $i$ contributes its kernel's probability mass inside $[b_{\ell-1},b_\ell)$, and may also contribute to other intervals. Summing \eqref{eq:membership} over $\ell$ telescopes the CDF differences to one, giving $N^{-1}\sum_\ell t_\ell=\sum_i r_i x_i$: the operator preserves the weighted first moment of the input. This is a conservation property, not information preservation -- at $N<M$ the aggregation is not invertible, and a token with $r_i=0$ contributes nothing.

\paragraph{Standard versus score-routed \our{}.} The construction above is
the standard \our{} operator with a single kernel width; for simplicity we
present this base version. It also admits separate widths for routing
($\alpha_r$, barriers) and read-out ($\alpha_a$, with per-interval
normalization), evaluated by the same scans in $\mathcal{O}((M+N)d)$;
the general form, its properties and cost are in
Appendix~\ref{app:pope-variant}. The POPE experiment uses this form with an
external scorer: per-image standardized VisionSelector scores enter a
LapSum soft top-$k$ at temperature $\alpha_r$, giving budget-exact masses
that sum to $k$ at the raster indices. The $k$ outputs are injected at the
rotary positions of the $k$ highest-scoring tokens, so positions come from
hard selection while values come from the merge; as both widths go to zero,
the pipeline coincides with the released selection pipeline. We refer to
this deployment as \emph{score-routed \our{}} below.

\begin{wrapfigure}{r}{0.48\linewidth}
\centering
\vspace{-0.3cm}
\begin{tikzpicture}[scale=1.0]
\pgfmathdeclarefunction{densH}{1}{\pgfmathparse{(0.4/0.5)*exp(-abs((#1-0)/0.5))}}
\pgfmathdeclarefunction{densI}{1}{\pgfmathparse{(0.5/0.5)*exp(-abs((#1-1)/0.5))}}
\pgfmathdeclarefunction{densJ}{1}{\pgfmathparse{(0.9/0.5)*exp(-abs((#1-2)/0.5))}}
\pgfmathdeclarefunction{densK}{1}{\pgfmathparse{(0.4/0.5)*exp(-abs((#1-3)/0.5))}}
\pgfmathdeclarefunction{densL}{1}{\pgfmathparse{(0.5/0.5)*exp(-abs((#1-4)/0.5))}}
\fill[orange!8] (0.7,-0.15) rectangle (3.6,2.05);
\draw[dashed, blue!60!black] (0.7,-0.15) -- (0.7,2.05);
\draw[dashed, blue!60!black] (3.6,-0.15) -- (3.6,2.05);
\node[above] at (0.7,2.05) {\tiny $b_{\ell-1}$};
\node[above] at (3.6,2.05) {\tiny $b_\ell$};
\draw[->] (-1,0) -- (5,0) node[right] {\tiny $x$};
\draw[green!45!black, thin, opacity=0.35] plot[domain=-1:5,samples=90] (\x,{densH(\x)});
\draw[orange!80!black, thin, opacity=0.35] plot[domain=-1:5,samples=90] (\x,{densI(\x)});
\draw[red!70!black, thin, opacity=0.35] plot[domain=-1:5,samples=90] (\x,{densJ(\x)});
\draw[orange!60!black, thin, opacity=0.35] plot[domain=-1:5,samples=90] (\x,{densK(\x)});
\draw[cyan!70!black, thin, opacity=0.35] plot[domain=-1:5,samples=90] (\x,{densL(\x)});
\fill[green!45, opacity=0.65] (0.7,0) -- plot[domain=0.7:3.6,samples=60] (\x,{densH(\x)}) -- (3.6,0) -- cycle;
\fill[orange!55, opacity=0.65] (0.7,0) -- plot[domain=0.7:3.6,samples=60] (\x,{densI(\x)}) -- (3.6,0) -- cycle;
\fill[red!60, opacity=0.65] (0.7,0) -- plot[domain=0.7:3.6,samples=60] (\x,{densJ(\x)}) -- (3.6,0) -- cycle;
\fill[orange!35, opacity=0.65] (0.7,0) -- plot[domain=0.7:3.6,samples=60] (\x,{densK(\x)}) -- (3.6,0) -- cycle;
\fill[cyan!35, opacity=0.65] (0.7,0) -- plot[domain=0.7:3.6,samples=60] (\x,{densL(\x)}) -- (3.6,0) -- cycle;
\draw[black!70, thin] (0,0) -- (0,{densH(0)});
\draw[black!70, thin] (1,0) -- (1,{densI(1)});
\draw[black!70, thin] (2,0) -- (2,{densJ(2)});
\draw[black!70, thin] (3,0) -- (3,{densK(3)});
\draw[black!70, thin] (4,0) -- (4,{densL(4)});
\foreach \pos in {0,1,2,3,4} {\filldraw[black!55] (\pos,-0.05) circle(1.1pt);}
\node[below, text=green!45!black] at (0,-0.2) {\tiny $x_h$};
\node[below, text=orange!80!black] at (1,-0.2) {\tiny $x_i$};
\node[below, text=red!70!black] at (2,-0.2) {\tiny $x_j$};
\node[below, text=orange!60!black] at (3,-0.2) {\tiny $x_k$};
\node[below, text=cyan!70!black] at (4,-0.2) {\tiny $x_l$};
\node[draw, fill=blue!45, minimum width=0.35cm, minimum height=0.35cm, inner sep=0] (tl) at (2.15,-1.15) {};
\node[below] at (2.15,-1.4) {\scriptsize $t_\ell$};
\draw[-{Stealth[length=1.1mm]}, gray] (0,-0.15) .. controls (0.5,-0.75) .. (tl);
\draw[-{Stealth[length=1.1mm]}, gray] (1,-0.15) .. controls (1.4,-0.65) .. (tl);
\draw[-{Stealth[length=1.1mm]}, gray] (2,-0.15) -- (tl);
\draw[-{Stealth[length=1.1mm]}, gray] (3,-0.15) .. controls (2.7,-0.65) .. (tl);
\draw[-{Stealth[length=1.1mm]}, gray] (4,-0.15) .. controls (3.4,-0.75) .. (tl);
\end{tikzpicture}
\vspace{-0.25cm}
\caption{\small \emph{Why the Laplace kernel is fast}: the tick at each kernel's peak splits it into a left half and a right half, and Equation~\ref{eq:split} aggregates every left half with one prefix sum and every right half with one suffix sum -- never a per-pair computation. Five tokens shown: three peaks fall inside $[b_{\ell-1},b_\ell)$ ($x_i,x_j,x_k$) and two fall outside ($x_h,x_l$); only the shaded, interval-clipped part of each counts toward $t_\ell$ (Definition~\ref{def:merge}).}
\label{fig:merging}
\vspace{-0.9cm}
\end{wrapfigure}

\vspace{-0.3cm}\paragraph{LapMerge: a fast, exact solution to merging.}\hspace{-0.3cm}
Everything above holds for any strictly positive and continuous $\kappa$, and we have not yet said how the weights $r_i$ themselves are obtained from raw scores. Two problems remain, both computational:
\begin{itemize}
\item Under a generic kernel, no closed form exists for the boundaries $b_\ell$: locating each of the $N-1$ barriers by bisecting the length-$M$ mixture costs $\mathcal{O}(M)$ per barrier, $\mathcal{O}(MN)$ in total -- reintroducing exactly the token-to-output interaction we set out to avoid.
\item Even given the barriers, evaluating the conditional expectation $t_\ell$ (Definition~\ref{def:merge}) generically sums over all $M$ tokens for each of the $N$ outputs -- again $\mathcal{O}(MN)$.
\end{itemize}

Both are solved by the same choice of kernel: the Laplace distribution, $\kappa(z)=\tfrac12e^{-|z|}$, with CDF $\sigma(z)=\tfrac12e^{z}$ for $z\le0$ and $\sigma(z)=1-\tfrac12e^{-z}$ for $z\ge0$, which is strictly positive, as we need. Using and generalizing the prefix/suffix tricks that make LapSum~\citep{struski2025lapsum} fast, both problems drop to $\mathcal{O}(M+N)$. A fused prefix/suffix scan provides further GPU acceleration.

The merging operator itself does not require raw scores or a particular
scorer. It takes as input the ordered positions $p_i$ and non-negative routing
masses $r_i$ satisfying $\sum_i r_i=1$:
\begin{equation}
\label{eq:route}
r_i\ge0,\qquad \sum_{i=1}^M r_i = 1.
\end{equation}
These masses may be supplied by a learned scorer, a differentiable
soft-top-$k$ routing module, or a fixed external policy. Such a routing choice
is optional and upstream of LapMerge; the equations below are valid for any
admissible $r$. Substituting the two branches of the Laplace CDF into the
definition of $\Phi$ separates the tokens according to whether
$p_i\leq x$ or $p_i>x$. Factoring out the terms that depend on
$x$ turns Equation \ref{eq:mix} into
\begin{equation}
\label{eq:split}
\Phi(x)
=
\underbrace{\textstyle\sum_{p_i\le x}r_i}_{\text{prefix}}
\;-\;
\tfrac12e^{-x/\alpha}\underbrace{\textstyle\sum_{p_i\le x}r_i e^{p_i/\alpha}}_{\text{prefix}}
\;+\;
\tfrac12e^{x/\alpha}\underbrace{\textstyle\sum_{p_i> x}r_i e^{-p_i/\alpha}}_{\text{suffix}}.
\end{equation}
Because the tokens are already sorted by $p_i$, all three underbraced sums are cumulative sums of the sequence (or its reverse in the suffix case), precomputed with one pass in each direction. The values $\Phi(p_i)$ are strictly increasing, as are the target levels $\ell/N$; merging these two ordered lists locates all barrier cells in $O(M+N)$ time. Inside a cell the sums are constant, so $\Phi(x)=\ell/N$ reduces to one quadratic equation in $u=e^{x/\alpha}$, solved in closed form. We reuse the same prefix/suffix machinery not to define the scorer, but to invert the mixture induced by the supplied $r_i$ and locate the $b_\ell$ themselves.

\vspace{-0.3cm}\paragraph{Aggregation, and the resulting cost.}\hspace{-0.3cm}
The merged tokens can be computed directly from the cumulative
feature sums
\[
\Phi_x(z)=\sum_{i=1}^{M} r_i x_i
\sigma\!\left(\frac{z-p_i}{\alpha}\right),
\qquad
t_\ell=N\bigl[\Phi_x(b_\ell)-\Phi_x(b_{\ell-1})\bigr].
\]
Applying~\eqref{eq:split} coordinatewise with $r_i x_i$ in place
of $r_i$ gives additional prefix/suffix tables of the same
form. Signed feature coordinates are scanned as separate positive and
negative parts (Appendix~\ref{app:algorithm}). Reusing the previously
identified boundary cells, we compute all merged tokens in
$O((M+N)d)$ time without forming individual membership weights.
The gradients needed for backpropagation have the same complexity,
so linear scaling is preserved during training.

\begin{theorem}
\label{thm:laplace-linear}
For the Laplace kernel and fixed ordered positions, the map
$(r,x)\mapsto(b,t)$ and each of its Jacobian-vector and
vector-Jacobian products can be evaluated in $O((M+N)d)$ work for
$d$-dimensional tokens, or $O(M+N)$ when $d$ is fixed.
\end{theorem}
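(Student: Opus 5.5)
The proof splits into forward pass, JVP, and VJP, and shows each reduces to a constant number of prefix/suffix scans over the sorted positions plus one merge of two sorted lists.

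\textbf{Forward pass.} Precompute the prefix tables
\[
A_k=\sum_{i\le k}r_i,\qquad B_k=\sum_{i\le k}r_ie^{p_i/\alpha},
\]
and the suffix table
\[
C_k=\sum_{i>k}r_ie^{-p_i/\alpha}.
\]
Each costs $O(M)$. Equation~\ref{eq:split} then gives every $\Phi(p_k)$ in $O(1)$. For numerical safety the exponentials are stored relative to a cell anchor, but this does not affect the operation count.

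By Proposition~\ref{prop:equal-mass-boundaries}, $\Phi$ is strictly increasing. So the cell $[p_k,p_{k+1})$ containing $b_\ell$ is found by merging the sorted lists $(\Phi(p_k))_k$ and $(\ell/N)_\ell$ in $O(M+N)$. Inside a cell, $\Phi(x)=\ell/N$ reads $A_k-\tfrac12B_ku^{-1}+\tfrac12C_ku=\ell/N$ with $u=e^{x/\alpha}$. This is a quadratic $C_ku^2+2(A_k-\ell/N)u-B_k=0$. Its coefficients force exactly one positive root, so $b_\ell$ costs $O(1)$.

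For the tokens, repeat the construction with $r_ix_i$ in place of $r_i$, coordinatewise, giving tables $A^x,B^x,C^x$ of size $O(Md)$. Then each $\Phi_x(b_\ell)$ costs $O(d)$, and
\[
t_\ell=N[\Phi_x(b_\ell)-\Phi_x(b_{\ell-1})]
\]
takes $O((M+N)d)$ in total. The endpoint conventions are $\Phi_x(-\infty)=0$ and $\Phi_x(+\infty)=\sum_ir_ix_i$. Splitting signed coordinates into positive and negative parts changes only constants.

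\textbf{JVP.} Differentiating gives
\[
\dot b_\ell=-\frac{\sum_j\dot r_j\,\sigma((b_\ell-p_j)/\alpha)}{\Phi'(b_\ell)},
\]
from equation~\ref{eq:jvp}. The numerator is a Laplace-mixture CDF evaluated at $b_\ell$ with weights $\dot r_j$. It is computed by the same split-table scan as $\Phi$, reusing the cells already found. The denominator $\Phi'(b_\ell)$ is $\tfrac1{2\alpha}(B_ke^{-b_\ell/\alpha}+C_ke^{b_\ell/\alpha})$, which also comes from the existing tables. Similarly,
\[
\dot t_\ell=N\bigl[\dot\Phi_x(b_\ell)+\Phi_x'(b_\ell)\dot b_\ell-(\text{same at }b_{\ell-1})\bigr],
\]
where $\dot\Phi_x$ uses weights $\dot r_ix_i+r_i\dot x_i$, and $\Phi_x'(b)$ is again an exponential-weighted prefix/suffix sum. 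Every term is a Laplace mixture (CDF or density) evaluated at sorted query points, so the whole JVP costs $O((M+N)d)$.

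\textbf{VJP.} I expect this to be the main obstacle, because it requires the transpose of ``evaluate a Laplace mixture at $N$ sorted points''. Given cotangents $\bar t_\ell$ and $\bar b_\ell$, first fold $\bar t$ into effective point cotangents $\beta_\ell\in\mathbb{R}^d$ and $\gamma_\ell\in\mathbb{R}$ attached to each $b_\ell$, with $\gamma_\ell$ absorbing the $\Phi_x'$ and $\Phi'$ factors in $O(Nd)$. The gradients then take the form
\[
\bar r_j=\sum_\ell\gamma_\ell\,\sigma((b_\ell-p_j)/\alpha)+\cdots,\qquad \bar x_j=r_j\sum_\ell\beta_\ell\,\sigma((b_\ell-p_j)/\alpha).
\]
These are the same kernel sums with the roles of sources and queries swapped: sources are now the $b_\ell$, queries are the $p_j$.

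Since the $b_\ell$ are sorted and their interleaving with the $p_j$ is already known from the forward merge, I would apply the identical split. For $b_\ell\ge p_j$ the contribution is $\sum\gamma_\ell-\tfrac12e^{p_j/\alpha}\sum\gamma_\ell e^{-b_\ell/\alpha}$. For $b_\ell<p_j$ it is $\tfrac12e^{-p_j/\alpha}\sum\gamma_\ell e^{b_\ell/\alpha}$. Each is a prefix or suffix over $\ell$, queried at the $p_j$ via the merged order. This costs $O((M+N)d)$, and combined with the forward cost proves the theorem.
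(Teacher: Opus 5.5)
Your proposal is correct and follows essentially the same route as the paper. The forward pass uses prefix/suffix Laplace tables, a merge of the values $\Phi(p_k)$ against the targets $\ell/N$, and the in-cell quadratic. The JVP reuses the same scans with signed weights $\dot r_i$ and $\dot r_i x_i + r_i\dot x_i$. The VJP is a role-swapped scan over the sorted barriers, with the readout cotangents folded in through the density term $\Phi_x'(b_\ell)$. The only cosmetic difference is that the paper reads the reverse sum as the complement $C-\Psi_c(p_j;k_j)$ of a CDF-oriented barrier scan, whereas you split $\sigma((b_\ell-p_j)/\alpha)$ directly into its two branches; the two are equivalent.
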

\vspace{-0.2cm}
\begin{proof}[Proof sketch:]Building the scalar prefix/suffix tables
and locating all boundary cells costs $O(M+N)$. Evaluating
the barriers costs $O(N)$; the feature-weighted scans and readout cost
$O((M+N)d)$.
By Equation~\eqref{eq:jvp}, the boundary Jacobian-vector product is a
Laplace-CDF sum weighted by $\dot r_i$. Its adjoint uses
prefix/suffix scans over the ordered boundaries. Both
products therefore have linear cost.
For aggregation, differentiating $r_i x_i$ and the evaluations
$\Phi_x(b_\ell)$ adds only further scans and pointwise
operations. Their adjoints have the same linear cost;
see Appendix~\ref{app:algorithm-sensitivity}--\ref{app:algorithm-aggregate}
for the full proof and the orientation of the reverse CDF scan. Thus all claimed computations take
$O((M+N)d)$ time without forming dense membership matrices
or Jacobians.
\end{proof}

The reverse barrier scan uses the CDF orientation in~\eqref{eq:jvp},
and the aggregation readout is differenced only once. Appendix~\ref{app:algorithm-sensitivity}--\ref{app:algorithm-aggregate}
gives the exact formulas. The formal statement covers the $(r,x)$ map at
fixed ordered positions; the implementations additionally expose derivatives
with respect to the positions $p$ through the same scan structure (the
runtime benchmark's backward pass includes them, so the reported timings
already reflect this cost). The kernel width $\alpha$ is fixed in the LRA
and POPE experiments and learned in histopathology, Cars, and arXiv
(Table~\ref{tab:recipe}), where its gradient flows through the same native
implementation; the formal statement above covers the $(r,x)$ map at fixed
positions and width.

At the lengths we operate at, the exponentials in Equation \ref{eq:split} must be accumulated in log-scale to avoid overflow; Appendix~\ref{app:algorithm} provides the detailed pseudocodes and numerical implementation
discussion. A fused CUDA kernel or, more simply, native PyTorch via \texttt{logcumsumexp} both implement this scan, at a modest constant-factor tradeoff; Appendix~\ref{app:runtime-implementations} compares them.

\section{Experiments and Results}
\label{sec:experiments}

We evaluate \our{} in four settings, chosen to test the operator both where its ordering bias should help and where it should not. \emph{(i) Computational scaling}: a pure runtime study on sequences of up to $2^{24}$ ($>10^7$) tokens, against learned poolers under identical timing conditions. \emph{(ii) Gigapixel histopathology}: predicting microsatellite instability (MSI-H vs.\ MSS), a clinically actionable biomarker, on two TCGA whole-slide cohorts with a frozen pathology encoder, against specialized MIL baselines. \emph{(iii) A frozen vision--language model}: we replace only the token-selection step of the released Qwen2.5-VL-7B VisionSelector pipeline, without any retraining, and measure object hallucination on POPE. \emph{(iv) End-to-end training from scratch} on the five Long Range Arena tasks, against budget-matched reduction baselines. All methods are compared at matched output budgets; per-experiment configurations, training protocols, and dataset details are collected in Table~\ref{tab:recipe} and Appendix~\ref{app:datasets}.

\subsection{Computational Scaling}
\label{sec:exp-scaling}

\begin{wrapfigure}[23]{r}{0.56\linewidth}
\vspace{-1.5cm}
\centering
\begin{tikzpicture}

\begin{axis}[
  width=0.98\linewidth, height=0.76\linewidth,
  xmode=log, log basis x=2,
  ymode=log, log basis y=2,
  xlabel={\small input tokens $M$}, ylabel={\small forward + backward [ms]},
  xmin=2048, xmax=16777216,
  ymin=1, ymax=32768,
  xtick={2048,16384,131072,1048576,8388608,16777216},
  xticklabels={$2^{11}$,$2^{14}$,$2^{17}$,$2^{20}$,$2^{23}$,$2^{24}$},
  ytick={1,2,4,8,16,32,64,128,256,512,1024,2048,4096,8192,16384,32768,65536,131072,262144,524288,1048576,2097152,4194304,8388608,16777216},
  yticklabels={$2^{0}$,$2^{1}$,$2^{2}$,$2^{3}$,$2^{4}$,$2^{5}$,$2^{6}$,$2^{7}$,$2^{8}$,$2^{9}$,$2^{10}$,$2^{11}$,$2^{12}$,$2^{13}$,$2^{14}$,$2^{15}$,$2^{16}$,$2^{17}$,$2^{18}$,$2^{19}$,$2^{20}$,$2^{21}$,$2^{22}$,$2^{23}$,$2^{24}$},
  grid=both, grid style={gray!25},
  tick label style={font=\tiny},
  legend columns=3,
  legend style={
    at={(0.5,1.02)}, anchor=south,
    draw=none, fill=none, font=\tiny,
    column sep=0.6em,
    /tikz/every even column/.append style={column sep=0.6em},
  },
  legend cell align=left,
  legend image post style={line width=1.4pt},
  legend image code/.code={%
    \draw[mark repeat=2,mark phase=2,#1]
      plot coordinates {(0cm,0cm) (0.22cm,0cm)};%
  },
]
\addplot[color=lapmergeblack, dashed, thick, mark=none, forget plot] table {time_comparison/results/pgfplots_data/fixedn_cuda-fused.dat};
\addplot[color=lapmergeblack, solid, thick, mark=none] table {time_comparison/results/pgfplots_data/ratio_cuda-fused.dat};
\addlegendentry{LapMerge}
\addplot[color=tokenred, dashed, thick, mark=none, forget plot] table {time_comparison/results/pgfplots_data/fixedn_token-sdpa.dat};
\addplot[color=tokenred, solid, thick, mark=none] table {time_comparison/results/pgfplots_data/ratio_token-sdpa.dat};
\addlegendentry{TokenLearner (SDPA)}
\addplot[color=patchblue, dashed, thick, mark=none, forget plot] table {time_comparison/results/pgfplots_data/fixedn_patch-sdpa.dat};
\addplot[color=patchblue, solid, thick, mark=none] table {time_comparison/results/pgfplots_data/ratio_patch-sdpa.dat};
\addlegendentry{PatchMerger (SDPA)}
\addplot[only marks, mark=*, mark size=2.2pt, color=lapmergeblack, forget plot] table {time_comparison/results/pgfplots_data/marker_ratio_cuda-fused.dat};
\addplot[only marks, mark=square*, mark size=2.0pt, color=tokenred, forget plot] table {time_comparison/results/pgfplots_data/marker_fixedn_token-sdpa.dat};
\addplot[only marks, mark=o, mark size=2.2pt, color=tokenred, forget plot] table {time_comparison/results/pgfplots_data/marker_ratio_token-sdpa.dat};
\addplot[only marks, mark=o, mark size=2.2pt, color=patchblue, forget plot] table {time_comparison/results/pgfplots_data/marker_fixedn_patch-sdpa.dat};
\addplot[only marks, mark=o, mark size=2.2pt, color=patchblue, forget plot] table {time_comparison/results/pgfplots_data/marker_ratio_patch-sdpa.dat};
\end{axis}
\end{tikzpicture}
\vspace{
-0.3cm
}

\caption{Reducer forward--backward latency on GH200 ($B=1$, $d=512$, FP32; five warm-ups, 30 repeats). Dashed: $N=1024$; solid: $N=M/16$. LapMerge scales similarly in both regimes; the SDPA poolers retain $MN$ interactions. Open circles precede a 600-s timeout, the filled OOM, the filled square an execution failure. Timing covers the reducer core (LapMerge gets precomputed masses; the poolers project inside the timed region); scorer/routing overheads: Appendix~\ref{app:runtime-competitors}.
}
\label{fig:runtime-main-scaling}
\vspace{-0.7cm}
\end{wrapfigure}

Figure~\ref{fig:runtime-main-scaling} isolates the reducers, excluding upstream networks. We vary $M$ from $2^{11}$ to $2^{24}$ at both a fixed output budget and a fixed 16-fold compression, with identical timing settings for all methods. At $M=2^{20}$ and $N=M/16$, LapMerge takes $0.136$ s versus $7.16$ s for TokenLearner and $14.75$ s for PatchMerger; the pure-PyTorch native path, which is the numerically faithful implementation used in all accuracy experiments, takes $0.290$ s in the same setting -- still $25$--$51\times$ faster than the poolers, so the advantage does not rest on the fused backend. The ordered-coordinate LapMerge run reaches $M=2^{24}$ at $N=1024$ (1.88 s) and $M=2^{23}$ at $N=M/16$ (1.07 s); the next ratio point is out of memory. Similar LapMerge slopes as $N$ grows are consistent with scan-based $O((M+N)d)$ work, without an $M\times N$ assignment. Appendix~\ref{app:runtime-backend-ablation} separates dense, native, compiled, and fused implementations; Appendix~\ref{app:runtime-competitors} gives the matched competitor measurements and failure diagnostics.

\subsection{\our{} on Gigapixel Histopathology}
\label{sec:histopathology}
We first evaluate \our{} on whole-slide histopathology, a natural test bed for long-sequence modeling. A single whole-slide image (WSI) can contain thousands to tens of thousands of tissue patches, while only an unknown subset may be relevant to the diagnostic task. This creates a setting where reducing the sequence before applying expensive sequence modeling is particularly valuable.
Figure~\ref{fig:pipeline} illustrates the complete pipeline on this task. We encode tissue patches with a frozen UNI2-h encoder \citep{chen2024uni,uni2h2025}, order them along a generalized Hilbert curve to preserve spatial locality, and apply \our{} to reduce the resulting sequence from $M$ patches to a fixed budget of $N=1024$ tokens. The compressed sequence is then processed by a standard three-layer Transformer classifier. \our{} is used without task-specific modification or budget-specific tuning. Because the same scores determine the mass distribution used for merging, the reduction mechanism simultaneously provides a patch-level routing-mass map, which passes a deletion-style attribution test (Appendix~\ref{app:data-histo}).

We compare \our{} with established and recent multiple-instance learning (MIL) methods, including ABMIL \citep{ilse2018abmil}, CLAM-SB/MB \citep{lu2021clam}, DS-MIL \citep{li2021dsmil}, DTFD-MIL \citep{zhang2022dtfdmil}, TransMIL \citep{shao2021transmil}, MambaMIL \citep{yang2024mambamil}, and RRT-MIL \citep{tang2024rrtmil}. All methods use the same frozen UNI2-h features and patient-level splits. Results are reported as mean $\pm$ std AUROC over six random seeds and are summarized in Table~\ref{tab:msih-main}.
\begin{wraptable}{r}{0.46\linewidth}
\vspace{-0.6\baselineskip}
\begin{minipage}{\linewidth}
\centering
\caption{Test AUROC (mean $\pm$ std, 6 seeds). External anchors are
literature numbers, not reproduced here: Kather et al.\ 2019 is a CNN
tile-classifier, not MIL; Wagner et al.\ 2023 is AttentionMIL over
CTransPath features.}
\label{tab:msih-main}
\smallskip
\small
\setlength{\tabcolsep}{3pt}%
\begin{tabular}{lcc}
\toprule
method & CRC & STAD \\
\midrule
ABMIL & $88.97{\pm}2.91$ & $92.68{\pm}3.24$ \\
CLAM-SB & $89.92{\pm}4.40$ & $93.75{\pm}2.90$ \\
CLAM-MB & $90.66{\pm}1.43$ & $91.62{\pm}2.74$ \\
DS-MIL & $90.33{\pm}7.60$ & $87.11{\pm}3.35$ \\
DTFD-MIL & $90.86{\pm}1.58$ & $93.22{\pm}4.46$ \\
TransMIL & $81.07{\pm}3.90$ & $89.24{\pm}4.52$ \\
MambaMIL & $87.98{\pm}5.62$ & $93.02{\pm}5.63$ \\
RRT-MIL & $\mathbf{93.54{\pm}3.03}$ & $\mathbf{97.19{\pm}1.61}$ \\
\rowcolor{orange!10} \our{} (ours) & $91.77{\pm}3.14$ & $96.03{\pm}1.99$ \\
\midrule
\emph{Kather '19} & \emph{77.00} & \emph{81.00} \\
\emph{Wagner '23} & \emph{78.00} & --- \\
\bottomrule
\end{tabular}
\end{minipage}
\vspace{-0.9cm}
\end{wraptable}

\our{} achieves $91.77\pm3.14$ AUROC on CRC and $96.03\pm1.99$ on STAD, second-best on both cohorts. Only RRT-MIL is stronger ($93.54\pm3.03$ and $97.19\pm1.61$), while every remaining baseline achieves lower mean AUROC on at least one of the two cohorts. Notably, \our{} is the only method whose sequence model operates on at most $1{,}024$ tokens rather than the full bag (median $5{,}922$/$7{,}329$ patches on CRC/STAD; only $3.6\%$/$8.3\%$ of slides have $M\le1024$, where $N{=}1024$ is not a compression): it stays competitive at a fraction of the sequence length.

The computational advantage is complementary to this predictive performance. \our{} reduces $M$ input tokens to $N$ output tokens in $\mathcal{O}(M+N)$ time, allowing the expensive sequence model to operate only on the compressed representation. In contrast to methods that retain or repeatedly process the full patch sequence, our reduction cost grows linearly with the number of input patches. Thus, this experiment serves as a proof of concept that \our{} can act as a general-purpose front-end for long-sequence modeling.

\subsection{Drop-in Replacement in a Frozen Vision-Language Model}
\label{sec:vlm-pope}

\begin{wraptable}{r}{0.5\linewidth}
\vspace{-0.8cm}
\centering\small
\caption{\textbf{Drop-in token reduction on POPE} (F1, frozen
Qwen2.5-VL-7B, no retraining) at matched $10\%$ and $30\%$ visual-token
budgets. Baseline rows quote the published evaluation of
\citet{visionselector2025} (their Table~1); our rerun of the released
selection reproduces its F1 to $0.06$ (Appendix~\ref{app:pope-sweep}).
\our{} reuses the VisionSelector scorer at default widths
$(\alpha_r,\alpha_a){=}(1,1)$; the POPE-tuned variant is a reference
ceiling (see \emph{Ablation}).}
\label{tab:vlm-field}
\setlength{\tabcolsep}{3.5pt}%
\begin{tabular}{lcc}
\toprule
Method & $10\%$ & $30\%$ \\
\midrule
FastV \citep{chen2024fastv}              & $65.99$ & $80.47$ \\
DART \citep{wen2025stop}                 & $71.91$ & $83.43$ \\
HoloV \citep{zou2026don}                 & $75.84$ & $83.27$ \\
PruMerge+ \citep{shang2025llava}         & $76.52$ & $83.69$ \\
VisionZip \citep{yang2025visionzip}      & $78.46$ & $84.73$ \\
DivPrune \citep{alvar2025divprune}       & $79.27$ & $84.68$ \\
VisionSelector \citep{visionselector2025}& $79.73$ & $85.05$ \\
\midrule
\rowcolor{orange!10}
\textbf{score-routed \our{} (ours)}  & $\mathbf{81.31}$ & $\mathbf{85.50}$ \\
\rowcolor{orange!10}
\emph{score-routed \our{} (best $\alpha_a$)} & \emph{82.01} & \emph{85.87} \\
\bottomrule
\end{tabular}
\vspace{-1.2\baselineskip}
\end{wraptable}

We next evaluate whether score-routed \our{} can improve visual token reduction in a frozen vision-language model (VLM). We take the publicly released VisionSelector pipeline built on Qwen2.5-VL-7B \citep{qwen25vl} as-is, including its vision encoder, its learned scorer, and the language model, and replace only the final selection step. VisionSelector keeps the $k$ highest-scoring tokens and drops the remainder. Score-routed \our{} keeps the same $k$ positions but fills them with merged values: the standardized scores enter a soft top-$k$ at temperature $\alpha_r$, the resulting budget-exact masses define $k$ equal-mass barrier intervals along the raster order, and each output is the mass-weighted average of all tokens under its interval's Laplace window of read-out width $\alpha_a$ (\S\ref{sec:method}). Every output token stays at the position of the corresponding kept token, so the rotary position encoding of the language model is unchanged and no other part of the pipeline has to be adapted.

We evaluate on the full POPE benchmark \citep{li2023pope}, which probes
object hallucination through binary existence questions, at $10\%$ and
$30\%$ visual-token budgets. For score-routed \our{} we use the default widths $(\alpha_r,\alpha_a)=(1,1)$, fixed without reference to POPE. As shown in Table~\ref{tab:vlm-field}, score-routed \our{} reaches $81.31$ and $85.50$
F1, improving on VisionSelector's published F1 by $+1.58$ and $+0.45$ and
outperforming all pruning- and merging-based baselines at both budgets.
The gain is largest at the tighter budget, where hard selection discards
$90\%$ of the visual evidence: a paired bootstrap over the $9{,}000$
questions gives $95\%$ CIs of $[+0.9,+2.4]$ at $10\%$ and $[-0.1,+0.9]$
at $30\%$ (Appendix~\ref{app:pope-sweep}). Since the scorer, the backbone and the positional scheme
are all inherited unchanged, the improvement can be attributed entirely
to replacing hard selection with soft merging -- a strict, training-free
upgrade of an already released pipeline, obtained purely at inference time.

\medskip\noindent\textbf{Ablation.}
The free parameters score-routed \our{} introduces into this pipeline are the
two widths $(\alpha_r,\alpha_a)$. We sweep the read-out width $\alpha_a$ at
fixed $\alpha_r{=}1$, which interpolates between a nearly hard read-out and progressively broader merging. To measure how much the default leaves on the table, we grid-search $\alpha_a$ per budget directly on POPE (no gradient-based tuning or parameter updates). The best widths gain $+2.28$ and $+0.82$. Because these widths are chosen on the evaluation set itself, we report them only as an ablation study and not as a competing result. The benefit of merging over selection is not an artifact of a single lucky width: it persists across the moderate-width range of the sweep in Appendix~\ref{app:pope-sweep}, although an over-wide kernel eventually falls below hard selection. Moreover, a single dataset-agnostic default already captures the bulk of it, so \our{} can be deployed in a frozen VLM without per-task calibration.

\subsection{End-to-End Learning: Long Range Arena}
\label{sec:exp-lra}

\begin{table}[!ht]
\centering\small
\caption{%
  \textbf{Long Range Arena, end-to-end training, budget 10\%.}
  Test accuracy (\%), mean $\pm$ std over 3 seeds.
  \emph{Unreduced} and \emph{Performer} are full-sequence references.
  Best result per task in \textbf{bold}; Pathfinder is at chance level
  for all methods, so no entry is bolded there.
}
\label{tab:lra_10pct}
\setlength{\tabcolsep}{5pt}
\begin{tabular}{lccccc}
\toprule
Method
  & ListOps & Text & Retrieval & Image & Pathfinder\\
\midrule
Unreduced
  & $38.10 \pm 1.21$ & $65.03 \pm 0.66$ & 83.02$\pm$0.20 & $35.94 \pm 0.58$ & $49.9 \pm 0.1$  \\
Performer
  & $38.57\pm 0.37$& $65.93 \pm 0.29$ & $82.2\pm0.15$ & $43.85 \pm 0.13$ & $50.21 \pm 0.13$  \\
\midrule
Uniform pooling
  & $29.65 \pm 0.25$ & $75.05 \pm 0.35$ & $88.12\pm0.33$ & $42.29 \pm 0.65$ & $49.67\pm0.21$  \\
ToMe
  & $35.47\pm2.36$ & $60.97 \pm 0.15$ & $78.85\pm0.34$ & $36.16 \pm 0.19$ & $49.84\pm0.06$\\
Top-$N$ selection
  & $24.92\pm9.24$ & $55.68 \pm 2.55$ & $74.93\pm9.13$ & $36.89 \pm 0.18$ & $50.78\pm1.42$  \\
TokenLearner
  & $38.78\pm0.71$ & $65.47 \pm 0.26$ & $83.51\pm0.34$ & $37.34 \pm 0.28$ &$49.98\pm0.19$  \\
PatchMerger
  & $38.67\pm0.81 $ & $66.26 \pm 0.17$ & $82.22\pm0.2$ & $25.52 \pm 0.18$ & $49.79\pm0.03$   \\
\rowcolor{orange!10}
\our{} (ours)
  &$\mathbf{41.42\pm0.47}$ & $\mathbf{77.05 \pm 0.47}$ & $\mathbf{88.88\pm0.05}$ & $\mathbf{47.03 \pm 0.39}$ & $49.83\pm0.14$ \\
\bottomrule
\end{tabular}
\end{table}

We next evaluate \our{} when the reducer and backbone are trained jointly from scratch, rather than inserted into a pretrained encoder, on the Long Range Arena (LRA) benchmark. Results are presented in Table~\ref{tab:lra_10pct}. \our{} achieves the best accuracy on four of the five tasks -- ListOps, Text, Retrieval and Image -- outperforming the strongest budget-matched baseline by $+2.6$, $+2.0$, $+0.8$ and $+4.7$ points, respectively. Notably, it also surpasses both full-sequence models on all four (e.g., $+12.0$ on Text, $+5.9$ on Retrieval and $+11.1$ on Image over the unreduced Transformer), suggesting that structure-aware merging provides a useful inductive bias rather than acting merely as compression. Learned-selection baselines are considerably less stable: Top-$N$ shows large variance across seeds on ListOps and Retrieval, and PatchMerger collapses on Image. On Pathfinder, all models, including the unreduced ones, remain at chance level ($\approx 50\%$), so differences on this task are not meaningful. A direct order intervention (permuting only the scan coordinate, everything else fixed) confirms that order matters for methods with a local, interval-based read-out (Appendix~\ref{sub:order-intervention}).

\section{Conclusion}
\label{sec:conclusion}

We presented \our{}, a differentiable operator that merges an ordered sequence of $M$ tokens into exactly $N$ outputs in $\mathcal{O}(M+N)$ time. The core idea -- replacing unrestricted token-to-output assignment with a mass-based partition of the input order -- delivers what previous reducers trade off against one another: an exact budget, end-to-end differentiability, and linear cost. The budget can be changed without changing operator parameters, and the learned mass doubles as an interpretable routing map. Empirically, \our{} compresses frozen encoders at almost no accuracy cost, upgrades a released VLM pipeline without retraining -- beating every POPE baseline, decisively at the $10\%$ budget -- achieves the best budget-matched accuracy on four of five LRA tasks, and runs at $10^7$-token scale where existing reducers fail. We believe ordered, mass-based aggregation can become a standard primitive for long-sequence models, offering linear complexity while retaining information from the entire input. However, \our{} is not a universal reducer: it prefers local aggregation, with contributions decaying with distance along the given ordering -- ideal when neighboring tokens are spatially or temporally related, but limiting when related information sits at distant positions, as in some text tasks, where order-agnostic reduction may be preferable.

\newpage
\subsection*{AI use statement}

We used AI assistants for editing and polishing the text, for implementing
and debugging code, and as an aid in designing and analyzing experiments.
The mathematical construction and its proofs were developed by the authors.
All experimental results and final wording were verified by the authors,
who take full responsibility for the content and correctness of the paper.

\subsection*{Reproducibility statement}

Appendix~\ref{app:algorithm} contains the complete construction of the
operator, full proofs of the propositions and the main theorem, the scan
algorithms, and the explicit backward formulas, so the method can be
reimplemented from the paper alone. Appendix~\ref{app:runtime-comparison}
documents the runtime benchmark protocol (hardware, shapes, warm-up and
repeat counts, timeout and failure handling) and the backend ablation.
The experimental appendices describe the frozen-encoder, VLM, and
end-to-end setups, and the datasets appendix lists all data sources; the
histopathology cohorts use published patient-level splits with labels from
the cited sources. The supplementary material includes the code for the
histopathology experiments, with a synthetic-data demo, environment notes,
and instructions for obtaining the gated real data, as well as the scripts
and raw measurements behind the runtime benchmarks. Each reported table
maps to the runner scripts and configurations in the corresponding
supplementary package (histopathology; LRA; POPE/Cars/arXiv; runtime), which
record the exact command, seeds, and hyperparameters per run; the runtime
package additionally contains the raw CSVs from which every plot is
generated, and the POPE evaluation is greedy and therefore deterministic
given the pinned VisionSelector commit it states. All accuracy experiments
use the native PyTorch implementation of the operator; the CUDA-fused
backend appears only in the runtime study.

\bibliography{iclr2027_conference}
\bibliographystyle{iclr2027_conference}

\newpage

\appendix

\renewcommand \thepart{}
\renewcommand \partname{}

\part{Appendix} 
\setcounter{secnumdepth}{2}
\setcounter{tocdepth}{4}
\parttoc 

\clearpage
\section{The \our{} Algorithm}
\label{app:algorithm}

This appendix gives the complete mathematical and scan-level construction of
\our{}. The input is an ordered sequence of $M$ tokens, non-negative routing
weights $r_i$ summing to one, an output budget $N$, and a Laplace scale
$\alpha>0$. The output consists of exactly $N$ convex combinations of the
input tokens. We prove existence and differentiability of the equal-mass
barriers, derive their closed-form Laplace solve and matrix-free derivatives,
and then prove the corresponding aggregation and backward formulas.
Implementation and measurement variants are compared in
Appendix~\ref{app:runtime-implementations}. Throughout, $\Phi$ and $b_\ell$
have the meanings in \eqref{eq:mix}--\eqref{eq:bar}. A reader who mainly
wants to implement the operator can go directly to
Algorithms~\ref{alg:barriers}--\ref{alg:aggregate}, the implementation
checklist closing this section, and the configuration recipe in
Table~\ref{tab:recipe}; the text in between derives and proves each step.

\paragraph{Setup and notation.} Tokens $x_1,\dots,x_M\in\mathbb{R}^d$ arrive sorted by position $p_1<\dots<p_M$. The operator receives non-negative routing masses $r_i$ satisfying $\sum_i r_i=1$; an optional scorer may produce these masses upstream, but is not part of the operator defined below. As discussed in Section~\ref{sec:method}, this scorer can use a sigmoid, softplus, or differentiable soft-top-$k$ normalization, depending on the model variant. Here we only require its final admissible masses $r_i$. We fix the budget $N$ and the Laplace scale $\alpha$. In practice, $\alpha\propto\operatorname{spread}(p)/M$, ensuring the kernel remains local as $M$ grows. If position ties occur, they are broken using the original token indices. The Laplace CDF is defined as $\sigma(z)=\frac{1}{2}e^z$ for $z\le0$ and $\sigma(z)=1-\frac{1}{2}e^{-z}$ for $z\ge0$.

\paragraph{Why no $O(M\log M)$ sorting term appears.} The order of the
tokens is part of the input to this operator. For sequences, the default
position is simply the original index, $p_i=i$, and for spatial or other
structured inputs the caller supplies the already chosen traversal order.
Consequently, all prefix/suffix scans and barrier readouts operate directly
along the existing token axis; the algorithm does not sort the input at every
forward pass. Under this ordered-input convention, the work of the scan and
readout is $O((M+N)d)$ and the auxiliary memory is $O((M+N)d)$ (or
$O(M+N)$ for the scalar barrier summaries). A generic implementation that is
given unsorted coordinates may prepend a stable sort, adding
$O(M\log M)$ work and the corresponding permutation/gather cost, but that is
an input-preparation cost rather than part of the ordered LapMerge operator.
All runtime measurements in this paper use the ordered-input setting; the
original indices provide the deterministic tie-break when coordinates are
equal.

\paragraph{Cost of the optional scorer.} The linear bound above describes the
LapMerge operator after the routing masses $r_i$ have been supplied. If the
upstream scorer uses a differentiable soft-top-$k$ construction implemented by
sorting the scalar logits, the end-to-end routing stage has an additional
$O(M\log M)$ term. This does not change the $O((M+N)d)$ complexity of the
merge itself. In practice, the extra term is often small compared with the
feature-valued scan: only one scalar per token is sorted, and GPU sorting
primitives are highly optimized. For this reason the asymptotic distinction
should be reported explicitly, while measured runtimes should include the
scorer whenever the complete reduction module is being benchmarked.
If the scorer instead maps logits independently through a pointwise sigmoid or
softplus and normalizes the resulting non-negative masses without top-$N$
sorting, this additional sorting cost is absent and the routing stage is
linear in $M$.

\paragraph{From a random token to a merged token.}
Choose $I$ with $\Pr(I=i)=r_i$ and let $P_I=p_I+\alpha E_I$, where each
$E_i$ has CDF $\sigma$. Then $\Pr(P_I\le z)=\Phi(z)$ by conditioning on
$I$. For $B_\ell=[b_{\ell-1},b_\ell)$, the barrier equation gives
$\Pr(P_I\in B_\ell)=\Phi(b_\ell)-\Phi(b_{\ell-1})=1/N$.
Conditioning once more on $I$ yields
\begin{equation}
\label{eq:appendix-membership}
\Pr(I=i\mid P_I\in B_\ell)
=N r_i\!\left[\sigma\!\left(\frac{b_\ell-p_i}{\alpha}\right)
-\sigma\!\left(\frac{b_{\ell-1}-p_i}{\alpha}\right)\right]
=w_i^\ell.
\end{equation}
Every $w_i^\ell$ is non-negative and $\sum_iw_i^\ell=1$, so
$t_\ell=\mathbb E[x_I\mid P_I\in B_\ell]=\sum_iw_i^\ell x_i$ is a
convex combination. Moreover, $\sum_{\ell=1}^Nw_i^\ell=Nr_i$:
all of a token's routing mass is accounted for across the outputs.

\paragraph{Full proof of Proposition~\ref{prop:equal-mass-boundaries}.}
For a continuous, strictly positive density $\kappa$, each summand in
$\Phi'(z)=\alpha^{-1}\sum_i r_i \kappa((z-p_i)/\alpha)$ is non-negative and at
least one is strictly positive; hence $\Phi'(z)>0$ for all finite $z$.
Because $\sum_i r_i=1$, the CDF limits are $\Phi(-\infty)=0$ and
$\Phi(+\infty)=1$. The intermediate value theorem and strict monotonicity
give one and only one solution of $\Phi(b_\ell)=\ell/N$ for each
$1\le\ell<N$, and $b_1<\cdots<b_{N-1}$. For differentiability, consider
a differentiable path $r(t)$ in the probability simplex. Then
$\sum_j\dot r_j=0$ and implicit differentiation gives
\begin{equation}
\label{eq:appendix-barrier-jvp}
0=\Phi'(b_\ell)\dot b_\ell
  +\sum_j\sigma\!\left(\frac{b_\ell-p_j}{\alpha}\right)\dot r_j,
\qquad
\dot b_\ell=-\frac{\sum_j\sigma((b_\ell-p_j)/\alpha)\dot r_j}
{\Phi'(b_\ell)}.
\end{equation}
The denominator is positive, so the implicit function theorem applies.
Equation~\eqref{eq:jvp} is one ambient representative of this derivative:
on the simplex, adding the same constant to all its entries does not
change any directional derivative because $\sum_j\dot r_j=0$. This
observation also explains why equivalent VJPs may differ by a constant
vector before the upstream normalization is differentiated.

\paragraph{Three numerical and algorithmic pitfalls.} First, directly computing $B_i=\sum_{j\le i}r_je^{p_j/\alpha}$ and $C_i=\sum_{j>i}r_je^{-p_j/\alpha}$ in~\eqref{eq:split} can overflow or underflow before the final factors $e^{\mp x/\alpha}$ bring the CDF back into $[0,1]$; recentering each exponential at its query point avoids those large intermediate values. Second, comparing $\ell/N$ with the unsmoothed prefix mass $A_i=\sum_{j\le i}r_j$ can select the wrong barrier cell, because $A_i\ne\Phi(p_i)$; we search the smoothed CDF nodes instead (Safeguard~A). Third, tangents, cotangents, and feature coordinates can be signed, whereas a logarithmic scan requires non-negative summands; separate positive and negative scans handle them. The in-cell solve is further stabilized by a suitable endpoint anchor and a Vieta root (Safeguard~B).

\paragraph{Principle 1: Recenter decayed sums at the query point.}
For a query point $x$ located in cell~$i$ (i.e., exactly $i$ tokens
satisfy $p_j\le x$) and a non-negative sequence $w_1,\dots,w_M$
(with $w=r$ by default), define the log decayed sums
\begin{equation}
\label{eq:recentered}
L_w(x;i)=\log\!\!\sum_{j\le i}w_je^{(p_j-x)/\alpha},
\qquad
R_w(x;i)=\log\!\!\sum_{j>i}w_je^{(x-p_j)/\alpha}.
\end{equation}
Both quantities are computed via global \textsc{LogCumsumExp} scans. For the left term, we scan $\log w_j+p_j/\alpha$ from left to right and subtract $x/\alpha$ upon reading out cell~$i$. For the right term, we execute a reverse scan on $\log w_j-p_j/\alpha$ and add $x/\alpha$. Here $\log0=-\infty$ is an empty summand. Every exponent is recentered to be non-positive at readout: $(p_j-x)/\alpha\le0$ on the left and $(x-p_j)/\alpha\le0$ on the right. The scan stores logarithms, so large global offsets are never exponentiated directly.

\paragraph{Principle 2: Decompose signed sequences prior to scanning.}
The \textsc{LogCumsumExp} operation assumes non-negative summands, a
condition satisfied by $r_i$ but violated by tangents, cotangents, and
token coordinates. For a signed sequence $v_i$, write
$v_i=v_i^+-v_i^-$, with $v_i^+=\max(v_i,0)$ and
$v_i^-=\max(-v_i,0)$. Run a standard \textsc{LogCumsumExp} on each
component independently and combine the results at readout:
\begin{equation}
\label{eq:signed-scan}
\textstyle\sum_{j\le i}v_j
=
\exp\!\big(\textsc{LogCumsumExp}_{j\le i}(\log v_j^+)\big)
-
\exp\!\big(\textsc{LogCumsumExp}_{j\le i}(\log v_j^-)\big).
\end{equation}
Recombination occurs only after the prefix and suffix states are evaluated.
This doubles the scan operations for signed inputs but keeps the logarithms
well defined. We denote this wrapper \textsc{SignedLogCumsumExp}.
For any signed $w$, let
$D^L_w=e^{L_{w^+}}-e^{L_{w^-}}$ and
$D^R_w=e^{R_{w^+}}-e^{R_{w^-}}$, with empty sums equal to zero.
For non-negative $w$, these reduce to $e^{L_w}$ and $e^{R_w}$.

\paragraph{One construction, five uses.} With $A^w_i = \sum_{j\le i}w_j$, we define the recentered scan functional as:
\begin{equation}
\label{eq:phi-w}
\Phi_w(x;i) := A^w_i - \tfrac12D^L_w(x;i) + \tfrac12D^R_w(x;i).
\end{equation}
Setting $w=r$ recovers the mixture CDF $\Phi$ within cell~$i$. The
same prefix/suffix states serve five tasks: computing node values, solving
for barriers, evaluating barrier JVPs, computing barrier VJPs, and
aggregating values. The reverse VJP uses the complementary orientation
$\sigma((b_\ell-p_j)/\alpha)=1-\sigma((p_j-b_\ell)/\alpha)$ after
interchanging barrier and token roles. Table~\ref{tab:phi-w-uses}
specifies the inputs to these scans.

\begin{table}[h]
\centering
\small
\begin{tabular}{lllll}
\toprule
Context & Weight $w$ & Summation domain & Query point & Cell $i$ source \\
\midrule
\S\ref{app:algorithm-barriers}, $\mathrm{node}_i$ & $r$ & tokens $j$ & $p_i$ & index $i$ \\
\S\ref{app:algorithm-barriers}, solve/density & $r$ & tokens $j$ & $b_\ell$ & Safeguard~A search \\
\S\ref{app:algorithm-sensitivity}, \textsc{Forward} & $\dot r$ & tokens $j$ & $b_\ell$ & reused from solve \\
\S\ref{app:algorithm-sensitivity}, \textsc{Reverse} & $c$ (below) & barriers $\ell$ & $p_j$ & complement of swapped scan \\
\S\ref{app:algorithm-aggregate} & $r\odot x$ & tokens $j$ & $b_\ell$ & reused from solve \\
\bottomrule
\end{tabular}
\caption{Five applications of the recentered prefix/suffix scans. The reverse row takes the complement of $\Phi_c(p_j)$ to obtain the required CDF orientation. Cell indices are determined by a monotone merge search.}
\label{tab:phi-w-uses}
\end{table}

\subsection{Locating the barriers}
\label{app:algorithm-barriers}

\begin{algorithm}[h!]
\caption{\textsc{FindBarriers}, $\mathcal{O}(M+N)$}
\label{alg:barriers}
\begin{algorithmic}[1]
\Require sorted positions $p_{1:M}$, weights $r_{1:M}$, budget $N$, scale $\alpha$
\Ensure barriers $b_{0:N}$ (with $b_0=-\infty$, $b_N=+\infty$); cell indices $i_{0:N}$ (with $i_0=0$, $i_N=M$); densities $\Phi'(b_{1:N-1})$
\State $\mathrm{node}_i \gets \mathrm{logit}\,\Phi_r(p_i;i)$ for all $i$
\For{$\ell = 1,\dots,N-1$}
  \State $i_\ell \gets$ merge-search $\mathrm{logit}(\ell/N)$ against $\mathrm{node}_{1:M}$
  \State $q_\ell\gets$ finite endpoint of cell $i_\ell$ with $\Phi_r(q_\ell)$ closest to $\ell/N$
  \State $b_\ell \gets$ cancellation-safe (Vieta) root of $\Phi_r(x;i_\ell)=\ell/N$, recentered at $q_\ell$
  \State $\Phi'(b_\ell) \gets \tfrac1{2\alpha}\big[e^{L_r(b_\ell;i_\ell)}+e^{R_r(b_\ell;i_\ell)}\big]$
\EndFor
\State $b_0\gets-\infty$, $i_0\gets0$;\qquad $b_N\gets+\infty$, $i_N\gets M$
\State \Return $b_{0:N}$, $i_{0:N}$, $\Phi'(b_{1:N-1})$
\end{algorithmic}
\end{algorithm}

Algorithm~\ref{alg:barriers} outlines the procedure for locating the barriers.
Line~1 evaluates the smoothed CDF at every token position and stores
its log-odds, $\mathrm{node}_i=\mathrm{logit}\,\Phi_r(p_i;i)$.
The following non-negative increments make these log-odds accurate even
when a node CDF is close to zero or one. Set
$U_i=\sum_{j\le i}r_j e^{(p_j-p_i)/\alpha}$,
$V_i=\sum_{j\ge i}r_j e^{(p_i-p_j)/\alpha}$, and
$\delta_i=(p_{i+1}-p_i)/\alpha$. Then, with
$d_0=\tfrac12V_1$, $d_M=\tfrac12U_M$, and
\begin{equation}
\label{eq:node-increments}
d_i=\tfrac12\bigl(1-e^{-\delta_i}\bigr)
      (U_i+V_{i+1}),\qquad 1\le i<M,
\end{equation}
we have $d_i>0$, $\sum_{i=0}^{M}d_i=1$, and
$\Phi(p_i)=\sum_{h=0}^{i-1}d_h$,
$1-\Phi(p_i)=\sum_{h=i}^{M}d_h$. The recurrences
$U_i=r_i+e^{-\delta_{i-1}}U_{i-1}$ and
$V_i=r_i+e^{-\delta_i}V_{i+1}$ compute all increments in $O(M)$.
In floating point, run these recurrences in the log domain and compute
$\log(1-e^{-\delta_i})$ via $\log[-\operatorname{expm1}(-\delta_i)]$.
Obtain each node logit by subtracting a suffix
\textsc{LogCumsumExp} of $\log d_h$ from a prefix
\textsc{LogCumsumExp}. This avoids both underflow in small increments
and subtracting a CDF value from one near the right tail. The nodes are strictly increasing, so all barrier
cells can then be found in a single merge pass. Two additional details
make the in-cell solve safe.

\paragraph{Derivation of the Laplace cell formula.}
In cell $p_i\le x<p_{i+1}$, the CDF branch for $j\le i$ is
$1-\tfrac12e^{(p_j-x)/\alpha}$ and the branch for $j>i$ is
$\tfrac12e^{(x-p_j)/\alpha}$. Thus, with
$A_i=\sum_{j\le i}r_j$,
$B_i=\sum_{j\le i}r_je^{p_j/\alpha}$, and
$C_i=\sum_{j>i}r_je^{-p_j/\alpha}$,
\begin{equation}
\label{eq:appendix-cell-split}
\Phi(x)=A_i-\tfrac12 B_i e^{-x/\alpha}
              +\tfrac12 C_i e^{x/\alpha}.
\end{equation}
The three arrays are an ordinary prefix, a weighted prefix, and a
weighted suffix. Their raw form proves the identity; numerical evaluation
uses the recentered forms in~\eqref{eq:recentered}. Evaluating $\Phi$ at
all $M$ token positions and merging those ordered values with the ordered
targets $\ell/N$ identifies every barrier cell in $O(M+N)$ work. Inside
a cell, $u=e^{(x-q)/\alpha}>0$ turns
\eqref{eq:appendix-cell-split} into
\eqref{eq:barrier-quadratic}. Since $\Phi'(x)>0$, the positive root in
the selected cell is the unique barrier. The root and its density require
constant work per barrier, proving the forward bound for barrier location.

\paragraph{Safeguard A: Search against $\Phi$ itself, not the plain mass $A$.}
Because $\mathrm{node}_{1:M}$ is monotone, locating the cell
$[p_i,p_{i+1})$ containing each $b_\ell$ reduces to merging two sorted lists:
the nodes and the targets $\mathrm{logit}(\ell/N)$. A single pair of
pointers advances through the $M$ nodes and $N-1$ targets, costing
$\mathcal{O}(M+N)$ time rather than requiring $N-1$ independent searches. The unsmoothed
prefix mass $A_i$ must not be used here: it ignores the kernel tails and can
choose the wrong cell.

The search uses two sentinel cells. Cell $i=0$ denotes $(-\infty,p_1)$,
cells $1,\ldots,M-1$ denote $[p_i,p_{i+1})$, and cell $i=M$ denotes
$[p_M,+\infty)$. The merge search is therefore performed on the extended
node list $(-\infty,\mathrm{node}_1,\ldots,\mathrm{node}_M,+\infty)$.
At an exact node value, we use the right-continuous convention, so a barrier
equal to $p_i$ is assigned to cell $i$.

\paragraph{Safeguard B: Solve inside the cell anchored at whichever end stays bounded.}
Inside a known cell, set $u=\exp((x-q)/\alpha)$, where $q$ is one of the
finite cell endpoints. Define the recentered masses:
\begin{equation}
L_q=\sum_{j\le i}r_j e^{(p_j-q)/\alpha},\qquad
R_q=\sum_{j>i}r_j e^{(q-p_j)/\alpha},\qquad
A_i=\sum_{j\le i}r_j.
\end{equation}
Holding these quantities fixed turns $\Phi_r(x;i)=\ell/N$ into the explicit
quadratic equation:
\begin{equation}
\label{eq:barrier-quadratic}
R_q u^2 + 2(A_i-\ell/N)u - L_q = 0.
\end{equation}
The valid solution is the positive root whose $x=q+\alpha\log u$ lies in the
chosen cell. Let $B=2(A_i-\ell/N)$ and
$\Delta=B^2+4R_qL_q$. When $R_q>0$, evaluate the positive root in the
cancelation-safe form:
\begin{equation}
u=\begin{cases}
\displaystyle\frac{2L_q}{B+\sqrt{\Delta}},&B\ge0,\\[6pt]
\displaystyle\frac{-B+\sqrt{\Delta}}{2R_q},&B<0.
\end{cases}
\end{equation}
The first branch is derived using Vieta's formulas and avoids
subtracting nearly equal numbers. If a tail is numerically zero, use the
corresponding linear limit and then clamp only at the cell endpoint. For an
interior cell, choose $q=p_i$ or $q=p_{i+1}$ according to which endpoint is
closer in CDF mass to $\ell/N$. At either endpoint, the defining sums
$L_q$ and $R_q$ have non-positive exponents.
For $i=0$, use $q=p_1$ and the one-sided right-tail formula; for $i=M$, use
$q=p_M$ and the one-sided left-tail formula. Finally, verify that the root lies in
the selected cell. The same local scan values give
$\Phi'(b_\ell)=\big(e^{L_r(b_\ell;i_\ell)}+e^{R_r(b_\ell;i_\ell)}\big)/(2\alpha)$,
so no additional scan is required.

\paragraph{Remark (extreme-separation corner).}
When the linear coefficient vanishes ($B{=}0$, exactly balanced masses on
the two sides) and one tail underflows ($e^{-D/\alpha}$ below the
floating-point floor, i.e.\ a barrier more than ${\sim}88$ kernel widths
from all support in FP32), the Vieta branch and its linear fallback both
lose the root; the remedy is the log-space branch
$\log u=\tfrac12(\log L_q-\log R_q)$, with $L_q$ and $R_q$ kept as
logarithms throughout. None of our configurations enter this regime: with
strictly positive routing masses and index-spaced positions, barriers fall
within a few kernel widths of populated support, and the bounded-end
anchoring above covers one-sided underflow. Implementations targeting
extremely separated or near-degenerate mass configurations should enable
this branch. Separately from computational stability, the derivative factor
$1/\Phi'(b_\ell)$ can be genuinely ill-conditioned when the density at a
barrier is very small; no logarithmic representation removes this, as it is
a property of the problem rather than of the implementation.

\subsection{Differentiating the Barriers}
\label{app:algorithm-sensitivity}

\paragraph{Derivation of the gradient formula.}
Each barrier satisfies the scalar equation
$\Phi(b_\ell;r)=\ell/N$. Differentiating it with respect to $r_j$ and using
$\partial_x\Phi=\Phi'(b_\ell)>0$ gives:
\begin{equation}
\frac{\partial b_\ell}{\partial r_j}
=-\frac{\sigma((b_\ell-p_j)/\alpha)}{\Phi'(b_\ell)}.
\end{equation}
This is a dense Jacobian entry, but training requires only its products with a
tangent or a cotangent. The rest of this section demonstrates how to compute both
products using scans.

\paragraph{Motivation against direct materialization.}
The entries form an $(N-1)\times M$ matrix, but materializing it directly would restore
the quadratic cost removed during the forward pass. A forward-mode tangent
requires $\dot b=(\partial b/\partial r)\dot r$, whereas reverse-mode training requires
the transpose product $\bar r=(\partial b/\partial r)^\top\bar b$. Both
operations can be formulated as matrix-free scan problems.

\paragraph{Exploiting shared structure.}
The kernel factor in \eqref{eq:jvp} is identical to the factor used by $\Phi$.
Consequently, the JVP numerator is the generalized scan
$\Phi_{\dot r}(b_\ell;i_\ell)$, evaluated at the previously identified barrier cell.
For the VJP, the roles reverse: we scan over the $N-1$ barriers once and read
the complement of the resulting CDF across all $M$ token positions. Neither direction materializes individual Jacobian
entries.

\paragraph{Analytical backward and saved state.}
The forward barrier solve runs under \texttt{no\_grad}. A custom
\texttt{torch.autograd.Function} supplies the exact JVP/VJP formulas instead
of tracing through the discrete cell search, endpoint branch, and square-root solve.
The benefit is a small, predictable saved state: no computation graph is
stored through the branchy solver, the backward reuses the barriers, cell
indices, and scan values already computed in the forward pass, and the fixed
formulas avoid per-operation tracing overhead. Both products retain the
$\mathcal{O}(M+N)$ cost of the forward pass.

\begin{algorithm}[t!]
\caption{\textsc{BarrierSensitivity} -- the custom backward pass, Theorem~\ref{thm:laplace-linear}, $\mathcal{O}(M+N)$}
\label{alg:sensitivity}
\begin{algorithmic}[1]
\Require $p_{1:M}, r_{1:M}, \alpha$, and $(b_{1:N-1}, i_{1:N-1}, \Phi'(b_{1:N-1}))$ from Algorithm~\ref{alg:barriers}, computed under \texttt{no\_grad}
\Procedure{Forward (JVP)}{tangent $\dot r_{1:M}$}
  \State $\dot b_\ell \gets -\Phi_{\dot r}(b_\ell;i_\ell)/\Phi'(b_\ell)$ for every $\ell$
  \State \Return $\dot b_{1:N-1}$
\EndProcedure
\Procedure{Reverse (VJP)}{cotangent $\bar b_{1:N-1}$}
  \State $c_\ell \gets -\bar b_\ell/\Phi'(b_\ell)$ for every $\ell$
  \State merge-search $p_{1:M}$ against the sorted barriers $b_{1:N-1}$
  \State $C\gets\sum_{\ell=1}^{N-1}c_\ell$; $\bar r_j\gets C-\Psi_c(p_j;k_j)$ for every $j$, with $k_j=\#\{\ell:b_\ell\le p_j\}$
  \State \Return $\bar r_{1:M}$
\EndProcedure
\end{algorithmic}
\end{algorithm}

Theorem~\ref{thm:laplace-linear} now follows operationally: both products
use one scan over the longer sequence and one merge over the shorter one.
For the JVP, we build $L_{\dot r},R_{\dot r}$ with
\textsc{SignedLogCumsumExp}, evaluate them at the stored barrier cells, and divide
by the stored densities. No new cell search is needed. For the VJP, substituting \eqref{eq:jvp} into the adjoint yields:
\begin{equation}
\bar r_j=\sum_{\ell=1}^{N-1}\frac{\partial b_\ell}{\partial r_j}\,\bar b_\ell
=\sum_{\ell=1}^{N-1}\underbrace{-\frac{\bar b_\ell}{\Phi'(b_\ell)}}_{=:c_\ell}\,\sigma\!\left(\frac{b_\ell-p_j}{\alpha}\right),
\end{equation}
where $c_\ell$ already absorbs the minus sign. To make the role swap
explicit, take the already ordered barrier support points $z_\ell=b_\ell$
(the barriers are sorted by construction, so no additional sort is
performed) with weights $c_\ell$ and
define, for a query $y$ in cell $k$,
\begin{equation}
\Psi_c(y;k)=\sum_{\ell\le k}c_\ell
-\tfrac12\sum_{\ell\le k}c_\ell e^{(z_\ell-y)/\alpha}
+\tfrac12\sum_{\ell>k}c_\ell e^{(y-z_\ell)/\alpha}.
\end{equation}
The displayed $\Psi_c(p_j;k_j)$ contains
$\sigma((p_j-b_\ell)/\alpha)$, whereas the barrier derivative contains
$\sigma((b_\ell-p_j)/\alpha)$. By Laplace CDF symmetry,
\begin{equation}
\label{eq:appendix-reverse-cdf}
\boxed{\bar r_j=C-\Psi_c(p_j;k_j)},\qquad
C=\sum_{\ell=1}^{N-1}c_\ell.
\end{equation}
Here $k_j$ is the number of barriers at or below $p_j$;
$k_j=0$ and $k_j=N-1$ cover the sentinel regions. Equivalently, the
right-hand side is
$\tfrac12\sum_{\ell\le k_j}c_\ell e^{(b_\ell-p_j)/\alpha}
+\sum_{\ell>k_j}c_\ell
-\tfrac12\sum_{\ell>k_j}c_\ell e^{(p_j-b_\ell)/\alpha}$.
While a
direct evaluation requires $\mathcal{O}(MN)$ time, performing a single signed prefix/suffix
construction on the barriers followed by a merge search over the token
positions computes all $\bar r_j$ in $\mathcal{O}(M+N)$ time.

\subsection{Aggregating Tokens Given the Barriers}
\label{app:algorithm-aggregate}

\begin{algorithm}[t!]
\caption{\textsc{Aggregate} -- reading off every merged token, $\mathcal{O}((M+N)d)$}
\label{alg:aggregate}
\begin{algorithmic}[1]
\Require tokens $x_{1:M}$, weights $r_{1:M}$, positions $p_{1:M}$, scale $\alpha$, and $(b_{0:N}, i_{0:N})$ from Algorithm~\ref{alg:barriers}
\Ensure merged tokens $t_{1:N}$
\State $\Phi_x(b_0) \gets 0$,\quad $\Phi_x(b_N) \gets \sum_{i=1}^M r_i x_i$ 
\For{$\ell=1,\dots,N-1$}
  \State $\Phi_x(b_\ell) \gets \Phi_{r\odot x}(b_\ell;i_\ell)$ 
\EndFor
\For{$\ell=1,\dots,N$}
  \State $t_\ell \gets N\big[\Phi_x(b_\ell)-\Phi_x(b_{\ell-1})\big]$
\EndFor
\State \Return $t_{1:N}$
\end{algorithmic}
\end{algorithm}

Aggregation repeats the same scan using the vector-valued signed weights
$w_i=r_i x_i$. The stored cell index $i_\ell$ is reused for every barrier, requiring
no second search. For each feature coordinate, we evaluate
$\Phi_{r\odot x}(b_\ell;i_\ell)$ and take the difference of consecutive values. The
boundary limits are explicit: $\Phi_x(-\infty)=0$ and
$\Phi_x(+\infty)=\sum_{i=1}^M r_i x_i$. Multiplication by $N$ yields the
conditional expectation in Definition~\ref{def:merge}. The computational cost of the scan is
$\mathcal{O}((M+N)d)$, including all feature coordinates.

\paragraph{Custom backward pass for aggregation.} This backward pass differs
from the barrier backward pass: it differentiates the readout with respect to both
$x$ and $r$. Signed scans represent $v=v^+-v^-$ using the logarithms of their two
non-negative components. Although a zero component is correctly handled as a dead branch in the
forward pass, differentiating its $\log(0)=-\infty$ node directly can
produce spurious NaNs. The implementation therefore executes the forward scan
under \texttt{no\_grad} and evaluates the exact Laplace-kernel products inside a
custom backward pass. The resulting gradients remain finite and preserve the same
$\mathcal{O}((M+N)d)$ asymptotic complexity.

The readout adjoint can be organized as follows. Let
$\bar t_\ell$ denote the incoming gradient and define
$F_\ell=\Phi_x(b_\ell)$, such that $t_\ell=N(F_\ell-F_{\ell-1})$. The
corresponding cotangents are
$\bar F_0=-N\bar t_1$ (formal only, since $F_0=0$ is constant),
$\bar F_\ell=N(\bar t_\ell-\bar t_{\ell+1})$ for $1\le\ell<N$, and
$\bar F_N=N\bar t_N$. The CDF difference has already been applied
in these cotangents. Define the vector-valued quantity
\begin{equation}
\label{eq:appendix-readout-adjoint}
g_i=\sum_{\ell=1}^{N}\bar F_\ell
\sigma\!\left(\frac{b_\ell-p_i}{\alpha}\right)
=N\bar t_N+\sum_{\ell=1}^{N-1}N(\bar t_\ell-\bar t_{\ell+1})
\sigma\!\left(\frac{b_\ell-p_i}{\alpha}\right).
\end{equation}
Here $\sigma((b_N-p_i)/\alpha)=1$. Equivalently, one could use
$g_i=N\sum_{\ell=1}^{N}\bar t_\ell
[\sigma((b_\ell-p_i)/\alpha)-\sigma((b_{\ell-1}-p_i)/\alpha)]$;
this form uses $\bar t$, not $\bar F$. The direct readout gradients are
$\bar x_i\mathrel{+}=r_i g_i$ and
$\bar r_i^{\mathrm{direct}}\mathrel{+}=x_i^\top g_i$.
All $g_i$ are obtained by one barrier-support CDF scan with the
\emph{unreversed} argument $\sigma((b_\ell-p_i)/\alpha)$, plus the
constant endpoint term. Moving a barrier contributes
\begin{equation}
\label{eq:appendix-readout-barrier}
\bar b_\ell
=\bar F_\ell^\top\Phi_x'(b_\ell),\qquad
\Phi_x'(b_\ell)=\frac1\alpha\sum_i r_i x_i
\kappa\!\left(\frac{b_\ell-p_i}{\alpha}\right),
\quad 1\le\ell<N.
\end{equation}
These cotangents enter Algorithm~\ref{alg:sensitivity}'s reverse procedure,
and its result is added to $\bar r^{\mathrm{direct}}$. Both
the CDF scan and the density scan in~\eqref{eq:appendix-readout-barrier}
take $O((M+N)d)$ work. No $M\times N$ assignment matrix is formed.

\paragraph{Full proof of Theorem~\ref{thm:laplace-linear}.}
The forward construction first computes the scalar prefix/suffix states
in~\eqref{eq:appendix-cell-split}. The monotone merge search and the
closed-form root described in Appendix~\ref{app:algorithm-barriers}
produce all barriers and densities in $O(M+N)$ work. Repeating the
prefix/suffix construction for the $d$ signed sequences $r_i x_{i,c}$
evaluates $F_1,\ldots,F_{N-1}$ in $O((M+N)d)$ work. Setting $F_0=0$
and $F_N=\sum_i r_i x_i$ and differencing adjacent values gives all
outputs in $O(Nd)$. Hence the forward bound holds.

For a tangent $\dot r$ on the simplex, Equation~\eqref{eq:appendix-barrier-jvp}
is exactly a Laplace-CDF scan with signed weights $\dot r_i$, read at
the stored barrier cells. It costs $O(M+N)$. In reverse mode,
Equation~\eqref{eq:appendix-reverse-cdf} scans the signed barrier
cotangents $c_\ell$ and reads them at every $p_j$, also in $O(M+N)$.
For the feature readout, differentiating $r_i x_i$ and the CDF evaluations
only introduces further signed scans and pointwise products. In reverse,
\eqref{eq:appendix-readout-adjoint} computes the direct derivatives
with respect to $r$ and $x$ by one CDF scan, while
\eqref{eq:appendix-readout-barrier} supplies the barrier cotangents
by one density scan. Applying the barrier VJP completes the chain rule.
Each scan, gather, and merge is linear in its input and output lengths,
and there are only a constant number of them per feature coordinate.
Thus forward, JVP, and VJP all cost $O((M+N)d)$, reducing to
$O(M+N)$ for fixed $d$. The case $N=1$ has no interior barriers and
reduces directly to $t_1=\sum_i r_i x_i$, satisfying the same bound.

\paragraph{Algorithmic bound and portable implementation.}
The stated linear-work bound assumes the positions are already ordered,
as in Section~\ref{sec:method}, and that the two ordered searches are
implemented as merge passes. A portable PyTorch implementation may use
\texttt{sort} for unordered coordinates and \texttt{searchsorted} for
each query. Those choices compute the same mathematical operator and
retain matrix-free storage, but their worst-case work includes sorting
and binary-search overhead. The linear bound describes the ordered
scan-level construction, with the merge searches shown in
Algorithms~\ref{alg:barriers} and~\ref{alg:sensitivity}.

\paragraph{Implementation checklist.} A correct implementation should satisfy
six conditions: (i) recenter every decayed prefix/suffix sum as in
\eqref{eq:recentered}; (ii) split signed inputs before scanning as in
\eqref{eq:signed-scan}; (iii) locate cells from $\mathrm{logit}\,\Phi_r(p_i;i)$,
not $A_i$; (iv) solve the in-cell quadratic using the numerically safe endpoint
and root branch; (v) implement separate matrix-free custom backward passes for
barriers and aggregation, running forward scans under \texttt{no\_grad}; and
(vi) maintain $\alpha$ proportional to token spacing rather than to the full
sequence width. Appendix~\ref{app:runtime-implementations} compares the
dense, native-scan, and fused-CUDA implementations of this operator.

\providecommand{\runtimefigure}[2]{%
  \IfFileExists{time_comparison/results/triangular_figures/#1.pdf}
    {\includegraphics[width=#2]{time_comparison/results/triangular_figures/#1.pdf}}
    {\includegraphics[width=#2]{results/triangular_figures/#1.pdf}}}

\subsection{Scorer and routing normalization}
\label{app:scorer}

The scorer is a lightweight module applied independently to the input token
representations. In the default configuration, an MLP produces one scalar
logit $s_i$ for each token. The default LapSum soft-top-$k$ construction is
the differentiable routing mechanism introduced in the original LapSum work
~\citep{struski2025lapsum},
\begin{equation}
 s_i = f_{\mathrm{MLP}}(x_i),
 \qquad i=1,\ldots,M.
\end{equation}
LapMerge does not use the logits directly. They are converted into
non-negative routing masses $r_i$ and normalized so that
$\sum_i r_i=1$. We consider the following interchangeable routing maps:
\begin{align}
 r_i^{\mathrm{sig}} &=
 \frac{\operatorname{sigmoid}(s_i)}
 {\sum_j\operatorname{sigmoid}(s_j)},
 &&\text{sigmoid + normalization},\\
 r_i^{\mathrm{sp}} &=
 \frac{\operatorname{softplus}(s_i)}
 {\sum_j\operatorname{softplus}(s_j)},
 &&\text{softplus + normalization},\\
 r_i^{\mathrm{sm}} &= \operatorname{softmax}(s)_i,
 &&\text{softmax},\\
 r_i^{\mathrm{stk}} &=
 \frac{1}{k}\,\operatorname{soft\text{-}top\text{-}k}(s)_i,
 &&\text{soft-top-$k$ + normalization}.
\end{align}
The first three maps differ in how strongly they suppress low-scoring tokens.
The soft-top-$k$ map additionally encodes the target budget: its unscaled
outputs have total mass $k$, so division by $k$ produces a probability mass
function. Unless stated otherwise, the experiments in this paper use
soft-top-$k$ routing as the default scorer normalization. The other maps are
ablations or alternative deployment choices; once the masses $r_i$ have been
formed, the same LapMerge operator and the same barrier equations apply.

The direct implementation of soft-top-$k$ sorts the $M$ scalar logits and
therefore has theoretical cost $O(M\log M)$, followed by the linear
LapMerge computation. This term is usually not visible in measured runtime:
the sort acts on scalars rather than $d$-dimensional tokens and is implemented
by highly optimized GPU primitives. It can nevertheless matter at extremely
large sequence lengths. Fast LapSum~\citep{struski2026fastlapsum} replaces the
full sort of the scorer logits with probabilistic bracketing that narrows and
sorts only a small band around the threshold, giving an expected-linear
routing step. Note that this concerns the ordering of the \emph{logit values}
inside the routing module; it is unrelated to the ordering of the token
positions $p_i$, which the merge operator assumes as given. We do not require
that more elaborate routing implementation in the present experiments, because
the sorting cost is negligible in our measured regime; the distinction becomes
relevant for sequences substantially above $10^6$ tokens. The sigmoid and softplus alternatives require only pointwise
evaluation and normalization, so they do not introduce this sorting term.

\paragraph{Normalized masses are a specification, not the computational
representation.} We state the routing maps as normalized probability vectors,
$r_i\ge0$ with $\sum_i r_i=1$, because that is the cleanest way to specify
what the barrier construction of \S\ref{app:algorithm} requires
mathematically. The operator's interface accepts either form: the runtime
benchmark, for instance, passes an explicitly materialized (numerically
stabilized) softmax of the logits. The numerically critical part is not
whether $r$ is materialized but how the barrier solver represents its
internal quantities: the search operates on log-scale node values,
$\mathrm{node}_i=\mathrm{logit}\,\Phi_r(p_i;i)$ as constructed in
\S\ref{app:algorithm-barriers}, together with log-scale prefix/suffix
accumulators, so extremely small tail contributions survive at large
$M/\alpha$ where a plain-scale accumulation would underflow. In an optional
fully log-space path the normalized masses need never be formed explicitly:
$\log r_i=s_i-\operatorname{LSE}_j(s_j)$ feeds the same scans directly. Either
way, the normalized-probability view of $r_i$ above is a specification of the
operator's semantics, not a claim about which tensors an implementation
stores.

\paragraph{Configuration recipe per experiment.}
Table~\ref{tab:recipe} collects, in one place, which scorer, routing map,
budget, and widths each experiment uses, and which modules are trained.
Two distinct temperatures appear throughout the paper and should not be
conflated: the \emph{routing} temperature inside soft top-$k$ (fixed at $1$
everywhere) and the \emph{merge} kernel width $\alpha$ of
\eqref{eq:mix}, whose units follow the position coordinate: normalized
positions in $[0,1]$ for histopathology (so the learned width starts at the
mean bin spacing $1/N$), integer rank indices for LRA and POPE, and a
learned scalar coordinate for Cars and arXiv, where the scorer emits a
position alongside the importance and the width is learned per lifted
coordinate. The routing budget $k$ coincides with the output budget $N$ in
every trained model; in POPE, $k=\lceil bM\rceil$ equals both the number
of retained positions and the number of merged outputs.

\begin{table}[h]
\centering
\footnotesize
\setlength{\tabcolsep}{4pt}
\resizebox{\linewidth}{!}{%
\begin{tabular}{llllll}
\toprule
experiment & scorer (input $\to$ arch) & budget $N$ & merge width $\alpha$ & trained modules & backend \\
\midrule
histopathology & feats$+$xy $\to$ MLP(1536{+}2,128,1) & $1024$ &
  learned scalar, init $1/N$ & scorer, width, head & native \\
LRA & token $\to$ linear(dim,1) & $10\%$ of $M$ &
  fixed $1$ (rank units) & scorer, backbone, head & native \\
Cars & token $\to$ MLP(384,128,2) $\to(r,a)$ & $20$ &
  learned per-coord.\ $+$ frame $U$ & scorer, $U$, width, head & native \\
arXiv & token $\to$ MLP(768,$\cdot$,2) $\to(r,a)$ & $8$--$64$ &
  learned per-coord.\ $+$ frame $U$ & scorer, $U$, width, head & native \\
POPE & frozen VisionSelector scorer & $\lceil bM\rceil$ &
  fixed $(\alpha_r,\alpha_a){=}(1,1)$, index units & none (eval-only) & native \\
\bottomrule
\end{tabular}}
\caption{\textbf{Configuration recipe per experiment.} Routing is LapSum
soft top-$k$ at temperature $1$ in every row (POPE standardizes the frozen
scores per image first); the runtime benchmark instead uses softmax routing,
as stated in Appendix~\ref{app:runtime-comparison}. ``Native'' is the pure
PyTorch implementation; the CUDA-fused backend appears only in the runtime
study. Backbones are frozen except on LRA, where the reducer and the
backbone are trained jointly from scratch; the row shows the
configuration shared by all five LRA tasks (only the input
dimensionality varies).}
\label{tab:recipe}
\end{table}

\paragraph{Notation-to-code mapping.} The released code names the
mathematical objects as follows: masses $r_i$ appear as \texttt{p} (or
\texttt{masses}), positions $p_i$ as \texttt{a} (or \texttt{coordinates}),
the kernel width $\alpha$ as \texttt{width} (in the POPE adapter, the pair
$(\alpha_r,\alpha_a)$ as \texttt{route\_alpha}/\texttt{agg\_alpha}), the
barriers $b_\ell$ as \texttt{barriers}, and the output budget $N$ as
\texttt{n\_out} (the routing budget $k$ of soft top-$k$ as \texttt{k}).
Throughout the paper, $\kappa$ denotes the kernel density and $k$ is
reserved for the routing/selection budget.

\section{Runtime and Large-Scale Token Reduction}
\label{app:runtime-comparison}

This appendix gives the implementation and measurement details behind the
runtime comparison in the main paper. We study the standard consecutive-barrier
\our{} operator itself, including barrier construction, aggregation,
normalization, and its backward pass. The encoder and the external scorer are
excluded (their measured share of the end-to-end time is reported in
Table~\ref{tab:e2e-scorer}). Throughout, $M$ and $N$ denote the input and output lengths, while
$B=1$, $d=512$, FP32, and an NVIDIA GH200 120GB are used unless noted
otherwise.

\subsection{What is being accelerated?}
\label{app:runtime-algorithm}

For a barrier interval $i$, the direct formulation assigns every input token
$j$ a Laplace mass and then forms a weighted sum of its value. It is easy to
state, but materializing the resulting $N\times M$ assignment map is the
source of quadratic work. LapMerge uses the identity
\begin{equation}
 \sum_j r_j e^{-|b-p_j|/\alpha}
 =e^{-b/\alpha}\!\sum_{p_j\le b}r_j e^{p_j/\alpha}
  +e^{b/\alpha}\!\sum_{p_j>b}r_j e^{-p_j/\alpha},
 \label{eq:runtime-separation}
\end{equation}
and the same identity with $r_j$ replaced by $r_jx_j$ for value-weighted
quantities. The two sums are prefix and suffix summaries. Interval masses and
numerators are obtained from the corresponding cumulative expressions at the
two bounding barriers. Thus the same interval operator is evaluated without
storing its explicit assignment map. For fixed $B$ and
$d$, the scan and readout therefore have work $O((M+N)d)$ and memory
$O((M+N)d)$, rather than $O(MN d)$. The sequence order is already available,
so this bound contains no additional sorting term.

The backward pass reuses the same factorization. Log-domain accumulation and
local rescaling prevent the exponential factors in Eq.~\ref{eq:runtime-separation}
from overflowing, while the scan structure keeps gradient computation linear
in the two sequence lengths. This is why the method remains differentiable at
large input sizes; the dense computation described below is only a faster
implementation for small shapes, not a different model.

\subsection{Three implementation levels of one operator}
\label{app:runtime-implementations}

There are three useful implementation levels for the same differentiable
operator. They differ in how much work is delegated to matrix kernels and how
many intermediate tensors are materialized, but they use the same barriers,
Laplace weights, normalization, and gradients.

\paragraph{1. Dense reference implementation.}
For small $M$ and $N$, one can explicitly form the assignment weights between
each input coordinate and each output barrier. The resulting dense matrix is
then multiplied by the values and by a vector of ones to obtain the numerator
and denominator of each output token. This is the simplest implementation to
read and an important correctness oracle: it exposes every pairwise weight and
is convenient for FP64 checks and gradient tests. Its cost is $O(MN)$ in work
and memory, so it is unsuitable for long sequences, but it can be faster than
several scan kernels when the matrix is small enough for a highly optimized
GEMM or fused attention primitive.

\paragraph{2. Native PyTorch scan implementation.}
The scalable reference avoids the assignment matrix. It evaluates the two
terms in Eq.~\ref{eq:runtime-separation} with log-domain cumulative scans,
then gathers the scan states at the barriers and performs the final
normalization. Numerators are computed by running the same summaries on
$p_jx_j$, while the denominator uses $p_j$. In eager mode this is assembled
from ordinary PyTorch operations such as \texttt{logcumsumexp}, gathers, and
scatter reductions. \texttt{torch.compile} can fuse parts of this graph and
reduce Python and kernel-launch overhead, although the compiler does not
automatically remove every intermediate tensor. This variant is portable,
easy to inspect, and works without a custom extension; its main limitation is
that the framework may launch separate kernels for logically adjacent scan,
readout, and backward stages.

\paragraph{3. Fused CUDA implementation.}
The performance version moves the same computation into custom CUDA kernels.
It fuses coordinate transforms, prefix/suffix accumulation, barrier lookup,
weighted readout, and normalization where this reduces global-memory traffic.
The backward path follows the same decomposition rather than differentiating
through a large materialized assignment map. The scan uses a
decoupled-lookback design: thread blocks process local tiles in parallel and
consume compact summaries from earlier tiles, preserving linear work while
avoiding a global synchronization after every tile. This is the version used
for the large GH200 measurements and the main-paper curve. It has the highest
implementation complexity and requires compiling a CUDA extension, but it
minimizes intermediate storage and kernel-launch overhead.

In practice, the operator dispatches between the dense path for small
assignment problems and the scan path for large ones. The native and fused
scan paths can use the same shape-aware decision, so this is not a comparison
between different algorithms. The crossover depends on accelerator, dtype,
batch size, and feature dimension; it is therefore calibrated empirically
rather than presented as a universal threshold.

\paragraph{Backend agreement is a measured result.}
Table~\ref{tab:backend-agreement} reports the maximum relative forward
deviation of every backend against a dense-FP64 reference on identical
inputs (\texttt{bench\_agreement.py}: $B{=}2$, $d{=}8$, shapes from
$M{=}7$ to $131{,}072$ including $N{=}M$ and the formally admissible
$N{>}M$, widths $\{0.1,0.7,3\}$, and both moderate and near-hard masses,
the latter from logits scaled $10\times$). Three facts emerge.
\emph{(i)} The dense and scan factorizations are the same operator: in FP64
they agree to $5\times10^{-8}$ or better everywhere, including the
near-hard masses. \emph{(ii)} All FP32 arms (dense, eager scan, compiled,
and the auto dispatcher, which follows the documented rectangle/coupling
rule) deviate together as the problem's conditioning worsens -- up to a few
percent at large $M$ and far more under near-hard masses, where
$1/\Phi'(b_\ell)$ is genuinely ill-conditioned; this is input-precision
sensitivity of the problem, not backend disagreement. \emph{(iii)} The
CUDA-fused path matches to $10^{-12}$ on small shapes but is the one
implementation with genuine limits: at large $M$ it deviates by up to
$\sim0.5$ in max-relative terms even in FP64, and under near-hard masses it
overflows outright (deviations of order $10^{10}$--$10^{14}$, identical in
FP32 and FP64, i.e.\ an algorithmic-path issue rather than precision).
Accuracy experiments therefore use the native implementation throughout
(Table~\ref{tab:recipe}), and the timing study drives all backends with
moderate softmax masses. Two further boundary facts: $N{=}M$ and $N{>}M$
evaluate correctly in every backend, while $N{=}1$ -- the trivial case
$t_1=\sum_i r_ix_i$ with no interior barriers
(Appendix~\ref{app:algorithm}) -- is currently rejected by the
implementations' input check rather than dispatched to this closed form.

\begin{table}[h]
\centering
\small
\setlength{\tabcolsep}{6pt}
\begin{tabular}{lcccc}
\toprule
 & \multicolumn{2}{c}{moderate masses} & \multicolumn{2}{c}{near-hard masses} \\
backend (vs dense-FP64) & small & large & small & large \\
\midrule
scan-FP64 & $4\times10^{-12}$ & $3\times10^{-11}$ & $3\times10^{-10}$ & $5\times10^{-8}$ \\
dense-FP32 & $2\times10^{-3}$ & $2\times10^{-2}$ & $0.16$ & $30$ \\
scan-FP32 (eager) & $3\times10^{-3}$ & $3\times10^{-2}$ & $0.37$ & $56$ \\
auto-FP32 / compile-FP32 & $2\times10^{-3}$ & $3\times10^{-2}$ & $0.16$ & $56$ \\
fused-FP64 & $3\times10^{-13}$ & $0.57$ & $2\times10^{-12}$ & $3\times10^{14}$ \\
fused-FP32 & $2\times10^{-3}$ & $0.57$ & $0.16$ & $3\times10^{14}$ \\
\bottomrule
\end{tabular}
\caption{\textbf{Measured backend agreement}: maximum relative forward
deviation from a dense-FP64 reference on identical inputs. ``Small'' covers
$M\le4096$ with $N\le512$ (plus $N{=}M$, $N{>}M$ cases); ``large'' covers
$M\ge65{,}536$ or $N\ge1024$ at $M\ge4096$. Gradient deviations follow the
same pattern. FP32 rows degrade together with the problem's conditioning
(shared input-precision sensitivity); the fused rows mark that backend's
genuine limits, which the accuracy experiments avoid by using the native
implementation.}
\label{tab:backend-agreement}
\end{table}

\paragraph{Agreement on real trained-scorer masses.}
The grid above uses synthetic masses. As a stronger control, we captured
the operator's \emph{actual} inputs -- masses, positions, and the learned
width -- from the trained histopathology checkpoints on all $98$ test
slides of both cohorts ($M$ from $1{,}068$ to $25{,}389$) by wrapping the
merge call, and repeated the agreement measurement on them
(Table~\ref{tab:realmass-agreement}). On every real configuration the
fused backend matches the scan-FP64 reference to $1.9\times10^{-12}$ in
FP64 and to $4.6\times10^{-4}$ in FP32, indistinguishable from native
FP32 itself ($8.2\times10^{-4}$). The large-$M$ deviations of
Table~\ref{tab:backend-agreement} reappear only in synthetic
million-token probes built by tiling real slide mass profiles (with the
learned width and local mass geometry preserved): up to $0.40$
max-relative at $N{=}M/16$, identical in FP32 and FP64. They are a
property of that backend's algorithmic path at extreme shapes, not of
realistic mass configurations; accuracy experiments use the native
implementation throughout, and the pooler comparison in
\S\ref{app:runtime-competitors} is restated for the native path as well.

\begin{table}[h]
\centering
\small
\setlength{\tabcolsep}{6pt}
\begin{tabular}{lccc}
\toprule
 & real slides ($98$) & tiled $\sim\!10^6$, $N{=}1024$ & tiled $\sim\!10^6$, $N{=}M/16$ \\
\midrule
scan-FP32  & $8.2\times10^{-4}$ & $2.2\times10^{-3}$ & $8.3\times10^{-2}$ \\
fused-FP32 & $4.6\times10^{-4}$ & $5.6\times10^{-2}$ & $4.0\times10^{-1}$ \\
fused-FP64 & $1.9\times10^{-12}$ & $5.6\times10^{-2}$ & $4.0\times10^{-1}$ \\
\bottomrule
\end{tabular}
\caption{\textbf{Backend agreement on real trained-scorer masses}:
maximum relative forward deviation from the scan-FP64 reference (which
agrees with dense-FP64 to $\le5\times10^{-8}$,
Table~\ref{tab:backend-agreement}), over the trained histopathology
models' own (mass, position, width) inputs on all test slides, and over
million-token probes tiled from the same profiles. Gradient deviations
follow the same pattern. Identical fused-FP32/FP64 values on the tiled
probes mark the algorithmic-path limit already documented above.}
\label{tab:realmass-agreement}
\end{table}

\subsection{Backend ablations and dispatch calibration}
\label{app:runtime-backend-ablation}

To separate algorithmic scaling from implementation overhead, we measure
four implementations of the same operator: explicit dense PyTorch, native
PyTorch scan in eager mode, the native path under
\texttt{torch.compile}, and the shape-aware dense/CUDA-fused implementation
used in Figure~\ref{fig:runtime-main-scaling}. Every arm receives the same
ordered coordinates $a_j=j$, normalized token masses, and FP32 features:
none is charged for input sorting. We use one GH200, $B=1$, $d=512$, five
warm-ups, 30 synchronized repeats, and a fresh process per shape. The
measured backward includes gradients through values, coordinates, and
score logits. First-call compilation time is recorded in the released CSVs
but excluded from the steady-state curves.

We sweep powers of two from $M=2^{11}$ through $2^{24}$ in two regimes:
$N=1024$ isolates input-length growth, whereas $N=M/16$ grows both dimensions.
The explicit dense arm was intentionally capped at $M=2^{20}$; the other
implementations were attempted through $2^{24}$. Failed cells are reported
as failures, not assigned an extrapolated latency. Figures
\ref{fig:app-backend-time} and \ref{fig:app-backend-memory} show the full
matched time and incremental-memory measurements.

The dense path forms an explicit $M\times N$ assignment matrix and is a
useful correctness oracle. Its $O(MN)$ work and storage can nevertheless be
competitive at modest sizes. At fixed $N=1024$, both dense and scan paths
are formally linear in $M$; the striking difference is the memory constant.
When $N=M/16$, the dense work instead grows quadratically while the scan
retains $O((M+N)d)$ work and storage. This is the decisive regime for the
scaling claim, rather than an apparent slope difference at fixed $N$.

\begin{figure*}[t]
\centering
\begin{minipage}{0.485\textwidth}
\centering
\begin{tikzpicture}
\begin{axis}[
  width=\linewidth, height=0.68\linewidth,
  xmode=log, log basis x=2, ymode=log, log basis y=2,
  xmin=2048, xmax=16777216, ymin=2, ymax=4096,
  xtick={2048,16384,131072,1048576,8388608,16777216},
  xticklabels={$2^{11}$,$2^{14}$,$2^{17}$,$2^{20}$,$2^{23}$,$2^{24}$},
  xlabel={input tokens $M$},
  ylabel={forward + backward [ms]}, title={$N=1024$},
  grid=both, grid style={gray!20},
  tick label style={font=\scriptsize},
  label style={font=\small}, title style={font=\small},
  legend columns=2,
  legend style={at={(0.5,1.02)},anchor=south,draw=none,font=\scriptsize},
  legend cell align=left,
]
\addplot[very thick,color=densepurple,mark=none] coordinates {(2048,4.322057) (4096,4.957128) (8192,5.532889) (16384,5.045958) (32768,6.820837) (65536,10.856902) (131072,18.399736) (262144,33.806018) (524288,65.042395) (1048576,128.515329)};
\addlegendentry{Dense PyTorch}
\addplot[very thick,color=patchblue,mark=none] coordinates {(2048,5.148069) (4096,5.801223) (8192,5.408961) (16384,5.294276) (32768,6.912650) (65536,10.909921) (131072,32.530262) (262144,61.025876) (524288,135.316077) (1048576,274.298890) (2097152,549.385081)};
\addlegendentry{Native PyTorch}
\addplot[only marks,mark=*,mark size=2.2pt,color=patchblue,forget plot] coordinates {(2097152,549.385081)};
\addplot[very thick,color=compilegreen,mark=none] coordinates {(2048,4.145955) (4096,3.989927) (8192,4.516873) (16384,4.097172) (32768,4.349421) (65536,4.636934) (131072,23.584179) (262144,46.375565) (524288,104.423210) (1048576,214.759136) (2097152,430.816370)};
\addlegendentry{\texttt{torch.compile}}
\addplot[only marks,mark=x,mark size=2.2pt,color=compilegreen,forget plot] coordinates {(2097152,430.816370)};
\addplot[very thick,color=lapmergeblack,mark=none] coordinates {(2048,4.967496) (4096,4.852891) (8192,5.209026) (16384,4.727311) (32768,7.095230) (65536,10.612173) (131072,17.210842) (262144,29.500665) (524288,55.093661) (1048576,111.038612) (2097152,218.465804) (4194304,439.278953) (8388608,899.086194) (16777216,1878.611598)};
\addlegendentry{CUDA fused}
\end{axis}
\end{tikzpicture}
\end{minipage}
\hfill
\begin{minipage}{0.485\textwidth}
\centering
\begin{tikzpicture}
\begin{axis}[
  width=\linewidth, height=0.68\linewidth,
  xmode=log, log basis x=2, ymode=log, log basis y=2,
  xmin=2048, xmax=16777216, ymin=2, ymax=4096,
  xtick={2048,16384,131072,1048576,8388608,16777216},
  xticklabels={$2^{11}$,$2^{14}$,$2^{17}$,$2^{20}$,$2^{23}$,$2^{24}$},
  xlabel={input tokens $M$},
  ylabel={forward + backward [ms]}, title={$N=M/16$},
  grid=both, grid style={gray!20},
  tick label style={font=\scriptsize},
  label style={font=\small}, title style={font=\small},
  legend columns=2,
  legend style={at={(0.5,1.02)},anchor=south,draw=none,font=\scriptsize},
  legend cell align=left,
]
\addplot[very thick,color=patchblue,mark=none] coordinates {(2048,5.491783) (4096,5.261294) (8192,5.267373) (16384,5.212783) (32768,12.453447) (65536,19.219659) (131072,34.936231) (262144,65.320172) (524288,142.582110) (1048576,289.544161) (2097152,579.156068)};
\addlegendentry{Native PyTorch}
\addplot[only marks,mark=*,mark size=2.2pt,color=patchblue,forget plot] coordinates {(2097152,579.156068)};
\addplot[very thick,color=compilegreen,mark=none] coordinates {(2048,3.647407) (4096,4.184560) (8192,3.947351) (16384,4.324253) (32768,8.076712) (65536,13.710494) (131072,26.480903) (262144,50.586750) (524288,113.282731) (1048576,230.125509) (2097152,463.101563)};
\addlegendentry{\texttt{torch.compile}}
\addplot[only marks,mark=x,mark size=2.2pt,color=compilegreen,forget plot] coordinates {(2097152,463.101563)};
\addplot[very thick,color=lapmergeblack,mark=none] coordinates {(2048,5.453631) (4096,5.340295) (8192,5.397173) (16384,5.552017) (32768,16.909069) (65536,15.511568) (131072,21.232045) (262144,34.569398) (524288,67.052067) (1048576,136.090290) (2097152,267.304737) (4194304,531.172556) (8388608,1067.260153)};
\addlegendentry{CUDA fused}
\addplot[only marks,mark=*,mark size=2.2pt,color=lapmergeblack,forget plot] coordinates {(8388608,1067.260153)};
\end{axis}
\end{tikzpicture}
\end{minipage}
\caption{Matched GH200 forward--backward times for the same LapMerge operator at $B=1$, $d=512$, FP32. Coordinates are already ordered. Each point is the median of 30 synchronized repeats after five warm-ups. The dense path was only attempted to $M=2^{20}$. A filled endpoint marks the last success before OOM; a cross marks the last success before a compiler failure. Lines are measured points, not extrapolations.}
\label{fig:app-backend-time}
\end{figure*}

\begin{figure*}[t]
\centering
\begin{minipage}{0.485\textwidth}
\centering
\begin{tikzpicture}
\begin{axis}[
  width=\linewidth, height=0.68\linewidth,
  xmode=log, log basis x=2, ymode=log, log basis y=2,
  xmin=2048, xmax=16777216, ymin=0.008, ymax=128,
  xtick={2048,16384,131072,1048576,8388608,16777216},
  xticklabels={$2^{11}$,$2^{14}$,$2^{17}$,$2^{20}$,$2^{23}$,$2^{24}$},
  xlabel={input tokens $M$},
  ylabel={incremental peak [GiB]}, title={$N=1024$},
  grid=both, grid style={gray!20},
  tick label style={font=\scriptsize},
  label style={font=\small}, title style={font=\small},
  legend columns=2,
  legend style={at={(0.5,1.02)},anchor=south,draw=none,font=\scriptsize},
  legend cell align=left,
]
\addplot[very thick,color=densepurple,mark=none] coordinates {(2048,0.084038) (4096,0.199305) (8192,0.367342) (16384,0.703425) (32768,1.266215) (65536,2.376171) (131072,4.596083) (262144,9.036150) (524288,17.532724) (1048576,35.034189)};
\addlegendentry{Dense PyTorch}
\addplot[very thick,color=patchblue,mark=none] coordinates {(2048,0.084038) (4096,0.199305) (8192,0.367342) (16384,0.703425) (32768,1.266215) (65536,2.376171) (131072,2.772135) (262144,5.526651) (524288,11.035683) (1048576,22.053741) (2097152,44.089878)};
\addlegendentry{Native PyTorch}
\addplot[only marks,mark=*,mark size=2.2pt,color=patchblue,forget plot] coordinates {(2097152,44.089878)};
\addplot[very thick,color=compilegreen,mark=none] coordinates {(2048,0.089884) (4096,0.148508) (8192,0.203257) (16384,0.312754) (32768,0.531748) (65536,0.969736) (131072,3.022127) (262144,6.026645) (524288,12.035677) (1048576,24.053753) (2097152,48.089878)};
\addlegendentry{\texttt{torch.compile}}
\addplot[only marks,mark=x,mark size=2.2pt,color=compilegreen,forget plot] coordinates {(2097152,48.089878)};
\addplot[very thick,color=lapmergeblack,mark=none] coordinates {(2048,0.084038) (4096,0.199305) (8192,0.367342) (16384,0.703425) (32768,1.266215) (65536,2.376171) (131072,0.331838) (262144,0.646054) (524288,1.274463) (1048576,2.531295) (2097152,5.044971) (4194304,10.072316) (8388608,20.127004) (16777216,40.236379)};
\addlegendentry{CUDA fused}
\end{axis}
\end{tikzpicture}
\end{minipage}
\hfill
\begin{minipage}{0.485\textwidth}
\centering
\begin{tikzpicture}
\begin{axis}[
  width=\linewidth, height=0.68\linewidth,
  xmode=log, log basis x=2, ymode=log, log basis y=2,
  xmin=2048, xmax=16777216, ymin=0.008, ymax=128,
  xtick={2048,16384,131072,1048576,8388608,16777216},
  xticklabels={$2^{11}$,$2^{14}$,$2^{17}$,$2^{20}$,$2^{23}$,$2^{24}$},
  xlabel={input tokens $M$},
  ylabel={incremental peak [GiB]}, title={$N=M/16$},
  grid=both, grid style={gray!20},
  tick label style={font=\scriptsize},
  label style={font=\small}, title style={font=\small},
  legend columns=2,
  legend style={at={(0.5,1.02)},anchor=south,draw=none,font=\scriptsize},
  legend cell align=left,
]
\addplot[very thick,color=patchblue,mark=none] coordinates {(2048,0.011971) (4096,0.071396) (8192,0.175932) (16384,0.703425) (32768,0.723854) (65536,1.447691) (131072,2.895384) (262144,5.790764) (524288,11.581541) (1048576,23.163084) (2097152,46.326172)};
\addlegendentry{Native PyTorch}
\addplot[only marks,mark=*,mark size=2.2pt,color=patchblue,forget plot] coordinates {(2097152,46.326172)};
\addplot[very thick,color=compilegreen,mark=none] coordinates {(2048,0.065951) (4096,0.110416) (8192,0.154425) (16384,0.312754) (32768,0.786348) (65536,1.572697) (131072,3.145384) (262144,6.290770) (524288,12.581545) (1048576,25.163088) (2097152,50.326174)};
\addlegendentry{\texttt{torch.compile}}
\addplot[only marks,mark=x,mark size=2.2pt,color=compilegreen,forget plot] coordinates {(2097152,50.326174)};
\addplot[very thick,color=lapmergeblack,mark=none] coordinates {(2048,0.011971) (4096,0.071396) (8192,0.175932) (16384,0.703425) (32768,0.113814) (65536,0.227649) (131072,0.455302) (262144,0.910609) (524288,1.821223) (1048576,3.642451) (2097152,7.284907) (4194304,14.569819) (8388608,29.139649)};
\addlegendentry{CUDA fused}
\addplot[only marks,mark=*,mark size=2.2pt,color=lapmergeblack,forget plot] coordinates {(8388608,29.139649)};
\end{axis}
\end{tikzpicture}
\end{minipage}
\caption{Incremental peak CUDA allocation for the same matched sweeps. Allocation is measured above the already live input tensors, not as total device occupancy. At fixed $N$, the dense path also grows linearly with $M$ but has a much larger constant; $N=M/16$ exposes the different dependence on both sequence dimensions. Endpoint symbols have the meaning of Figure~\ref{fig:app-backend-time}.}
\label{fig:app-backend-memory}
\end{figure*}

\begin{table*}[t]
\centering
\small
\setlength{\tabcolsep}{7pt}
\begin{tabular}{lrrrr}
\toprule
 & \multicolumn{2}{c}{$N=1024$} & \multicolumn{2}{c}{$N=M/16$} \\
Implementation & time [ms] & peak [GiB] & time [ms] & peak [GiB] \\
\midrule
Explicit dense & 128.52 & 35.03 & --- & --- \\
Native PyTorch & 274.30 & 22.05 & 289.54 & 23.16 \\
\texttt{torch.compile} & 214.76 & 24.05 & 230.13 & 25.16 \\
CUDA fused & \textbf{111.04} & \textbf{2.53} & \textbf{136.09} & \textbf{3.64} \\
\bottomrule
\end{tabular}
\caption{Matched GH200 measurements at $M=2^{20}$, $B=1$, $d=512$, FP32.
Time is steady-state forward plus backward. Peak is incremental CUDA allocation
above live inputs. The explicit dense reference was not swept at $N=M/16$;
its omission is a scope limit, not an OOM result.}
\label{tab:backend-ablation}
\end{table*}

The crossover is visible rather than assumed. At $N=1024$ and $M=2^{16}$,
the fused dispatcher still uses its dense branch; it takes $10.61$ ms versus
$10.86$ ms for forced dense. When $M$ doubles to $2^{17}$, the dispatcher
switches to the CUDA scan: incremental allocation falls from $2.38$ to
$0.33$ GiB even as the input grows, and latency remains close to the dense
reference ($17.21$ versus $18.40$ ms). This discontinuity is an implementation
switch, not a change to the LapMerge operator. For $N=M/16$ the branch changes
earlier, between $M=2^{14}$ and $2^{15}$, because the output length also grows.

At $M=2^{20}$, Table~\ref{tab:backend-ablation} shows that fused CUDA is
$2.47\times$ faster than eager native PyTorch for fixed $N$ and $2.13\times$
faster for $N=M/16$; its peak incremental allocation is $2.53$ and $3.64$
GiB, respectively. Below the crossover, \texttt{torch.compile} can be the
fastest path, so fusion is not presented as a universal small-shape win.
From $M=2^{20}$ to $2^{23}$, an eightfold input increase raises fused latency
by $8.10\times$ at fixed $N$ and $7.84\times$ at fixed ratio. Peak allocation
grows by approximately eightfold in both regimes. This is the empirical
signature expected of the scan/readout factorization, although finite-range
timings alone do not prove an asymptotic bound.

The native eager runs are out of memory from $M=2^{22}$ in both regimes.
\texttt{torch.compile} also reaches $2^{21}$, but its larger cells fail in
Triton code generation because a decoupled-lookback temporary is inferred
as both \texttt{int64} and \texttt{int32}; this is a compiler failure, not a
LapMerge OOM. The fused implementation reaches $M=2^{24}$ at fixed $N$ and
$M=2^{23}$ at fixed ratio, where the next attempted point is OOM.

\subsection{Matched comparison with learned poolers}
\label{app:runtime-competitors}

We compare the same shape-aware LapMerge dispatcher with TokenLearner and
PatchMerger. The latter two use scaled dot-product attention (SDPA) to compute
their published normalized pooling maps without storing an $M\times N$ matrix;
this is a memory-efficient rewrite, not a change to either pooler's output.
Concretely, the implemented variants are: PatchMerger $=$ LayerNorm followed
by a bias-free $d\times N$ projection, softmax over tokens; TokenLearner $=$
LayerNorm, Linear$(d,64)$, GELU, Linear$(64,N)$, softmax over tokens; the
SDPA forms pass the projection weights as queries with an explicit
$\mathrm{scale}{=}1$, so the bias (constant over positions) cancels in the
softmax. Equivalence is verified, not assumed: in FP64 the published and
SDPA forms agree to $\le10^{-14}$ relative error in the forward pass and in
every gradient across shapes up to $M{=}4096$
(\texttt{test\_sdpa\_equivalence.py}); in FP32 the SDPA kernel
introduces a $\sim3\times10^{-4}$ relative deviation, a backend-level
numerical effect, not a mathematical difference. These statements cover this
implementation, not every published TokenLearner variant.

All three reducers use the GH200, FP32, $B=1$, $d=512$, five warm-ups, 30
synchronized repeats, a fresh process per shape, and a 600-second whole-cell
limit. The input shape, mean-square loss, full backward pass, and timing rules
are matched, but the parameterizations are not identical: LapMerge receives
score logits and times their normalization, barriers, and readout, while the
poolers also time their built-in scoring/projection layers. External encoders,
LapMerge's task-specific logit predictor, and downstream networks are excluded.

\begin{table*}[h!]
\centering\small
\setlength{\tabcolsep}{7pt}
\begin{tabular}{lrrrr}
\toprule
 & \multicolumn{2}{c}{$N=1024$} & \multicolumn{2}{c}{$N=M/16$} \\
Reducer & time [ms] & peak [GiB] & time [ms] & peak [GiB] \\
\midrule
LapMerge (fused) & \textbf{111.04} & \textbf{2.53} & \textbf{136.09} & \textbf{3.64} \\
TokenLearner (SDPA) & 336.16 & 4.29 & 7158.16 & 4.29 \\
PatchMerger (SDPA) & 453.95 & 4.02 & 14750.05 & 4.38 \\
\bottomrule
\end{tabular}
\caption{Matched reducer measurements at $M=2^{20}$ on one GH200, $B=1$,
$d=512$, FP32. Times include forward, mean-square loss, and backward; peaks
are incremental CUDA allocations above live inputs. Each cell is a median of
30 synchronized repeats after five warm-ups.}
\label{tab:matched-poolers}
\end{table*}

Table~\ref{tab:matched-poolers} illustrates the difference between the two
regimes. These are reducer-core speedups, not whole-system ones: at fixed
$N$, the LapMerge core is $3.03\times$ faster than TokenLearner-SDPA
and $4.09\times$ faster than PatchMerger-SDPA. When $N=M/16$, the gaps become
$52.6\times$ and $108.4\times$, respectively: SDPA avoids a materialized
assignment map but still performs $MN$ content-to-output interactions.
Because the fused backend has the numerical limits documented in
Table~\ref{tab:backend-agreement}, we also state the same comparison for
the always-faithful native implementation
(Table~\ref{tab:backend-ablation}): at $N=M/16$ it takes $289.54$ ms,
i.e.\ $24.7\times$ and $50.9\times$ faster than the two poolers -- the
order-of-magnitude advantage in the growing-output regime does not depend
on the fused path. At fixed $N=1024$ the native core ($274.30$ ms) is
$1.2\times$/$1.7\times$ ahead, and the further $2.47\times$ there is what
fusion buys. The
memory comparison in Table~\ref{tab:matched-poolers} is more modest,
because the optimized poolers also avoid the quadratic activation map. The
large advantage here is primarily in computation, not a claim that SDPA is
quadratic in stored activations.

Both SDPA poolers time out at the next attempted ratio point, $M=2^{21}$;
LapMerge continues through $M=2^{23}$ in that regime. At fixed $N=1024$,
PatchMerger times out at $M=2^{21}$. TokenLearner completes that point, but
its median jumps from $0.336$ s at $M=2^{20}$ to $14.93$ s at $M=2^{21}$,
then the $M=2^{22}$ run fails with PyTorch's
\texttt{value.stride(2) overflows} error. The jump and error are reported as
observed behavior of this SDPA implementation, not attributed to a particular
kernel switch without a separate diagnostic. A timeout is not a measured
latency at the missing point.

\paragraph{Scorer and routing overhead (end-to-end control).}
The main comparison times the reducer core with precomputed token masses.
As a control, Table~\ref{tab:e2e-scorer} measures the full LapMerge module
end to end -- a lightweight scorer (Linear$(512,256)$+GELU+Linear$(256,1)$),
LapSum soft top-$k$ routing, and the merge -- under the same protocol
(GH200, $B{=}1$, $d{=}512$, FP32, forward+backward, five warm-ups, 30
repeats). The scorer grows linearly with $M$ and the routing cost is
dominated by a nearly constant sort overhead, so together they add about
$11\%$ of the fused end-to-end time at $M{=}2^{20}$ and about $7\%$ at
$M{=}2^{23}$; only at small $M$ ($2^{14}$) is the routing's fixed cost
comparable to the merge itself. The reducer-core comparison above therefore
does not hide a dominant upstream cost.

\begin{table}[h]
\centering
\small
\setlength{\tabcolsep}{5pt}
\begin{tabular}{rrrrrrr}
\toprule
$M$ & $N$ & scorer & routing & merge (eager) & merge (fused) & total (fused) \\
\midrule
$2^{14}$ & $1024$   & $0.8$  & $5.7$ & $6.2$   & $6.1$    & $10.4$ \\
$2^{17}$ & $1024$   & $1.3$  & $5.6$ & $33.3$  & $18.5$   & $24.7$ \\
$2^{17}$ & $8192$   & $1.3$  & $5.6$ & $35.8$  & $22.8$   & $26.0$ \\
$2^{20}$ & $1024$   & $7.8$  & $5.3$ & $274.2$ & $111.0$  & $122.5$ \\
$2^{20}$ & $65536$  & $7.8$  & $5.2$ & $291.3$ & $135.6$  & $147.1$ \\
$2^{23}$ & $1024$   & $60.8$ & $7.2$ & --      & $898.3$  & $979.8$ \\
$2^{23}$ & $524288$ & $61.1$ & $7.3$ & --      & $1059.4$ & $1141.2$ \\
\bottomrule
\end{tabular}
\caption{\textbf{End-to-end decomposition of the full LapMerge module}
(median ms, forward+backward, GH200, $B{=}1$, $d{=}512$, FP32). Components
are timed separately; \emph{total} runs the whole
scorer$\to$routing$\to$merge chain with one backward. Dashes: the eager
merge at $M{=}2^{23}$ exceeds memory in this control harness, which keeps
extra per-component buffers; the main benchmark covers that point.}
\label{tab:e2e-scorer}
\end{table}

\begin{table*}[t]
\centering
\caption{\textbf{Historical published-versus-SDPA control on one GH200 (ms).}
All methods use $B=1$, $d=512$, and FP32.  Published PatchMerger and
TokenLearner materialize their $M\times N$ assignment maps.  Their SDPA variants
are mathematically equivalent memory-efficient implementations that do not
materialize these maps.  LapMerge uses its shape-aware dense/fused-CUDA
dispatcher.  Entries are warmed medians over 20 iterations for the competitors
and 100 iterations for LapMerge.}
\label{tab:gh200-merge-comparison}
\small
\setlength{\tabcolsep}{5pt}
\begin{tabular}{rr|rr|rr|r}
\toprule
& & \multicolumn{2}{c|}{PatchMerger} &
\multicolumn{2}{c|}{TokenLearner} & \textbf{LapMerge} \\
$M$ & $N$ & published & SDPA & published & SDPA & fused hybrid \\
\midrule
$65{,}536$    & $1{,}024$ & 120.22 & 28.01 & 118.52 & 20.64 & \textbf{10.86} \\
$65{,}536$    & $2{,}048$ & 123.81 & 37.45 & 123.38 & 22.55 & \textbf{15.10} \\
$262{,}144$   & $1{,}024$ & 475.24 & 110.60 & 473.75 & 80.46 & \textbf{29.51} \\
$262{,}144$   & $2{,}048$ & 493.98 & 151.90 & 490.19 & 89.33 & \textbf{29.64} \\
$1{,}048{,}576$ & $1{,}024$ & 1932.20 & 456.79 & 1918.33 & 322.97 & \textbf{112.61} \\
$1{,}048{,}576$ & $2{,}048$ & 2019.88 & 604.09 & 1999.63 & 363.95 & \textbf{112.04} \\
\bottomrule
\end{tabular}
\end{table*}

\paragraph{Memory-efficient baseline construction.}
The SDPA implementations compute exactly the same normalized pooling as their
published counterparts.  PatchMerger's learned projection columns are the
queries, normalized input tokens are keys, and the original tokens are values.
For TokenLearner, the final linear layer of its scoring MLP provides the
queries and the preceding hidden features provide the keys; its final bias is
constant over input positions and therefore cancels inside the softmax. SDPA
removes the explicit quadratic activation map but retains the
$\mathcal{O}(MNd)$ arithmetic. Each cell in this older control ran in an
isolated process with a 120-second wall-clock limit. No SDPA cell in that
control reached this limit; both published dense implementations ran out of
memory at $(M,N)=(1{,}048{,}576,4{,}096)$. The memory ordering observed in
this older run does not describe the newer fused LapMerge sweep reported in
Table~\ref{tab:matched-poolers}.

Table~\ref{tab:gh200-merge-comparison} is an older, separate control showing
how much the published dense poolers benefit from mathematically equivalent
SDPA rewrites. It used 20 warmed iterations for the poolers and 100 for
LapMerge; we do not compute cross-method speedup factors from that table.
Figure~\ref{fig:runtime-main-scaling} and Table~\ref{tab:matched-poolers}
instead use the matched 5/30 rerun.

\subsection{Scaling and stress tests}
\label{app:runtime-results}

Table~\ref{tab:experiment-time-h100} is a separate, earlier H100 stress test
of feature dimension at $N=0.1M$. It is useful context for the cost of value
aggregation, but it uses another accelerator and timing protocol and is not
spliced into the GH200 curves of Figure~\ref{fig:runtime-main-scaling} or the
matched backend ablation above.

\begin{table}[h!]
\centering
\caption{
Independent H100 ($80$ GB) dimensional stress test of the complete reducer,
forward plus backward, at $N=0.1M$. Times are warmed medians in milliseconds
from a \texttt{torch.compile} run. This is not the matched GH200 protocol;
the final column shows the step ratio at $d=512$.}
\label{tab:experiment-time-h100}
\small
\setlength{\tabcolsep}{7pt}
\begin{tabular}{rrrrl}
\toprule
$M$ & $d=512$ & $d=1024$ & $d=2048$ & step ratio ($d=512$) \\
\midrule
$10^3$ & $4.8$ & $4.7$ & $7.1$ & --- \\
$10^4$ & $5.9$ & $6.7$ & $10.7$ & $1.2\times$ \\
$10^5$ & $22.0$ & $40.4$ & $75.7$ & $3.7\times$ \\
$10^6$ & $222.5$ & $433.5$ & \textsc{oom} & $\mathbf{10.1\times}$ \\
$3\cdot10^6$ & $688.3$ & $\cdot$ & $\cdot$ & $3.1\times$ \\
$10^7$ & \textsc{oom} & $\cdot$ & $\cdot$ & --- \\
\bottomrule
\end{tabular}
\end{table}

The main-paper curve now uses the ordered-coordinate, shape-aware
dense/CUDA-fused LapMerge path. At fixed $N=1024$, it completes both
directions at $M=2^{24}$. At $N=M/16$, it completes through $M=2^{23}$
and is out of memory at the next attempted point. Table~\ref{tab:runtime-long}
gives the measured forward and forward--backward values at the large end
of both sweeps. In particular, we do not infer a forward-only time for
the failed $M=2^{24},N=M/16$ cell.

\begin{table*}[t]
\centering\small
\setlength{\tabcolsep}{7pt}
\begin{tabular}{llrrrr}
\toprule
Regime & $M$ & $N$ & Forward [ms] & Forward + backward [ms] & Peak [GiB] \\
\midrule
$N=1024$ & $2^{22}$ & $2^{10}$ & 169.09 & 439.28 & 10.07 \\
$N=1024$ & $2^{23}$ & $2^{10}$ & 360.85 & 899.09 & 20.13 \\
$N=1024$ & $2^{24}$ & $2^{10}$ & 807.38 & 1878.61 & 40.24 \\
$N=M/16$ & $2^{22}$ & $2^{18}$ & 150.64 & 531.17 & 14.57 \\
$N=M/16$ & $2^{23}$ & $2^{19}$ & 299.83 & 1067.26 & 29.14 \\
$N=M/16$ & $2^{24}$ & $2^{20}$ & --- & OOM & --- \\
\bottomrule
\end{tabular}
\caption{Large-shape GH200 probes for the CUDA-fused LapMerge dispatcher,
with ordered inputs, $B=1$, $d=512$, and FP32. The failed ratio cell has
no reported latency or allocation; no forward-only measurement is inferred.
Peak allocation is incremental above live inputs.}
\label{tab:runtime-long}
\end{table*}

\subsection{Measurement protocol and interpretation}
\label{app:runtime-protocol}

The new GH200 backend sweeps use five warm-ups and 30 synchronized repeats
per shape, including a mean-square loss and backward pass. Each point runs
in a fresh process with a 600-second whole-cell limit. First-call/JIT time
is logged separately and excluded from steady-state medians; timing IQRs,
incremental peak CUDA allocation, and failure statuses are retained in the
CSV files. The fixed-output and fixed-ratio sweeps use identical inputs
and the same protocol across the four LapMerge implementations. Allocation
is measured relative to already live tensors, so it is not total device
occupancy.

The earlier H100 dimension sweep and the published-versus-SDPA table use
different iteration counts and are intentionally separate from the matched
GH200 figures. The new competitor sweep shares the 5/30 timing settings and
600-second whole-cell limit with the LapMerge sweeps. A timeout includes
warm-up and all repeated measurements; it is not an observed steady-state
latency at the missing point. Open markers in Figure
\ref{fig:runtime-main-scaling} sit on the last successful point before a
timeout; filled markers precede OOM or an execution failure. Neither marker
should be read as a measurement of the failed cell.

\section{When Does Merging Beat Selection?}
\label{sec:vs-select}

This appendix isolates the scientific question that is distinct from runtime:
when is it better to preserve information by merging tokens rather than to
discard all but the highest-scoring ones? We use matched output budgets on
Stanford Cars and arXiv. The two settings deliberately differ in how strongly
task-relevant information follows sequence order: spatially adjacent patches on
Cars versus distributed evidence in long documents. Runtime and implementation
choices are reported separately in Appendix~\ref{app:runtime-comparison}; the
tables below therefore focus on downstream accuracy and on the structure of
the reducer. The section closes with a direct order intervention that
isolates the mechanism (\S\ref{sub:order-intervention}) and with the list of
planned confirmation experiments
(\S\ref{sub:selection-planned-experiments}).

\subsection{Matched evaluation protocol}
\label{sub:selection-merging-protocol}

All rows use the same upstream representation, head, optimizer, and output
budget within a dataset. ``Soft'' means that the differentiable reducer is
present during evaluation. ``Hard'' removes the reducer and retains the
$N$ highest-scoring raw tokens, which is a useful control for distinguishing
the scorer from the aggregation operator. A one-sweep model uses one ordered
LapMerge pass and returns $N$ tokens. The two-sweep model uses two independent
passes of $10$ tokens each, followed by concatenation; its total output budget
is therefore also $20$, but its reducer cost is approximately doubled. This
distinction is made explicit in the Cars table rather than hidden in the
parameter count.

The Cars numbers below are single-run screening results from the established
training protocol; the planned multi-seed confirmation is described after the
tables. The arXiv comparison is budget-matched: every arm is trained at the
budget at which it is evaluated. This is intentionally conservative for
cross-budget generalization, but it means that the arXiv table should be read
as a comparison of operating points, not as evidence that one shared model
dominates all budgets.

\subsection{Spatially ordered tokens: Stanford Cars}
\label{sub:selection-cars}

Features are the $196$ patch tokens of a frozen ViT-S/16 encoder
(\texttt{timm}'s \texttt{vit\_small\_patch16\_224}, ImageNet-pretrained) at
$224^2$ resolution; the reducer and classification head are trained on top.
On Stanford Cars (Table~\ref{tab:cars-reducer-baselines}), \our{} improves over
top-$N$ selection at the same $20$-token budget. A single sweep reaches
$75.46\%$ versus $74.10\%$ for selection, while two complementary sweeps
reach $76.35\%$, within $0.36$ points of the $76.71\%$ dense baseline. The
learned poolers perform worse: TokenLearner reaches $73.60\%$ and PatchMerger
$71.83\%$. Thus, on this spatially structured task, retaining information
through merging is more effective than discarding tokens, while the particular
structure of \our{} provides an additional advantage over generic pooling.
Removing the operator at inference by retaining the highest-scoring raw
tokens reduces accuracy to $67.88\%$. We read this as a \emph{head-dependence
test}: the classifier was trained on merged representations, so the drop
measures how strongly the trained head relies on them -- it is not the
accuracy of an equivalently trained hard-selection model, which reaches
$74.10\%$.

The Cars result supports the ordered-merge hypothesis: neighboring visual
patches often contain complementary evidence, so replacing a group by its
mass-preserving representative can retain information that hard selection
throws away. The two-sweep result should not be interpreted as a free increase
in capacity, however, since it uses two reducer passes. The relevant control
is consequently the one-sweep row at the same output budget.

\paragraph{Budget dose-response.}
Repeating the one-sweep-versus-selection comparison at
$N\in\{10,20,40\}$ with three seeds each (Table~\ref{tab:cars-budgets})
shows that the merge advantage is a tight-budget phenomenon: $+3.4$
points at $N{=}10$, $+0.9$ at the capacity-matched $N{=}20$, and $-0.8$
at $N{=}40$, where selection alone already retains enough evidence (at
$77.3$ it even edges past the $196$-token dense ceiling of $76.71$).
This mirrors the POPE pattern -- decisive at the $10\%$ budget, within
noise at $30\%$ (\S\ref{app:pope-sweep}): merging pays where compression
is aggressive. The operator-free column shows the same trend from the
head's side: its reliance on the merge fades as the budget grows
(soft$-$hard gap $15.4\to7.6\to3.3$).

\begin{table}[h]
\centering
\small
\setlength{\tabcolsep}{6pt}
\begin{tabular}{lccc}
\toprule
 & \our{} one sweep (soft) & operator-free (hard) & Top-$N$ selection \\
\midrule
$N{=}10$ & $\mathbf{74.91{\pm}0.60}$ & $59.51{\pm}0.66$ & $71.47{\pm}0.31$ \\
$N{=}20^{\dagger}$ & $\mathbf{75.10{\pm}0.28}$ & $67.53{\pm}0.26$ & $74.20{\pm}0.44$ \\
$N{=}40$ & $76.52{\pm}0.40$ & $73.25{\pm}0.27$ & $\mathbf{77.28{\pm}0.13}$ \\
\bottomrule
\end{tabular}
\caption{\textbf{Budget dose-response on Stanford Cars} (top-1, mean
$\pm$sd over three shared seeds, original protocol). $^{\dagger}$The
$N{=}20$ row is the capacity-matched frozen-frame configuration of
Table~\ref{tab:cars-reducer-baselines}; the full one-sweep single run
reaches $75.46$ there.}
\label{tab:cars-budgets}
\end{table}

Figure~\ref{fig:cars-mass} visualizes the routing mass learned by the
reducer on two test images: mass concentrates on discriminative car
parts, while background patches receive near-zero mass, mirroring the
attribution behaviour measured quantitatively in histopathology
(Appendix~\ref{app:data-histo}).

\begin{figure}[t]
\centering
\includegraphics[width=0.49\linewidth]{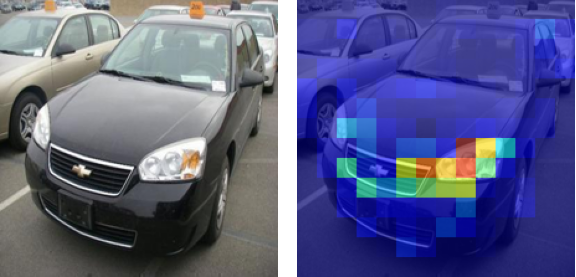}\hfill
\includegraphics[width=0.49\linewidth]{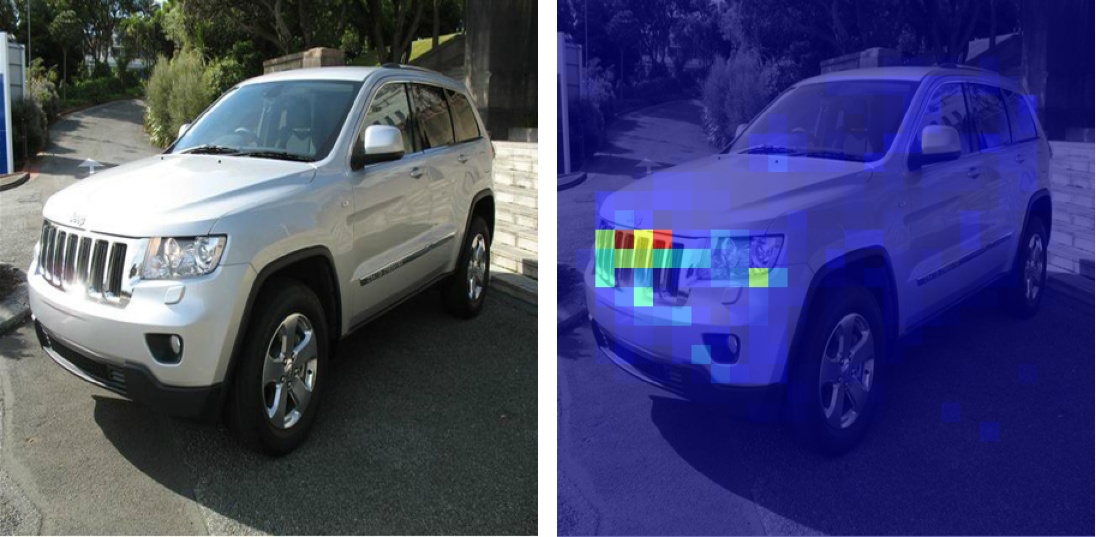}
\caption{\textbf{Learned routing mass on Stanford Cars.} Each pair shows
the input image (left) and the per-token routing mass $r_i$ of the trained
reducer as a heatmap (right; normalized per image). Mass concentrates on
discriminative car parts -- grille, badge, and headlight region -- while
background patches receive near-zero mass. Left pair: the standard
$196$-token single-crop configuration of
Table~\ref{tab:cars-reducer-baselines}; right pair: a $2\times2$
multi-crop variant ($784$ tokens) showing the same behaviour on a finer
grid.}
\label{fig:cars-mass}
\end{figure}

\subsection{Distributed evidence and order ablations}
\label{sub:selection-arxiv}

\paragraph{Protocol.} The arXiv task is 11-way topic classification on the
public \texttt{ccdv/arxiv-classification} dataset with its released
train/test split. Features come from a frozen
\texttt{allenai/longformer-base-4096} checkpoint with its own tokenizer,
truncated to $M{=}1024$ tokens; the classifier is a linear head on the
reduced sequence. Every arm trains for $30$ epochs with AdamW
(lr $3\times10^{-3}$, weight decay $0.2$, batch $64$) at its own output
budget $N\in\{8,16,32,64\}$.

The picture changes on arXiv (Table~\ref{tab:vs-poolers-arxiv}). \our{} exceeds
top-$N$ selection by $5.6$--$6.2$ points across all evaluated budgets, but does
not consistently outperform learned poolers. Across eight budget-by-method
comparisons, the poolers lead in five, \our{} leads in two, and one comparison
is tied; the largest gap in either direction is $1.0$ point. These are
single-seed runs, and the only measured spread ($\sigma\approx0.46$, from three
independent trainings of one arm at one budget) does not estimate the
variance of differences between other arms, so we attach no significance claim.
We state plainly that \our{} shows no consistent accuracy advantage over
learned pooling in this setting; it provides the same accuracy range while
retaining its $\mathcal{O}((M{+}N)d)$ cost.

Together, the results separate two effects of merging. Aggregating tokens
instead of discarding them is particularly valuable when relevant information
is distributed across the sequence, while the additional benefit of
\our{}'s ordered, equal-mass aggregation appears when sequence order itself
carries task structure. On Cars, this yields a $1.86$-point improvement over
the strongest learned pooler; on arXiv, the corresponding advantage
disappears. A train-budget/test-budget robustness matrix is left as a future
control; since such an evaluation runs the model away from its training
budget, we would treat it as a robustness analysis rather than as an
additional matched merging-versus-selection comparison.

\begin{table}[t]
\centering
\small
\setlength{\tabcolsep}{4pt}
\begin{tabular}{lccccc}
\toprule
Reducer  & Tokens & Params & Reducer cost & Soft & Hard \\
                                 & to head &        &              & (\%) & (\%) \\
\midrule
Dense transformer, no reduction  & $196$ & --- & none & $76.71$ & --- \\
\midrule
Top-$N$ selection                & $20$ & $52$\,k & $\mathcal{O}(Md)$ + top-$N$ & $74.10$ & $\mathbf{74.10}^{\dagger}$ \\
TokenLearner                     & $20$ & $53$\,k & $\mathcal{O}(MNd)$ & $73.60$ & n/a \\
PatchMerger                      & $20$ & $8$\,k  & $\mathcal{O}(MNd)$ & $71.83$ & n/a \\
ToMe post-encoder merge          & $20$ & $0$     & $\mathcal{O}(M^2d)$ & $69.87$ & n/a \\
\midrule
\rowcolor{orange!10}
\our{}, two sweeps               & $2\times10$ & $0.20$\,M & $2\mathcal{O}((M{+}N)d)$ & $\mathbf{76.35}$ & $36.02$ \\
\rowcolor{orange!10}
\our{}, one sweep                & $20$ & $0.20$\,M & $\mathcal{O}((M{+}N)d)$ & $75.46$ & $67.88$ \\
\midrule
\rowcolor{orange!10}
\our{}, one sweep, frozen frame  & $20$ & $52$\,k & $\mathcal{O}((M{+}N)d)$ & $75.10{\pm}0.28$ & $67.53{\pm}0.26$ \\
Top-$N$ selection (same seeds)   & $20$ & $52$\,k & $\mathcal{O}(Md)$ + top-$N$ & $74.20{\pm}0.44$ & $74.20{\pm}0.44$ \\
\bottomrule
\end{tabular}
\caption{\textbf{Matched pre-head reducers on Stanford Cars.} ``Reducer
params'' excludes the $2.44$\,M shared by every row; ``reducer cost'' is
analytic. \emph{Soft} runs the reducer as trained; \emph{hard} keeps the $N$
highest-scoring raw tokens and removes it entirely (a head-dependence test,
not a hard-trained model).
$^{\dagger}$Selection coincides on both by construction; ``n/a'' marks
poolers, which expose no operator-free path. The ToMe row reaches $20$ tokens
by iterating bipartite matching roughly four times with no transformer layer
in between, well outside its published schedule; read it as training-free
similarity merging pushed out of its design regime. The last block is the
capacity-matched control (mean$\pm$sd over three shared seeds): the read-out
frame $U$ is frozen at identity, so the reducer matches selection's
$\approx52$k parameters while sharing its scorer -- the merge advantage
survives equalized capacity. Protocol in
Appendix~\ref{sub:selection-merging-protocol}.}
\label{tab:cars-reducer-baselines}
\end{table}

\begin{table}[t]
\centering
\small
\setlength{\tabcolsep}{6pt}
\begin{tabular}{lccccc}
\toprule
Reducer (arXiv, frozen Longformer)         & Reducer & \multicolumn{4}{c}{Output budget $N$ for $M=1024$} \\
\cmidrule(lr){3-6}
\emph{every arm trained at its own budget} & params  & $8$ & $16$ & $32$ & $64$ \\
\midrule
Dense (no reduction)                       & ---       & \multicolumn{4}{c}{$87.6$} \\
\midrule
TokenLearner \citep{ryoo2021tokenlearner}  & $102$\,k  & $\mathbf{88.28}$ & $87.88$ & $\mathbf{88.12}$ & $\mathbf{87.68}$ \\
PatchMerger \citep{renggli2022patchmerger} & $14$\,k   & $88.00$ & $\mathbf{88.28}$ & $87.08$ & $87.48$ \\
Top-$N$ selection                          & $100$\,k  & $81.48$ & $81.44$ & $82.12$ & $81.80$ \\
\midrule
\rowcolor{orange!10}
\our{} (scan coordinate)                   & $0.75$\,M & $87.28$ & $87.68$ & $87.72$ & $\mathbf{87.68}$ \\
\rowcolor{orange!10}
\our{} (published coordinate)              & $0.69$\,M & $86.24$ & $86.80$ & $87.24$ & $86.48$ \\
\bottomrule
\end{tabular}
\caption{\textbf{Budget-matched training on arXiv} (single seed, soft deploy):
every entry is trained at the budget it is evaluated at, which removes the
extrapolation penalty the poolers would otherwise pay and is conservative for
us. The scan coordinate is the parameterization of \S\ref{sec:method} and is
what the text calls \our{}; the published-coordinate row is an ablation.
For scale only: three independent trainings of the
published-coordinate arm give $86.24/85.84/86.76$ at $N{=}8$, i.e.\
$\sigma\approx0.46$ for that one arm; all other entries are single runs.
Against \our{} the poolers lead in five of eight cells, trail in two and tie
in one, with a largest gap of $1.0$ point.}
\label{tab:vs-poolers-arxiv}
\end{table}

\subsection{Order intervention: permuting the scan on LRA-Image}
\label{sub:order-intervention}

\paragraph{The order is the mechanism, and permuting it is the test.}
The comparison so far rests on a contrast \emph{between} datasets -- \our{}
wins on a patch grid (\S\ref{sub:selection-cars}) and does not on a topic
label (\S\ref{sub:selection-arxiv}) -- which changes modality, backbone,
length and task at once and then credits one of them. Here it is an
intervention instead. We permute only the scan coordinate the barriers
walk, applied after the embedding so that every token keeps its content
and its positional encoding, and hold the rest of the recipe fixed; the
sole thing that changes is which tokens share a bin. (This ablation uses
its own shorter training protocol, so absolute numbers are not directly
comparable to Table~\ref{tab:lra_10pct}.) Over three seeds,
three locality-preserving orders stay within $3.5$ points of one another
-- raster $48.0{\pm}1.3$, column-major $45.4{\pm}0.1$, a Hilbert curve
$48.9{\pm}0.3$ -- so nothing here is special about the raster axis
specifically. A random permutation, by contrast -- one draw, fixed for
the whole run, so the Transformer is free to learn the relabelling --
costs $15.5$ points ($\to32.5{\pm}0.9$) and lands the merge \emph{inside}
the order-free band it otherwise beats by ten.

Three controls hold that
reading up, built on different mechanisms so that a leak in the harness
would have to fool all of them: TokenLearner, PatchMerger and learned
top-$N$ selection each return their own number under the permutation,
unchanged to two decimals in both mean and spread across the same three
seeds (Table~\ref{tab:order}). So the permutation reaches only reducers
that walk the scan -- not the data, the positions, or anything else. What
it reaches, it reaches hard, and not only \our{}: uniform pooling, whose
segments are equally positional, loses $15.7$. ToMe does not, at $+0.2$,
because its bipartite matching groups by content and the scan order it is
handed is arbitrary to begin with. The intervention therefore isolates
something narrower, and more useful, than ``the merge needs an order'':
what the order buys is the advantage a positional partition holds over
content-based grouping, and taking it away takes away exactly that.

\begin{table}[h]
\centering
\small
\setlength{\tabcolsep}{5pt}
\begin{tabular}{lrrrrr}
\toprule
reducer & raster & col-major & Hilbert & rnd (fixed) & rnd (per img) \\
\midrule
\our{} & $48.01{\pm}1.34$ & $45.42{\pm}0.14$ & $48.96{\pm}0.26$ & $32.48{\pm}0.89$ & $35.36{\pm}0.80$ \\
uniform pooling & $45.15{\pm}0.27$ & $43.67{\pm}0.45$ & $46.71{\pm}1.14$ & $29.46{\pm}0.82$ & $28.11{\pm}0.28$ \\
ToMe & $37.16{\pm}0.24$ & $37.33{\pm}0.45$ & $37.67{\pm}0.16$ & $37.35{\pm}0.33$ & $42.85{\pm}0.31$ \\
\midrule
TokenLearner & $37.08{\pm}0.34$ & --- & --- & $37.08{\pm}0.34$ & --- \\
PatchMerger & $26.94{\pm}0.34$ & --- & --- & $26.94{\pm}0.34$ & --- \\
learned top-$N$ selection & $37.20{\pm}0.88$ & --- & --- & $37.20{\pm}0.87$ & --- \\
\bottomrule
\end{tabular}
\caption{\textbf{Permuting the scan, with everything else held fixed.}
LRA-Image at a $10\%$ budget, mean${\pm}$sd over three seeds, sharp width.
Down the top block: a locality-preserving order is a locality-preserving
order -- raster, its transpose and a Hilbert curve agree to within $3.5$
points -- while a random permutation costs the merge $15.5$, dropping it
\emph{into} the band of the reducers below it. Across the bottom block:
those reducers cannot see the scan coordinate, and they return their own
numbers to two decimals in both mean and spread, which rules out the
permutation leaking through the data, the positions, or the harness. A
per-image permutation is gentler than a fixed one for the merge ($-12.7$
against $-15.5$) and actually helps ToMe ($+5.7$): resampling the order
every step acts as augmentation, which is why the fixed draw -- one the
Transformer is free to learn -- is the cleaner intervention.}
\label{tab:order}
\end{table}

Two dose-response sweeps (Table~\ref{tab:order-dose}) say what kind of
dependence it is. The gap is flat across a tenfold range of budgets, $14$
to $16$ points from $5\%$ to $50\%$, so a looser budget does not dilute
it. And it shrinks as the kernel widens past the bin spacing, from $15.0$
at $\alpha{=}1$ to $11.2$ at $\alpha{=}M/N$ and $3.9$ at
$\alpha{\approx}M$, where the operator is close to order-free mean
pooling and the raster arm has fallen into the order-free band -- from
the raster side alone, the way a positional mechanism should, since a bin
that blurs across its own boundaries partitions less.

\begin{table}[h]
\centering
\small
\setlength{\tabcolsep}{6pt}
\begin{tabular}{llccc}
\toprule
sweep & value & raster & permuted (fixed) & gap \\
\midrule
width $\alpha$ (budget $10\%$) & $0.5$ & $48.0{\pm}1.2$ & $33.2{\pm}0.7$ & $14.8$ \\
 & $1$ & $48.0{\pm}1.1$ & $33.0{\pm}0.9$ & $15.0$ \\
 & $2$ & $47.2{\pm}0.5$ & $32.3{\pm}0.9$ & $14.9$ \\
 & $5$ & $44.4{\pm}0.1$ & $31.5{\pm}1.0$ & $12.9$ \\
 & $10=M/N$ & $42.6{\pm}0.2$ & $31.4{\pm}1.2$ & $11.2$ \\
 & $30$ & $41.5{\pm}0.2$ & $31.4{\pm}0.8$ & $10.2$ \\
 & $100$ & $40.5{\pm}0.1$ & $31.9{\pm}0.7$ & $8.6$ \\
 & $1000\approx M$ & $36.5{\pm}0.2$ & $32.7{\pm}0.9$ & $3.9$ \\
\midrule
budget ($\alpha=1$) & $5\%$ & $45.2{\pm}0.3$ & $31.0{\pm}0.7$ & $14.2$ \\
 & $10\%$ & $48.8{\pm}0.8$ & $32.8{\pm}0.6$ & $16.0$ \\
 & $20\%$ & $49.1{\pm}1.0$ & $33.2{\pm}1.1$ & $15.9$ \\
 & $50\%$ & $48.4{\pm}0.9$ & $33.3{\pm}0.5$ & $15.1$ \\
\bottomrule
\end{tabular}
\caption{\textbf{How the order effect responds to width and to budget.}
LRA-Image, \our{}, mean${\pm}$sd over three seeds; every row is the raster
scan against one random permutation of the scan coordinate, fixed for the
run, with everything else as in Table~\ref{tab:order}; the gap is the
difference of means. Across the width it is not flat: sharp kernels
($\alpha\le2$) hold the full gap, which shrinks as $\alpha$ grows toward
$M$, where the raster arm itself falls to the order-free level (the
permuted arm stays between $31$ and $33$ throughout) -- exactly what a
positional mechanism should do. The $10\%,\alpha{=}1$ configuration
appears in both blocks with the same three seeds and returns
$48.0{\pm}1.1$ and $48.8{\pm}0.8$ (permuted $33.0$ and $32.8$): the
difference is run-to-run nondeterminism of the GPU scatter, about the
size of the seed spread, so differences under a point between rows are
not to be read.}
\label{tab:order-dose}
\end{table}

\paragraph{One knob matters, and its natural default is the wrong one.}
Everything else we varied moved the result by less than seed noise; the
bin width did not. Over three seeds, $\alpha{=}0.5$ and $\alpha{=}1$ tie
at $48.0$, $\alpha{=}2$ gives $47.2$, $\alpha{=}5$ gives $44.4$, and
setting $\alpha$ to the mean bin spacing $M/N\approx10$ -- the value the
construction seems to invite, since it makes the kernel as wide as a bin
-- costs five to six points at $42.6{\pm}0.2$, and a kernel as wide as the
sequence ($\alpha\approx M$) another six, to $36.5$
(Table~\ref{tab:order-dose}). Wide bins blur across their own boundaries
and the partition stops being one; sharper than $1$ buys nothing, so the
sharp setting is a plateau, not an edge.

\paragraph{The speed comes from the scan, not from computing something
weaker.} A \emph{dense} implementation of the same operator -- same
barriers, same read-out, coupling materialized -- reaches $48.85\%$ at
$90.1$\,s/epoch against $48.90\%$ at $12.9$\,s/epoch for the matrix-free
form (both single seed, same job): identical accuracy, seven times the
speed. So the linear-time machinery of \S\ref{sec:exp-scaling} is
what makes the reduced model cheaper than the unreduced one, and it shows
up here on a downstream task, not only in a microbenchmark.

\subsection{Planned confirmation experiments}
\label{sub:selection-planned-experiments}

One control remains planned, and two have been carried out. First, the
one-sweep-versus-selection comparison has been repeated at three output
budgets with three seeds each (Table~\ref{tab:cars-budgets}); extending
the multi-seed protocol to the two-sweep and learned-pooling rows remains
future work. Second --- still planned ---
the order-permutation control of \S\ref{sub:order-intervention} -- carried
out on LRA-Image -- will be extended to the Cars and arXiv studies at their
own budgets, with the permutation applied before both scoring and merging
while the backbone, optimization schedule, and output budget remain
unchanged. A third control, equalizing reducer capacity, has been carried
out (Table~\ref{tab:cars-reducer-baselines}, last block): with the read-out
frame frozen at identity the \our{} reducer matches selection's
$\approx52$k parameters and still leads, $75.10\pm0.28$ against
$74.20\pm0.44$ over three shared seeds.

For completeness, the implementation ablation is reported separately in
Appendix~\ref{app:runtime-backend-ablation}: it compares eager PyTorch,
\texttt{torch.compile}, and the CUDA-fused hybrid, and includes forced-dense,
forced-scan, and automatic dispatch. Thus the accuracy tables here answer
whether merging helps, while the runtime appendix answers how the same
operator is evaluated efficiently.

\section{Drop-in Token Reduction in a Frozen VLM: Details}
\label{app:pope}
 
This appendix complements Section~\ref{sec:vlm-pope}. We describe the
pipeline into which score-routed \our{} is inserted, state its general two-width form with properties and cost (Appendix~\ref{app:pope-variant}), and report the sweep over both widths $(\alpha_a,\alpha_r)$, with the released selection pipeline rerun under identical conditions. The POPE benchmark itself is described in Appendix~\ref{app:data-pope}.
 
\subsection{Pipeline}
\label{app:pope-pipeline}
 
\paragraph{Backbone.}
We use Qwen2.5-VL-7B-Instruct \citep{qwen25vl} without any fine-tuning. Qwen2.5-VL encodes an image at its native resolution with a ViT, and a $2\times2$ patch merger then groups neighbouring patches, so the number of visual tokens $M$ passed to the language model depends
on the input resolution. Visual tokens carry multimodal rotary position embeddings (M-RoPE) indexed by their (temporal, height, width) coordinates.
 
\paragraph{VisionSelector.}
We take the publicly released VisionSelector pipeline
\citep{visionselector2025} as-is. It attaches a lightweight learned scorer to the frozen VLM, which assigns an importance score $s_i$ to every visual token $x_i$, $i=1,\dots,M$. The scorer is trained once by the VisionSelector authors and is not retrained by us.
At a budget $b\in\{10\%,30\%\}$, VisionSelector keeps the
$k=\lceil bM\rceil$ highest-scoring tokens, discards the rest, and feeds the kept tokens to the language model at their original positions.

\subsection{Score-routed \our{}: general form, properties, and cost}
\label{app:pope-variant}

Section~\ref{sec:method} presents \our{} with a single kernel width. The
POPE deployment uses the natural two-width generalization, which we state
here explicitly. Let $r_i\ge0$ be the routing masses (normalized to sum to
one; the soft top-$k$ output sums to $k$, which only rescales them), placed
at the raster indices $p_i=i$, and let $\Phi^{\alpha}$ denote the mixture
CDF~\eqref{eq:mix} at kernel width $\alpha$. With $N{=}k$ outputs, the
barriers solve the equal-mass equations at the \emph{routing} width,
$\Phi^{\alpha_r}(b_\ell)=\ell/N$ ($1\le\ell<N$, $b_0=-\infty$,
$b_N=+\infty$), and each interval is read out at the \emph{read-out}
width $\alpha_a$ with its own normalization:
\begin{align}
w_i^\ell&=\frac{r_i\big[\sigma\big(\tfrac{b_\ell-p_i}{\alpha_a}\big)
-\sigma\big(\tfrac{b_{\ell-1}-p_i}{\alpha_a}\big)\big]}
{\Phi^{\alpha_a}(b_\ell)-\Phi^{\alpha_a}(b_{\ell-1})},
\label{eq:score-routed}\\
t_\ell&=\sum_{i=1}^{M}w_i^\ell x_i
=\frac{\Phi^{\alpha_a}_{r\odot x}(b_\ell)-\Phi^{\alpha_a}_{r\odot x}(b_{\ell-1})}
{\Phi^{\alpha_a}(b_\ell)-\Phi^{\alpha_a}(b_{\ell-1})}.\nonumber
\end{align}
The explicit denominator is needed because, for $\alpha_a\neq\alpha_r$,
the read-out kernel does not assign mass $1/N$ to the intervals cut at the
routing width, so the standard formula $N[\Phi_x(b_\ell)-\Phi_x(b_{\ell-1})]$
would no longer be a weighted average.

\paragraph{Properties.} \emph{(i) Exact budget}: exactly $N$ outputs for
any widths. \emph{(ii) Convexity}: $w_i^\ell\ge0$ and $\sum_iw_i^\ell=1$
by construction, so every $t_\ell$ is a convex combination of the inputs.
\emph{(iii) Reduction to the standard operator}: for $\alpha_a=\alpha_r$
the denominator equals $1/N$ for every interval and \eqref{eq:score-routed}
coincides with~\eqref{eq:membership}; in particular the default
$(\alpha_r,\alpha_a)=(1,1)$ used for Table~\ref{tab:vlm-field}
\emph{is} the standard operator. \emph{(iv) Selection limit}: as both
widths vanish, the soft top-$k$ masses harden to the top-$k$ indicator,
each interval contains exactly one kept token, and $t_\ell$ equals that
token. \emph{(v)} The first-moment identity
$N^{-1}\sum_\ell t_\ell=\sum_ir_ix_i$ holds only for $\alpha_a=\alpha_r$;
for other widths the outputs are normalized per interval rather than
mass-conserving.

\paragraph{Cost.} The barriers are the standard equal-mass solve at width
$\alpha_r$ (Algorithm~\ref{alg:barriers}, $\mathcal{O}(M+N)$). Numerator
and denominator of~\eqref{eq:score-routed} are Laplace-CDF scans at width
$\alpha_a$ with weights $r_ix_i$ and $r_i$, queried at the already ordered
barriers -- the construction of Algorithm~\ref{alg:aggregate} with a second
set of prefix/suffix states. The variant therefore has the same
$\mathcal{O}((M+N)d)$ forward and backward cost as the standard operator,
up to a constant factor of about two. The POPE adapter used for our
experiments evaluates~\eqref{eq:score-routed} with a dense reference
implementation (a per-quantile barrier solve and an interval-by-token CDF
tensor, chunked over feature coordinates); with a few hundred to about two
thousand visual tokens per image this cost is negligible next to the 7B
language model, and the scan path is a drop-in replacement. The runtime
study of Section~\ref{sec:exp-scaling} concerns the standard operator.

\subsection{Sensitivity to the routing and read-out widths}
\label{app:pope-sweep}

Table~\ref{tab:vlm-width} sweeps both widths of score-routed \our{} in
the same shared harness and the same F1 metric as
Table~\ref{tab:vlm-field}: the read-out width $\alpha_a$ at fixed
$\alpha_r{=}1$, and the routing width $\alpha_r$ at fixed
$\alpha_a{=}1$, together with the released selection pipeline rerun
under identical conditions (its published F1 is reproduced to within
$0.06$ at both budgets). The POPE-tuned row of
Table~\ref{tab:vlm-field} is the per-budget maximum of this grid and
should be read as a reference ceiling rather than a fair comparison.

Three observations follow. \emph{(i)} Merging beats hard selection
across the moderate-width range: at the $10\%$ budget every
$\alpha_a\in\{0.3,0.7,1.0,1.5\}$ and both routing widths improve over
selection, and only the over-wide $\alpha_a{=}3.0$ falls marginally
below it ($79.56$ vs $79.67$); at $30\%$ the improvement persists for
$\alpha_a\le1.0$ and disappears beyond. The gain therefore does not
hinge on a finely tuned value, but the kernel must not be made
arbitrarily wide. \emph{(ii)} The optimal width shrinks as the budget
grows: at $10\%$ each output stands in for about nine discarded tokens,
so a wider read-out recovers more of the lost evidence; at $30\%$ the
kept positions are dense and a narrow window suffices. \emph{(iii)} The
routing width is a flat knob: halving or doubling it changes F1 by at
most $0.6$.

\paragraph{Mechanism control: per-anchor smoothing (Recipe-B).}
As a control that separates equal-mass routing from mere local smoothing,
we also rerun a per-anchor alternative in the same harness and metric:
each kept position aggregates its neighbourhood under a softmax of width
$\tau$ over tokens (Table~\ref{tab:vlm-recipeb}). At $\tau{=}0.05$ it
reproduces the selection pipeline exactly ($79.67$/$85.11$, the designed
$\tau\!\to\!0$ limit), and its best setting reaches $80.70$/$85.32$ --
below score-routed \our{} at the \emph{fixed} default
($81.31$/$85.50$), let alone its ceiling. Equal-mass barrier routing
therefore contributes beyond what local smoothing around the kept
positions can provide.

\begin{table}[h]
\centering
\small
\setlength{\tabcolsep}{6pt}
\begin{tabular}{lccccc}
\toprule
 & \multicolumn{5}{c}{per-anchor softmax width $\tau$} \\
\cmidrule(lr){2-6}
budget & $0.05$ & $0.3$ & $0.7$ & $1.5$ & $3.0$ \\
\midrule
$10\%$ & $79.67$ & $79.75$ & $80.70$ & $80.58$ & $78.61$ \\
$30\%$ & $85.11$ & $85.32$ & $85.26$ & $84.41$ & $80.92$ \\
\bottomrule
\end{tabular}
\caption{\textbf{Per-anchor smoothing control (Recipe-B)}: POPE F1 in the
shared harness. $\tau{=}0.05$ recovers hard selection exactly; the best
column trails the default score-routed \our{} of
Table~\ref{tab:vlm-width} at both budgets.}
\label{tab:vlm-recipeb}
\end{table}

\paragraph{Uncertainty of the headline differences.}
Decoding is greedy and both arms answer the same $9{,}000$ questions, so
the difference to selection is paired. A nonparametric bootstrap over
questions ($10{,}000$ resamples, percentile intervals) against our rerun
of the released selection gives $\Delta\mathrm{F1}=+1.64$
($95\%$ CI $[+0.88,+2.40]$, $P(\Delta>0)>0.999$) at the $10\%$ budget and
$\Delta\mathrm{F1}=+0.39$ ($[-0.08,+0.87]$, $P(\Delta>0)=0.94$) at
$30\%$: decisive at the aggressive budget, positive but not separated
from zero at the mild one.

\begin{table}[t]
\centering
\small
\caption{\textbf{Width sweep of score-routed \our{}} (POPE F1, frozen
Qwen2.5-VL-7B; the same shared harness and metric as
Table~\ref{tab:vlm-field}). ``select'' reruns the released selection
pipeline under identical conditions and reproduces its published F1
($79.73$/$85.05$) to within $0.06$. The starred column is the
dataset-agnostic default used in Table~\ref{tab:vlm-field}. Decoding is
greedy, so each cell is deterministic.}
\label{tab:vlm-width}
\setlength{\tabcolsep}{5pt}
\begin{tabular}{lcccccccc}
\toprule
 & & \multicolumn{5}{c}{read-out width $\alpha_a$ ($\alpha_r{=}1$)} & \multicolumn{2}{c}{routing width $\alpha_r$ ($\alpha_a{=}1$)} \\
\cmidrule(lr){3-7}\cmidrule(lr){8-9}
budget & select & $0.3$ & $0.7$ & $1.0^{\star}$ & $1.5$ & $3.0$ & $0.5$ & $2.0$ \\
\midrule
\rowcolor{orange!10}
$10\%$ & $79.67$ & $\mathbf{82.01}$ & $81.67$ & $81.31$ & $80.69$ & $79.56$ & $80.97$ & $80.70$ \\
\rowcolor{orange!10}
$30\%$ & $85.11$ & $\mathbf{85.87}$ & $\mathbf{85.87}$ & $85.50$ & $84.73$ & $82.92$ & $85.19$ & $84.97$ \\
\bottomrule
\end{tabular}
\end{table}

\section{Datasets}
\label{app:datasets}
 
We evaluate \our{} on five groups of benchmarks that stress different properties of token merging: (i) gigapixel whole-slide images, where the input is a bag of thousands of tiles and most of the tissue is uninformative for the label; (ii) the Long Range Arena, which probes long-context reasoning on synthetic, textual and image-as-sequence
inputs; (iii) POPE, which measures whether a vision-language model still ``sees'' the objects in an image after its visual tokens are compressed; (iv) fine-grained image classification, where the discriminative evidence is concentrated in small object parts; and (v) long-document topic classification on arXiv, where relevant evidence is spread through the whole text. Table~\ref{tab:datasets}
summarises the statistics. Each dataset is described in detail below.

\begin{table}[h]
  \centering
  \small
  \caption{Summary of the datasets used in our experiments.}
  \label{tab:datasets}
  \begin{tabular}{llrrr}
    \toprule
    Domain & Dataset & \#Classes & Train & Test  \\
    \midrule
    \multirow{2}{*}{Histopathology}
      & TCGA-CRC (COAD+READ) & 2 & \multicolumn{2}{c}{365 slides$^{\dagger}$}  \\
      & TCGA-STAD            & 2 & \multicolumn{2}{c}{303 slides$^{\dagger}$}  \\
    \midrule
    \multirow{5}{*}{LRA}
      & ListOps    & 10 & 96{,}000  & 2{,}000  \\
      & Text       & 2  & 25{,}000  & 25{,}000 \\
      & Retrieval  & 2  & 147{,}086 & 17{,}437 \\
      & Image      & 10 & 45{,}000  & 10{,}000  \\
      & Pathfinder & 2  & 160{,}000 & 20{,}000 \\
    \midrule
    Long documents
      & arXiv topic classification & 11 & 28{,}388 & 2{,}500 \\
    \midrule
    VLM hallucination
      & POPE & 2  & -- & 9{,}000 Q / 500 img   \\
    \midrule
    Fine-grained
      & Stanford Cars   & 196 & 8{,}144 & 8{,}041 \\
    \bottomrule
  \end{tabular}\\[2pt]
  {\footnotesize $^{\dagger}$Patient-level splits; see
   Appendix~\ref{app:data-histo}.}
\end{table}

\subsection{Histopathology}
\label{app:data-histo}
 
\paragraph{Task.}
We consider slide-level prediction of microsatellite instability
(MSI-high, MSI-H, vs.\ microsatellite stable, MSS) from routine
haematoxylin-and-eosin (H\&E) stained tissue. MSI-H is caused by a
deficient DNA mismatch-repair system. It is a clinically actionable
biomarker, because MSI-H tumours respond markedly better to immune
checkpoint inhibitors. It is currently determined with dedicated
immunohistochemistry or PCR-based genetic assays. Predicting it directly
from H\&E morphology has become a standard benchmark for computational
pathology \citep{kather2019msi, wagner2023attentionmil}. A single binary label is available per slide (inherited from the patient), while the morphological evidence, e.g. tumour-infiltrating lymphocytes, mucinous or medullary growth
patterns, poor differentiation occupies only a fraction of the
tissue.

\paragraph{Cohorts.}
We use two cohorts from The Cancer Genome Atlas (TCGA):
\begin{itemize}
  \item \textbf{Colorectal cancer (TCGA-CRC):} colon adenocarcinoma
        (TCGA-COAD) and rectum adenocarcinoma (TCGA-READ), pooled into a
        single cohort of 365 diagnostic slides.
  \item \textbf{Stomach adenocarcinoma (TCGA-STAD):} 303 diagnostic
        slides.
\end{itemize}
All slides are formalin-fixed, paraffin-embedded (FFPE) diagnostic
sections, as opposed to the frozen sections that TCGA also releases. FFPE
slides preserve morphology substantially better and are the ones used in
clinical practice. The slides were collected at many different
contributing institutions and therefore exhibit strong variation in
staining, section thickness and scanner, which makes the benchmark a
realistic test of robustness.

\paragraph{Labels and splits.}
MSI status labels are taken from the TCGA pan-gastrointestinal molecular
analysis of \citet{liu2018gi}, which assigns MSI status from
sequencing-based assays. Tumours classified as MSI-low are grouped with
MSS, following common practice. We follow the published patient-level
splits of \citet{kather2019msi} and \citet{wagner2023attentionmil}, so
that all slides of one patient fall into the same partition and no
patient information leaks between training and evaluation.

\paragraph{Features and protocol.}
Patch features are the released, gated
\texttt{MahmoodLab/UNI2-h-features} (HuggingFace): 1536-dimensional
UNI2-h embeddings in CLAM-style HDF5 files that also carry the patch-center
coordinates; we download these precomputed features and do not re-extract
them, so patching geometry and background filtering are inherited from that
release. The patient-level splits contain $259/52/54$ slides for CRC
(MSI-H fraction $17.4/15.4/16.7\%$) and $205/47/51$ for STAD
($18.5/23.4/15.7\%$); no patient spans two partitions (verified on both
manifests; a few patients contribute more than one slide: $365$ slides from
$359$ CRC patients, $303$ from $285$ STAD patients).

Slide lengths vary
widely: CRC has median $M{=}5{,}922$ patches (range $328$--$31{,}521$),
STAD median $M{=}7{,}329$ (range $180$--$31{,}458$); only $3.6\%$ of CRC
and $8.3\%$ of STAD slides have $M\le1024$, for which the fixed budget
$N{=}1024$ is not a compression; such slides go through the same code
path, with the operator emitting $N$ convex combinations of the $M\le N$
inputs (the $N\ge M$ case is exercised in
Table~\ref{tab:backend-agreement}). Model selection takes the checkpoint with
the best validation AUROC; the test AUROC is reported from that checkpoint.
The random seed controls only the initialization and batch order -- the
splits are fixed files. The MIL baselines process the full bag of $M$
patches, whereas \our{} first reduces it to $N{=}1024$ tokens: the
comparison matches inputs and splits, not downstream sequence length.

\paragraph{Perturbation test of the routing-mass map.}
To check that the learned mass is an attribution and not only a budget
map, we delete $q\in\{5,10,20,50\}\%$ of each test slide's patches --
chosen by highest mass, lowest mass, or uniformly at random (masses always
computed on the unperturbed slide) -- and re-run the full pipeline on the
survivors, for five of the six seed checkpoints per cohort (those with
saved weights). Deleting the highest-mass patches degrades AUROC with
growing $q$ (CRC: $0.912\to0.894\to0.895\to0.868\to0.800$ at
$q{=}0/5/10/20/50\%$; STAD: $0.959\to0.956\to0.941\to0.920\to0.849$),
while deleting the lowest-mass or random patches
leaves it essentially unchanged even at $q{=}50\%$ (CRC $0.902$/$0.895$,
STAD $0.962$/$0.962$). The mass map therefore passes a deletion-style
attribution test: the prediction rests on the patches the map points to.

\paragraph{What the scorer attends to.}
Figure~\ref{fig:scorer-heatmap} complements the quantitative test above
with a qualitative view: for six CRC test slides (three MSI-H, three MSS,
all correctly and confidently classified) it pairs the true-color
thumbnail with the same thumbnail desaturated under the scorer-weight
overlay, where opacity is tied to the weight so that low-weight patches
fade into the background. The weight is visibly non-uniform and
slide-specific, concentrated on compact sub-regions that differ from
slide to slide -- consistent with the scorer doing real,
content-dependent work rather than acting as a near-uniform
pass-through.

\begin{figure}[htbp]
  \centering
  \includegraphics[width=\linewidth]{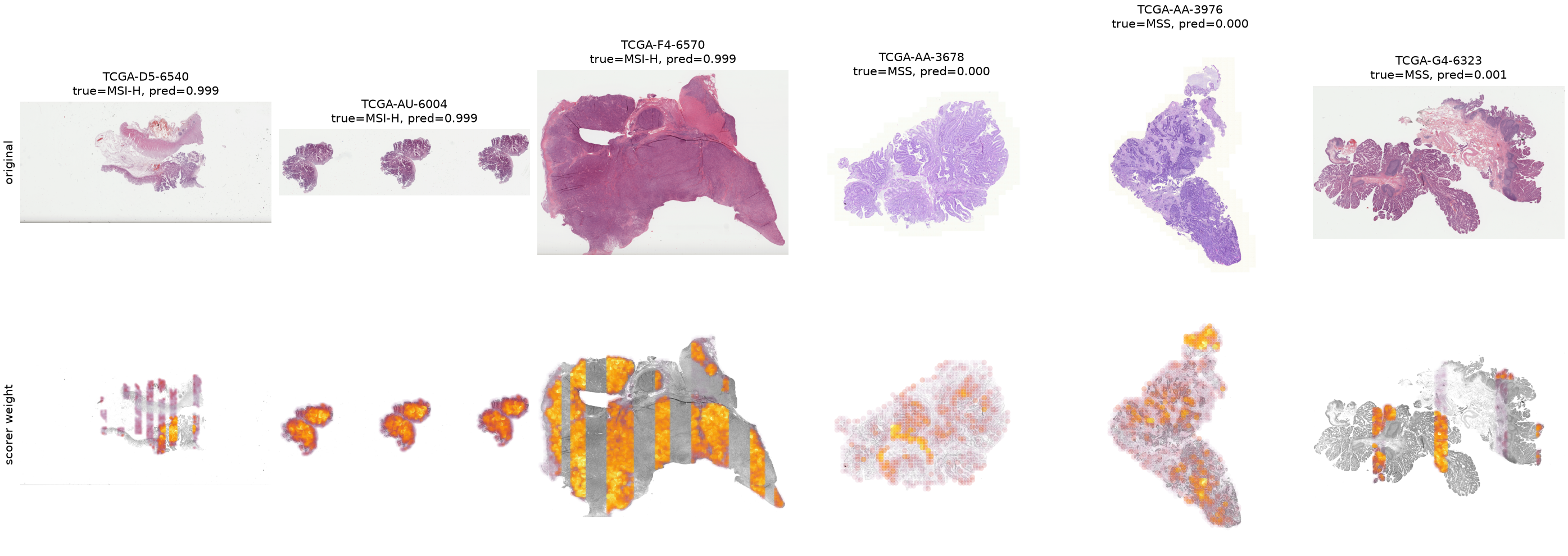}
  \caption{Per-patch scorer weight of \our{} for six TCGA-CRC test slides
  (three MSI-H, three MSS), one column per slide: true-color thumbnail
  with true label and predicted $p(\text{MSI-H})$ (top) vs.\ the
  desaturated scorer-weight overlay (bottom; opacity tied to weight, so
  only the regions the scorer relies on stand out). \emph{The banded
  columns (TCGA-D5-6540, TCGA-F4-6570) are gaps in patch coverage between
  several narrow tissue fragments on those slides -- confirmed from the
  raw patch coordinates -- not a scorer artefact.}}
  \label{fig:scorer-heatmap}
\end{figure}

\subsection{LRA}
\label{app:data-lra}
 
The Long Range Arena (LRA) \citep{tay2021lra} is a suite of
sequence-classification tasks designed to evaluate efficient
Transformers under long contexts (1K--16K tokens). It covers
hierarchical/structured reasoning, byte-level text, document matching,
and image-as-sequence inputs. In all tasks the model sees a flat
sequence of discrete tokens with no built-in inductive bias for the
underlying structure. We use the standard data, pre-processing and splits
released with the benchmark. In our experiments the reducer and a
four-layer Transformer backbone are trained jointly from scratch on every
task with Adam (initial learning rate $10^{-3}$, cosine decay,
$80{,}000$ steps, batch size $32$); all budget-matched baselines share
the backbone, optimizer, and training budget. The shared $10\%$ budget
sets $N$ per task from its input length $M$: ListOps
$2{,}048\!\to\!204$, Text $4{,}096\!\to\!410$, and Image/Pathfinder
$1{,}024\!\to\!102$ tokens, i.e.\ roughly $100\times$ fewer attention
pairs in the reduced backbone.

\paragraph{ListOps.}
Each input is a nested prefix-notation expression \citep{nangia2018listops} over single digits (0--9) with the operators \texttt{MAX}, \texttt{MIN}, \texttt{MED} (median) and \texttt{SM} (sum modulo 10), e.g.\ \texttt{[MAX 4 3 [MIN 2 3 ] 1 0 [MED 1 5 8 9 2 ]]}. The target is the value of the expression, i.e.\ a 10-way classification. Sequences are up to 2{,}048 tokens long and the vocabulary consists of the digits, operators and brackets. Solving the task requires modelling the hierarchical structure over the whole sequence. A single token at the very beginning can change the result. Split: 96{,}000 / 2{,}000 /
2{,}000 (train / val / test).

\paragraph{Text.}
Binary sentiment classification on the IMDb movie-review dataset
\citep{maas2011learning}, operating at the \emph{byte} (character) level
rather than on word tokens, which turns a moderate-length document task
into a long-sequence one and forces the model to compose characters into
words and phrases. Reviews are truncated or padded to 4{,}096 bytes.
Split: 25{,}000 training and 25{,}000 test reviews, balanced between
positive and negative.

\paragraph{Retrieval.}
Byte-level document matching on the ACL Anthology Network (AAN) corpus
\citep{radev2013acl}. Given two scientific papers, the model predicts
whether one cites the other (binary). Each document is encoded
separately at byte level with up to 4{,}000 tokens (8{,}000 tokens per
pair); following the LRA protocol the two document representations are
combined only at the classifier (a two-tower setting), so the task tests
how well a fixed-size representation compresses a long document.
Split: 147{,}086 / 18{,}090 / 17{,}437.
 
\paragraph{Image.}
CIFAR-10 \citep{krizhevsky2009learning} images converted to 8-bit
grayscale and flattened in raster-scan order into sequences of
$32\times32=1{,}024$ pixel intensities (vocabulary of 256), with 10
object classes. Because no 2D structure is given explicitly, the model
must recover spatial relations between pixels that are 32 positions
apart in the sequence. Split: 45{,}000 / 5{,}000 / 10{,}000.

\paragraph{Pathfinder.}
A synthetic visual reasoning task \citep{linsley2018learning}. Each
$32\times32$ grayscale image contains two highlighted endpoints and a
number of dashed curves, and the model must decide whether the two
endpoints are connected by a single path (binary). Distractor paths make
local cues uninformative, so the task requires integrating evidence
along long, curved paths. Images are flattened to 1{,}024 tokens.
Split: 160{,}000 / 20{,}000 / 20{,}000. Path-X, the $128\times128$ (16{,}384-token) version, is not included in our experiments.

\subsection{arXiv Topic Classification}
\label{app:data-arxiv}

The \texttt{ccdv/arxiv-classification} dataset assigns each of $33{,}388$
arXiv papers one of $11$ subject classes. We use the released public split
($28{,}388$ / $2{,}500$ / $2{,}500$ train/validation/test) and truncate
each document to $M{=}1024$ tokens of the frozen
\texttt{allenai/longformer-base-4096} tokenizer. Relevant evidence is
spread through the whole text, which makes the task a deliberate stress
test for order-based merging; the full training protocol and results are
in Appendix~\ref{sub:selection-arxiv}.

\subsection{POPE}
\label{app:data-pope}
 
\paragraph{Task.}
The Polling-based Object Probing Evaluation (POPE)
\citep{li2023pope} measures object hallucination in large
vision-language models (LVLMs), i.e. the tendency to claim that objects
are present in an image when they are not. Instead of parsing free-form
captions, POPE casts evaluation as a set of binary questions of the form
\emph{``Is there a \textless object\textgreater{} in the image?''}, which
the model must answer with ``yes'' or ``no''. This makes it a direct
probe of whether fine-grained visual evidence survives token
compression. An aggressive merging scheme that discards small or
peripheral objects will increase false negatives, whereas one that
blends object and background tokens can increase false positives.

\paragraph{Data.}
We use the standard MSCOCO version of POPE, built on 500 images, each containing at least three annotated object categories. For every image, three positive questions are formed from ground-truth objects and three negative questions from objects absent from the image, giving a balanced set of 3{,}000 questions (50\% ``yes'') per setting. Three settings differ only in how the negative objects are sampled:
\begin{itemize}
  \item \textbf{Random:} negatives are drawn uniformly from the COCO
        object vocabulary among categories not present in the image.
  \item \textbf{Popular:} negatives are the most frequent COCO
        categories not present in the image, testing whether the model
        is biased towards generally common objects.
  \item \textbf{Adversarial:} negatives are the categories that most
        often \emph{co-occur} with the image's ground-truth objects
        (e.g.\ ``fork'' for an image with a ``dining table''), the
        hardest setting, since language priors strongly suggest their
        presence.
\end{itemize}
In total, the benchmark comprises 9{,}000 image--question pairs.

\subsection{Fine-grained Vision: Stanford Cars}
\label{app:data-fgvc}
 
\paragraph{Stanford Cars.}
Stanford Cars \citep{krause2013stanfordcars} contains 16{,}185 images of 196
car classes, where a class is defined at the level of
\emph{make, model and year} (e.g. \emph{2012 Tesla Model S} vs.
\emph{2012 BMW M3 coupe}). The official split has 8{,}144 training and
8{,}041 test images, i.e. roughly a 50--50 split with about 41 training
images per class. Images come from the web and vary strongly in
resolution, viewpoint (front, rear, side, three-quarter views) and
background. Several classes differ only by model year
and are distinguishable only through subtle details, making the dataset
particularly demanding for methods that discard high-frequency
information.
\end{document}